
\documentclass{article}
\usepackage{microtype}
\usepackage{graphicx}

\usepackage{subfigure}
\usepackage{booktabs} 
\usepackage{hyperref}

\usepackage[accepted]{icml2025}
\usepackage{amsmath}
\usepackage{amssymb}
\usepackage{mathtools}
\usepackage{amsthm}
\usepackage[capitalize,noabbrev]{cleveref}
\usepackage{smile}
\usepackage{pifont}
\usepackage{colortbl}

\usepackage{tablefootnote}


\usepackage{tabularx} 
\definecolor{LightCyan}{rgb}{0.58, 0.94, 0.85}
\newcommand{\good}{\texttt{GOOD}}
\newcommand{\as}{\overset{\text{a.s.}}{=}}

\newcommand{\cmark}{\textcolor{green!70!black}{\ding{51}}}
\newcommand{\xmark}{\textcolor{red!70!black}{\ding{55}}}

\newcommand{\la}{\langle}
\newcommand{\ra}{\rangle}
\newcommand{\mf}{\mathring{f}}

\ifdefined\final
\usepackage[disable]{todonotes}
\else
\usepackage[textsize=tiny]{todonotes}
\fi
\setlength{\marginparwidth}{0.6in}


\icmltitlerunning{Global Convergence and Rich Feature Learning in $L$-Layer Infinite-Width Neural Networks under $\mu$P Parametrization}

\begin{document}

\twocolumn[
\icmltitle{Global Convergence and Rich Feature Learning in $L$-Layer Infinite-Width Neural Networks under $\mu$ Parametrization}

\icmlsetsymbol{equal}{*}

\begin{icmlauthorlist}
\icmlauthor{Zixiang Chen}{equal,yyy}
\icmlauthor{Greg Yang}{equal,comp}
\icmlauthor{Qingyue Zhao}{yyy}
\icmlauthor{Quanquan Gu}{yyy}
\end{icmlauthorlist}

\icmlaffiliation{yyy}{Department of Computer Science, University of California, Los Angeles, USA}
\icmlaffiliation{comp}{xAI, USA}

\icmlcorrespondingauthor{Quanquan Gu}{qgu@cs.ucla.edu}

\icmlkeywords{Machine Learning, ICML}

\vskip 0.3in
]



\printAffiliationsAndNotice{\icmlEqualContribution} 

\begin{abstract}
Despite deep neural networks' powerful representation learning capabilities, theoretical understanding of how networks can simultaneously achieve meaningful feature learning and global convergence remains elusive. Existing approaches like the neural tangent kernel (NTK) are limited because features stay close to their initialization in this parametrization, leaving open questions about feature properties during substantial evolution. In this paper, we investigate the training dynamics of infinitely wide, $L$-layer neural networks using the tensor program (TP) framework. Specifically, we show that, when trained with stochastic gradient descent (SGD) under the Maximal Update parametrization ($\mu$P) and mild conditions on the activation function, SGD enables these networks to learn linearly independent features that substantially deviate from their initial values. This rich feature space captures relevant data information and ensures that any convergent point of the training process is a global minimum. Our analysis leverages both the interactions among features across layers and the properties of Gaussian random variables, providing new insights into deep representation learning. We further validate our theoretical findings through experiments on real-world datasets.
\end{abstract}

\section{Introduction}
Deep learning has achieved remarkable success in various machine learning tasks, from image classification \citep{krizhevsky2012imagenet} and speech recognition \citep{hinton2012deep} to game playing \citep{silver2016mastering}. Yet this empirical success has posed a significant theoretical challenge: how can we explain the effectiveness of neural networks given their non-convex optimization landscape and over-parameterized nature? Traditional optimization and learning theory frameworks struggle to provide satisfactory explanations. A breakthrough came with the study of infinite-width neural networks, where the network behavior can be precisely characterized in the limit of infinite width. This theoretical framework has spawned several important approaches to understanding neural networks, with the Neural Tangent Kernel (NTK) emerging as a prominent example.

Under the NTK parametrization (NTP) \citep{jacot2018neural}, neural network training behaves like a linear model: the features learned during training in each layer remain essentially identical to those obtained from random initialization. Consequently, the training process of over-parameterized deep neural networks can be characterized by training linear models with random feature \citep{lee2019wide,arora2019exact,cao2019generalizationsgd,chen2021much}. Since random features are linearly independent, global convergence can be proved for wide neural networks trained using (stochastic) gradient descent (GD/SGD) \citep{du2018gradient,allen2018convergence,du2018gradientdeep,zou2019gradient,zou2019improved}. However, the NTK parametrization has significant limitations, such as its inability to perform feature learning and transfer learning, which involve pretraining and fine-tuning. While NTK theory provides convergence results under infinite width, its inability to explain feature learning motivates us to ask:
\begin{center}
\emph{Can deep neural networks simultaneously learn meaningful features and achieve global convergence?}
\end{center}

In this paper, we show that deep neural networks can achieve both objectives through proper parametrization. While previous approaches like NTK and standard parametrization fail to perform meaningful feature learning, and mean field parametrization suffers from feature collapse in deep networks, we demonstrate that the $\mu$ parametrization \citep{yang2020feature, yang2021tensor, yang2021tuning, yang2019scaling} enables both feature learning and global convergence. Specifically, working with $L$-layer neural networks under $\mu$P scaling, we prove that despite substantial feature evolution during training, the networks maintain linearly independent features in each layer when trained with stochastic gradient descent. As a consequence, if the training converges, it must converge to a global minimum. Our contributions are summarized as follows:
\begin{itemize}[leftmargin=*]
\item We establish that multilayer perceptrons (MLPs) under Maximal Update Parametrization ($\mu$P) learn linearly independent features that capture task-relevant information. The learned features substantially deviate from their initialization, demonstrating true feature learning rather than random feature approximation. This resolves a fundamental challenge in deep learning theory: characterizing feature properties that ensure global convergence while allowing meaningful feature learning.
\item 
Our proof technique analyzes neural network Gaussian processes by exploiting their second-order invariants across adjacent layers. These structural properties persist during training, which allows us to track the evolution of feature correlations. Through a careful inductive argument over network layers and iterations, we establish that when training converges, the linear independence of features ensures convergence to a global minimum. The proof reveals a deep connection between the feature learning dynamics and the structural properties of infinite-width neural networks.

\item Through experiments on classification tasks, we validate our theoretical findings by demonstrating that features maintain linear independence through analysis of covariance matrix properties. Our empirical results demonstrate $\mu$P's unique capability to simultaneously achieve meaningful feature learning while preserving feature richness, as supported by non-vanishing eigenvalues as network widths increase. Through comparative analysis against other parametrization schemes, we show that this behavior robustly persists across different choices of activation functions, illustrating the practical implications of our theoretical results. 
\end{itemize}

\noindent\textbf{Notation.} 
For any positive integer $N$, we use $[N]$ to denote the index set $\{1, \dots, N\}$. We use $\phi:\RR\to\RR$ to denote the activation function. For an $L$-layer network, we use superscript $l\in[L]$ to index layers, with $Z^{h^l}$ and $Z^{x^l}$ denoting pre-activation and post-activation features respectively.

For matrices and vectors, $\widehat{W}_{0}^{L+1}\coloneqq W_{0}^{L+1}n$ denotes a scaled last layer weights. For any matrix $W$ and vector $x$, $\widehat{Z}^{Wx}$ denotes the Gaussian component of $Z^{Wx}$.\footnote{$Z^{Wx} \coloneqq \widehat{Z}^{Wx} + \dot{Z}^{Wx}$, which is detailed in \Cref{sec:fullmup}.} We use $\EE[\cdot]$ to denote expectation.

We consider a filtration $\{\cF_t\}_{t\geq 0}$, where $\cF_t$ is the $\sigma$-algebra generated by all random variables up to time $t$. This gives a sequence of probability spaces $(\Omega, \cF_t, \mathbb{P})$ with $\cF_0 \subseteq \cF_1 \subseteq \ldots \subseteq\cF_T$. An event $\cE \in \cF_T$ occurs almost surely (denoted as a.s.) if $\mathbb{P}(\cE) = 1$. The functions $\mathring{f}$ and $\mathring{\chi}$ denote the infinite-width limits of network outputs and error signals induced by $\mathring{f}$ respectively.

\section{Related Work}
\noindent\textbf{Neural Tangent Kernel Parametrization} 
\citet{jacot2018neural} first introduced the neural tangent kernel (NTK) by studying the training dynamics of multi-layer perceptrons (MLPs) with Lipschitz and smooth activation functions under square loss. Based on NTK, \citet{allen2018convergence,du2018gradientdeep,zou2019gradient,arora2019fine} proved the global convergence of (stochastic) gradient descent for various neural architectures with general activation and loss functions. Standard parametrization (SP) and NTK parametrization (NTP) share the same weight initialization scheme but with different learning schedules. As network width increases, SP requires learning rates to decrease as $O(1/\text{width})$ for all layers to maintain stability \citep{yang2020feature}. When considering the infinite-width limit, neither SP nor NTK parametrization can learn features - the features remain essentially the same as those from random initialization. Both theoretical studies and empirical evidence demonstrated that these parametrizations failed to capture the feature learning behavior observed in practical neural networks \citep{woodworth2020kernel,geiger2020disentangling,bordelon2022self,yang2023spectral}.  \citet{everett2024scaling} shows that SP and NTP can empirically exhibit feature learning when using large per-layer learning rate exponents. However, the existing NTK analysis framework cannot directly analyze feature learning in SP and NTP under this setting, leaving open questions about convergence guarantees during such feature learning.

\noindent\textbf{Mean Field Analysis}
The mean field limit emerged when networks and learning rates were scaled appropriately as width approached infinity, yielding nonlinear parameter evolution \citep{mei2018mean, chizat2018global, rotskoff2018neural, sirignano2018mean}. Early analysis of two-layer networks showed promising results, proving convergence to global optima with explicit convergence rates established through both direct analysis \citep{chen2020generalized} and mean field Langevin dynamics \citep{nitanda2022convex}. Progress extended to three-layer networks with \citet{pham2021global}'s global convergence results. However, studies of deeper architectures revealed significant limitations: for networks deeper than 4 layers, both feature vectors and gradients degenerated to zero vectors \citep{nguyen2020rigorous, fang2021modeling}. While \citet{hajjar2021training} introduced Integrable Parameterization (IP) to address this, networks with more than four layers still started at a stationary point in the infinite-width limit, hard to achieve rich feature learning.

\begin{figure*}[t!]
\centering
\includegraphics[width=0.45\textwidth]{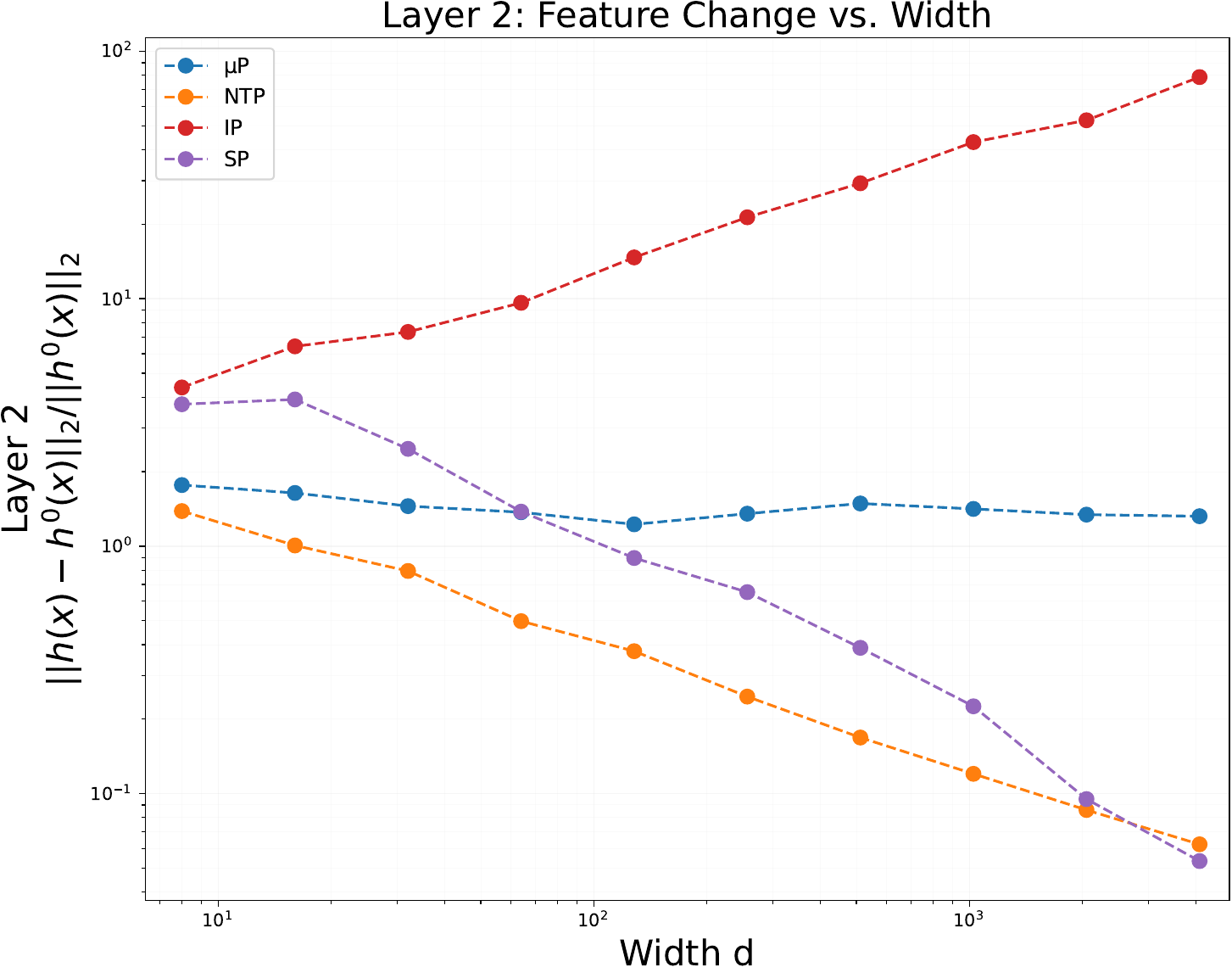}
\hspace{0.05\textwidth}
\includegraphics[width=0.45\textwidth]{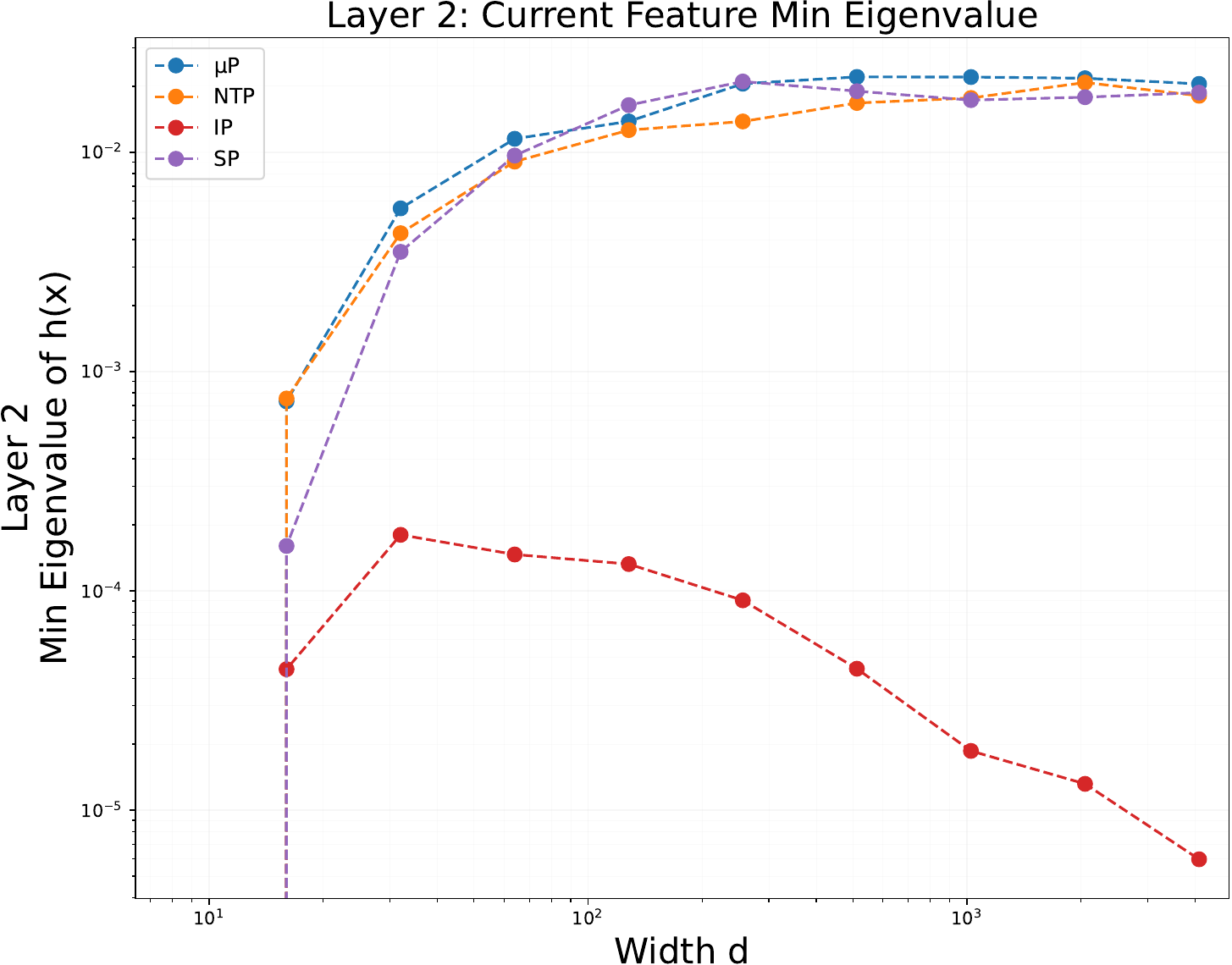}
\caption{Different parametrization schemes exhibit distinct feature learning behaviors as width increases in $3$ hidden-layer MLPs. We train it on the CIFAR-10 dataset and measure pre-activation feature properties in the second hidden layer (Layer 2). Left: Feature change ($\|h(x)-h^0(x)\|_2/\|h^0(x)\|_2$) shows only $\mu$P maintain stable feature representations. Right: Feature diversity measured by the minimum eigenvalue of the feature gram matrix $K_{ij} = \langle h(x_i), h(x_j)\rangle$, where a larger eigenvalue indicates the features span a higher dimensional space. The results reveal that Meanfield parametrization suffers from feature collapse while SP, NTP and $\mu$P preserve rich feature representations. Notably, only $\mu$P achieves both feature learning capability and feature richness. See Appendix A for experimental details.}
\label{fig:features}
\end{figure*}
\noindent\textbf{Tensor Programs}
Tensor Programs (TPs) emerged as a unified framework for understanding infinite-width limits across neural architectures \citep{yang2019wide, yang2020tensor, yang2020tensorb}. This approach generalized previous architecture-specific parametrizations \citep{du2017convolutional,du2019graph,hron2020infinite, alemohammad2020recurrent}. \citet{yang2020feature} characterized two distinct behaviors in infinite-width MLPs: one where initialization dominated the training dynamics (the kernel regime), and another where training data substantially influenced the learned weights (the feature learning regime). Within this framework, the $\mu$ parametrization was identified as enabling maximal feature learning across all layers and architectures \citep{yang2020feature, yang2021tuning, littwin2022adaptive}. The framework has continued to expand with analysis of depth-dependent scaling \citep{yang2023tensor}. Recent work by \citet{yang2023spectral} refined the understanding through spectral analysis and input dimension scaling, which we adopt in our experiments.

Our experimental results reveal distinct feature learning behaviors across different parametrization schemes. As shown in Figure~\ref{fig:features}, Standard Parametrization (SP) keeps features close to initialization (demonstrated by small feature change in the left panel), while Integrable Parametrization (IP) achieves feature learning but suffers from feature collapse (shown by decreasing feature diversity in the right panel). In contrast, $\mu$P achieves both substantial feature change and maintains feature diversity. We summarize these key characteristics in Table~\ref{tab:framework}. Additional experiments with different activation functions, further illustrating these trends, are provided in Appendix~\ref{sec:exp_details}.

\vspace{-15pt}
\begin{table}[h!]
\centering
\caption{Feature Properties Under Different Parametrizations}
\resizebox{\columnwidth}{!}{  
\begin{tabular}{l c c}
\toprule
\textbf{Parametrization} & \textbf{Feature Learning}  &  \textbf{Feature Richness} \\
\midrule
Standard (SP)  & \xmark  & Rich \\
Neural Tangent (NTP)   & \xmark  & Rich \\
Meanfield (IP)\protect\footnotemark  & \cmark &  Low  \\
\rowcolor{LightCyan!40} Maximal Update ($\mu$P) &  \cmark   & Rich \\
\bottomrule
\end{tabular}
}

\label{tab:framework}
\end{table}
\footnotetext{IP (Integrable Parametrization) refers to parametrizations with a $1/n$ scaling factor for all layers except the first one, which leads to absolute convergence of weighted sums in the mean-field limit.}

\section{Preliminaries}\label{sec:pre}
\begin{table*}[t!]
\caption{Initialization variance and learning rate scaling under different parametrization schemes for MLP networks.}
\centering
\resizebox{0.9 \textwidth}{!}{  
\begin{tabular}{c|cc|cc|cc|cc}
\toprule
\multirow{2}{*}{\textbf{Layer}}  & \multicolumn{2}{c|}{\textbf{SP}} & \multicolumn{2}{c|}{\textbf{NTP}} & \multicolumn{2}{c|}{\textbf{IP}}& \multicolumn{2}{c}{\textbf{$\mu$P}} \\

\cmidrule{2-9}
& Init. Var. & LR & Init. Var. & LR & Init. Var. & LR  & Init. Var. & LR \\
\midrule
Input ($W^1$) & $1$ & $\eta\cdot n^{-1}$ & $1$ & $\eta$ & $1$ & $\eta\cdot n$  & $1$ & $\eta\cdot n$  \\
Hidden ($W^l$) & $n^{-1}$ & $\eta\cdot n^{-1}$ & $n^{-1}$ & $\eta\cdot n^{-1}$ & $n^{-2}$ & $\eta$  & $n^{-1}$ & $\eta$  \\
Output ($W^{L+1}$) & $n^{-1}$ & $\eta\cdot n^{-1}$   & $n^{-1}$ & $\eta\cdot n^{-1}$ & $n^{-2}$ & $\eta\cdot n^{-1}$ & $n^{-2}$ & $\eta\cdot n^{-1}$ \\
\bottomrule
\end{tabular}
}  
\label{tab:param-comparison}
\end{table*}

Different parametrization schemes for MLPs are shown in \Cref{tab:param-comparison} \footnote{Init. Var. denotes initialization variance, LR denotes learning rate scaling. $\eta$ is the base learning rate and $n$ is the layer width. For notational simplicity, we omit the constant in the table.}.  Given a general MLP with $L$ hidden layers specified by weight matrices $W^1\in\RR^{n\times d}$, $\{W^l\}_{l=2}^L\in\RR^{n\times n}$, $W^{L+1}\in\RR^{n}$, and activation $\phi:\RR\to\RR$, the network computation is formally defined as
\begin{align}\label{eq:L-hidden}
h^1 &= W^{1}\xi\in\RR^n, \notag\\
x^l &= \phi(h^l)\in\RR^n, \notag \\
h^{l+1} &= W^{l+1}x^l\in\RR^n, \notag \\
f(\xi) &= W^{L+1}x^L \in \RR,    
\end{align}
where $L > 1$ is any positive integer and $l\in\{1,\dots, L-1\}$. Among these schemes, the \emph{Maximal Update Parametrization} ($\mu$P) shown in Table \ref{tab:param-comparison} achieves maximal parameter updates at initialization. As $n \rightarrow \infty$, we can consider the following infinite-width feature learning process: $f_{t}(\xi) \overset{a.s.}{\rightarrow} \mf_{t}(\xi)$ \citep[Theorem~6.4]{yang2020feature}. The neural network is assumed to be trained using a differentiable loss $\cL$ by stochastic gradient descent, where the $s$-th sampled batch is denoted by $\{(\xi_i, y_i)\}_{i\in\cB{s}} \subseteq S$ where $\cB_s$ is the index set and $S$ is the training dataset. For simplicity, we present the full-batch gradient descent result in the main paper, i.e., $\cB_{s} = |S| = [m]$.

\paragraph{Represent Hidden States via $Z$ Random Variables:} 
Following \citet{yang2020feature}, we represent network's hidden states using $Z$ random variables. This representation generalizes the spirit of two-layer mean field analysis: even with multiple hidden layers $(L \geq 2)$, the entries of preactivation $h$ and activation vectors $x$ in \eqref{eq:L-hidden} become approximately i.i.d. as width $n$ approaches infinity. This allows us to characterize their asymptotic behavior using scalar random variables that reflect their elementwise distributions.

Specifically, for a vector $x\in\RR^n$, we track it using $Z^x$, where $x$'s entries behave like i.i.d. copies of $Z^x$. When $x$ is properly scaled such that $\norm{x}^2_2 = \Theta(n)$ (i.e., its typical magnitude is independent of $n$), then $Z^x$ becomes independent of $n$. For any two such normalized vectors $x, y\in\RR^n$, their corresponding random variables $Z^x$ and $Z^y$ are correlated via $\lim_{n\to\infty}x^\top y/n=\EE Z^xZ^y$. Our goal is to characterize these $Z$ in \eqref{eq:L-hidden} throughout the training process.

\begin{definition}\label{def:yang}[\citealt{yang2020feature}]
During training, we define the error signal $\mathring{\chi}_{t, i}$ at time step $t$ for the $i$-th sample. When training with SGD to minimize the loss function $\cL$, this error signal is computed as $\mathring{\chi}_{t, i} = \cL^\prime(\mf_t, y_i)\ind\{i\in\cB_t\}$, where $\mf_t$ is the model output at time $t$, $(\xi_i, y_i)$ is the $i$-th training sample pair, and $\cB_t$ denotes the mini-batch at time step $t$. The indicator function $\ind\{\cdot\}$ ensures that the error signal is only computed for samples in the current mini-batch.

This error signal captures how much the model's prediction deviates from the true label for each sample in the current mini-batch, and serves as the driving force for parameter updates during SGD training. For instance, in the case of mean squared error loss, the error signal takes the form $\mathring{\chi}_{t, i} = 2(\mf_t(\xi_i) - y_i)\ind\{i\in\cB_t\}$.  Having defined the error signal, we now describe how the Z-variables characterize the network's computation in the infinite-width limit $\mathring{f}_t(\xi)$. The forward pass tracks how network features propagate through layers, while the backward pass characterizes gradient flow. For clarity of presentation, we next introduce a simplified version of $\mathring{f}$ that includes the key properties needed for our theoretical analysis. The complete derivation and technical details can be found in Appendix~\ref{sec:fullmup}. 

\noindent\textbf{Forward Pass}
\begin{enumerate}[leftmargin=*]
\item For $z\in\{x^{l},h^{l}\}_{l}$, we have $Z^{z_{t}(\xi)}=Z^{z_{0}(\xi)}+Z^{\delta z_{1}(\xi)}+\cdots+ Z^{\delta z_{t}(\xi)}$ where
\begin{enumerate}
   \item for $l\in[L]$, $Z^{\delta x^{l}_{t}(\xi)}=\phi(Z^{h^{l}_{t}(\xi)})-\phi(Z^{h^{l}_{t-1}(\xi)})$,
\item for $l=1$, we have
\begin{align*}
Z^{\delta h^{l}_{t}(\xi)}=-\sum_{i \in [m]}\eta\mathring{\chi}_{t-1, i}\xi_{i}^{\top}\xi Z^{dh^{l}_{t-1}(\xi_i)}, 
\end{align*}
for $2 \leq l \leq L$, we have 
\begin{align}
Z^{\delta h^{l}_{t}(\xi)}&= \hat{Z}^{W^{l}_{0}\delta x^{l-1}_{t}(\xi)} + F_{t}(\xi),\label{eq:hforwarddelta}
\end{align}
where $F_t$ is a function that is determined by the random variable $\{Z^{dh_{s}(\xi_i)}\}_{i \in [m], s\in [t-1]}$ (see Appendix~\ref{sec:fullmup} for detail), and $ \hat{Z}^{W^l_{0}\delta x^{l-1}_{t}(\xi)}$ are zero centered jointly Gaussian with covariance matrix 
\begin{align*}
\hspace{-2em}
\text{Cov}(\hat{Z}^{W^l_{0}\delta x^{l-1}_{t}(\xi)}, \hat{Z}^{W^l_{0}\delta x^{l-1}_{s}(\bzeta)})= \EE[Z^{\delta x^{l-1}_{t}(\xi)}Z^{\delta x^{l-1}_{s}(\bzeta)}].  
\end{align*}
\end{enumerate}

\item For last layer weight, we have $Z^{\widehat{W}_{t}^{L+1}} = Z^{\widehat{W}_{0}^{L+1}} + Z^{\delta W_{1}^{L+1}} +\cdots+ Z^{\delta W_{t}^{L+1}}$ where 
\begin{align}
Z^{\delta W_{t}^{L+1}}= -\eta\sum_{i\in[m]}\mathring{\chi}_{t-1,i}Z^{x_{t-1}^{L}(\xi_i)}.\label{eq:ZhatW}
\end{align}
\item The output deltas have limits $\mathring{f}_{t}(\xi)=\delta\mathring{f}_{1}(\xi)+\cdots+\delta\mathring{f}_{t}(\xi)$ where 
\begin{align}
\delta\mathring{f}_{t}(\xi)= \EE Z^{\delta W_{t}^{L+1}}Z^{x_{t}^{L}(\xi)} + \EE Z^{\widehat{W}_{t-1}^{L+1}}Z^{\delta x_{t}^{L}(\xi)}.\label{eq:delflimit}
\end{align}
\end{enumerate}
\noindent\textbf{Backward Pass}
\begin{enumerate}[leftmargin=*]
\item For gradients:
\begin{align}
Z^{dx_{t}^{L}(\xi)} & =Z^{\widehat{W}_{t}^{L+1}} \label{eq:grad_last}\\
Z^{dh_{t}^{l}(\xi)} & =Z^{dx_{t}^{l}(\xi)}\phi'(Z^{h_{t}^{l}(\xi)})\label{eq:gradmid0}\\
Z^{dx_{t}^{l-1}(\xi)} & =\hat Z^{ W_{0}^{l\top}dh_{t}^{l}(\xi)} + G_{t}(\xi) \label{eq:grad_mid}
\end{align}
where $G_t$ is function that is determined by the random variable $\{Z^{x_{s}^{l-1}(\xi_i)}\}_{i \in [m], s\in [t-1]}$ (see Appendix~\ref{sec:fullmup} for detail), and 
where $\{\hat Z^{W_0^{l\top} dh_{t}^l(\xi)}\}_{\xi,t}$ are zero centered jointly Gaussian with covariance matrix 
\begin{align*}
\text{Cov}(\hat Z^{W_0^{l\top} dh_{t}^l(\xi)}, \hat Z^{W_0^{l\top} dh_{s}^l(\bzeta)}) = \EE[Z^{dh_{t}^l(\xi)}Z^{dh_{s}^l(\bzeta)}].   
\end{align*}
\end{enumerate}
\end{definition}
\begin{remark}
The error signal generalizes to different optimization objectives. For binary classification problems, it can be expressed as $\mathring{\chi}_{t, i} = -y_{i}/\big(1 + \exp(y_{i}\cdot\mathring{f}_{t}(\xi_{i}))\big)$.\footnote{$\cL$ is only required to be continuously differentiable with respect to its first argument \citep{yang2020feature}, which we omit in subsequent presentations.}
\end{remark}

\section{Main Results}\label{sec:mainresults}
In this section, we present our main theoretical results, which rely on the following assumptions regarding the training data and activation function. 
Specifically, we will first state a mild geometric condition on the inputs, and then discuss the regularity requirements on the activation function.

\begin{assumption}\label{assumption:OrthogonalSamples}
Consider input vectors $\xi$ drawn from the training data set $S \subseteq \mathbb{R}^d$ satisfying that for any three different points $\xi_i, \xi_j, \xi_k \in S$, the following property holds,
\begin{align*}
|\la \xi_{i}, \xi_{j}\ra |\not= |\la \xi_{i}, \xi_{k}\ra|, \quad |\la \xi_{i}, \xi_{j}\ra | \not = 0, \forall i\neq j.    
\end{align*}
\end{assumption}
Assumption~\ref{assumption:OrthogonalSamples} rules out the possibility of identical or zero inner products among different data points, which could otherwise lead to degenerate analyses. Although it may appear restrictive, it holds with probability $1$ if the samples are drawn from any continuous distribution (e.g., Gaussian). Indeed, the set of points violating the above requirement—such as those with exactly matching inner products—has Lebesgue measure zero. In practice, minor random perturbations to discrete data can also ensure the condition is satisfied.

\begin{definition}[\good{} Function]\label{definition:good}
A function $\phi: \mathbb{R} \to \mathbb{R}$ is called \good{} if it prevents degeneracy in neural networks by ensuring non-trivial compositions. Specifically, for any finite set of parameters $\{a_{i}\}, \{b_{i}\}, \{c_i\}$ satisfying $a_{k}b_{k} \not = 0, \exists k$ and $|b_{i}| \not= |b_{j}|, \forall i\not= j$ we have that the composite mapping 
\begin{align*}
f(x) = \sum_{i=1}^n a_{i}\phi(b_{i}x + c_i), \quad x \in \mathbb{R}  
\end{align*}
is not a constant function. Moreover, for any real numbers $r_1, r_2$, the function $(r_1 + \phi(x))(r_2+\phi'(x))$ is not almost everywhere constant.
\end{definition}

We next introduce an assumption on the activation function that ensures it is both sufficiently smooth and \good{}:
\begin{assumption}\label{assumption:good+analytical}
We assume that the activation function $\phi$ satisfies the following properties.
\begin{enumerate}[leftmargin=*]
    \item $\phi$ is twice continuously differentiable. 
    \item $\phi'$ and $\phi''$ are bounded. 
    \item $\phi$ is a \good{} function. 
    \item $\{x \in \mathbb{R}: \phi(x) = y\}$ and $\{x \in \mathbb{R}: \phi'(x) = y\}$ are countable for all $y \in \mathbb{R}$. 
\end{enumerate}
\end{assumption}

\begin{remark}
Assumption~\ref{assumption:good+analytical} imposes regularity and smoothness conditions on the activation function, ensuring that $\phi'$ is pseudo-Lipschitz\footnote{See \citet[Definition~E.3]{yang2020feature} for the definition of pseudo-Lipschitz functions.}, a requirement for \citet[Theorem~7.4]{yang2020feature}. These conditions are met by many commonly used activation functions, including the sigmoid function $\sigma(x) = 1/\big(1+\exp(-x)\big)$ and hyperbolic tangent ($\tanh$), which is a rescaled version of sigmoid. 

Modern activation functions such as the SiLU (Sigmoid Linear Unit), defined as $\text{SiLU}(x) = x \cdot \sigma(x)$ \citep{hendrycks2016gaussian}, also satisfy these assumptions. SiLU has been widely adopted in practice, including in several state-of-the-art open-source foundation models \citep{touvron2023llama, touvron2023llama2}. A detailed discussion of activation functions that meet these criteria is provided in Appendix~\ref{sec:good-activation}.
\end{remark}

With these assumptions in place, we can now state our main theoretical results regarding feature non-degeneracy and convergence. 
In particular, the following theorem establishes that in wide neural networks, feature representations evolve while maintaining their diversity and avoiding collapse throughout training.
\begin{theorem}\label{thm:degenrate}
Consider an infinite-width $L$-layer MLP trained with gradient descent. Under Assumptions \ref{assumption:OrthogonalSamples} and \ref{assumption:good+analytical}, the features in each layer are non-degenerate at any time $t$ during training. Specifically, for each layer $l \in [L]$:
\begin{enumerate}[leftmargin=*]
\item The pre-activation features $\{Z^{h_t^l(\xi)}\}_{\xi \in S}$ are linearly independent.
\item The post-activation features $\{Z^{x_t^l(\xi)}\}_{\xi \in S}$ are linearly independent.
\end{enumerate}
\end{theorem}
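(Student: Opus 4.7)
\smallskip
\noindent\textbf{Proof plan.}
The plan is to prove both statements by a nested induction: an outer induction on the training step $t\geq 0$ and, within each $t$, an inner induction on the layer index $l\in[L]$. The invariant carried at $(t,l)$ is that both $\{Z^{h_t^l(\xi)}\}_{\xi\in S}$ and $\{Z^{x_t^l(\xi)}\}_{\xi\in S}$ are linearly independent in $L^2$. Inside each inductive slot I would establish two transfer principles. The first is a \emph{cross-layer} step: linear independence of $\{Z^{x_t^{l-1}(\xi)}\}$ implies that of $\{Z^{h_t^l(\xi)}\}$, using the Gaussian structure of \eqref{eq:hforwarddelta}. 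The second is a \emph{within-layer} step: linear independence of $\{Z^{h_t^l(\xi)}\}$ implies that of $\{Z^{x_t^l(\xi)}\}=\{\phi(Z^{h_t^l(\xi)})\}$, using the \good{} property from Definition~\ref{definition:good}.

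\smallskip
\noindent\textbf{Transfers and base case.}
For the cross-layer step, summing the Gaussian components of $Z^{h_0^l(\xi)}$ and of the increments $Z^{\delta h_s^l(\xi)}$ in \eqref{eq:hforwarddelta} over $s\leq t$ identifies the Gaussian component of $Z^{h_t^l(\xi)}$ as $\hat Z^{W_0^l x_t^{l-1}(\xi)}$, whose covariance equals the Gram $\EE[Z^{x_t^{l-1}(\xi)} Z^{x_t^{l-1}(\zeta)}]$. This Gram is positive definite by the inner inductive hypothesis at layer $l-1$, so this Gaussian block is jointly non-degenerate; conditioning on a $\sigma$-algebra that renders the residual $F_t(\xi)$ deterministic then rules out any nontrivial linear combination of $\{Z^{h_t^l(\xi)}\}$. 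For the within-layer step, if $\sum_\xi c_\xi \phi(Z^{h_t^l(\xi)}) \as 0$, decomposing the Gaussian span of $\{Z^{h_t^l(\xi)}\}$ in an orthonormal basis and freezing all but one coordinate reduces the identity to $\sum_j c_j \phi(b_j X + c_j')\equiv \mathrm{const}$ for a standard scalar Gaussian $X$; provided the $|b_j|$ are pairwise distinct, Definition~\ref{definition:good} forces every $c_j=0$. The base case $(t,l)=(0,1)$ is the Gaussian $Z^{h_0^1(\xi)}=Z^{W^1\xi}$ with covariance $\la\xi_i,\xi_j\ra$, which is non-degenerate under Assumption~\ref{assumption:OrthogonalSamples}.

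\smallskip
\noindent\textbf{Time step and main obstacle.}
For the outer induction, the fresh contribution at time $t+1$ is $Z^{\delta h_{t+1}^l(\xi)}$, whose Gaussian component $\hat Z^{W_0^l \delta x_{t+1}^{l-1}(\xi)}$ enlarges the earlier Gaussian block by a non-degenerate direction provided the previous-layer update is non-degenerate. At $l=1$, the update writes $Z^{\delta h_{t+1}^1(\xi)}$ as $-\eta\sum_i \mathring{\chi}_{t,i}\la\xi_i,\xi\ra Z^{dh_t^1(\xi_i)}$; the distinct nonzero inner products from Assumption~\ref{assumption:OrthogonalSamples}, together with the linear independence of $\{Z^{dh_t^1(\xi_i)}\}$ established in tandem from the backward-pass relations \eqref{eq:grad_last}--\eqref{eq:grad_mid}, make this combination nontrivial. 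For $l\geq 2$ the analogous non-degeneracy follows from the same backward-pass formulas, and the cross-layer and within-layer arguments then re-run at time $t+1$. The main obstacle is maintaining the distinctness condition $|b_i|\neq |b_j|$ required by \good{}: one must show that the distinctness of absolute inner products from Assumption~\ref{assumption:OrthogonalSamples} propagates through both the $\phi$-correlation across layers and the SGD updates across time, which is precisely the role of the ``second-order invariants across adjacent layers'' emphasized in the abstract. A secondary subtlety is that $F_t(\xi)$ depends on $W_0^l$ through the backward-pass Gaussian $\hat Z^{W_0^{l\top} dh_t^l(\xi)}$, so the conditioning $\sigma$-algebra must be chosen to decouple the forward Gaussian from this backward coupling while preserving enough randomness to invoke Definition~\ref{definition:good}.
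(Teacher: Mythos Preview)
Your plan has the right shape but carries the wrong induction hypothesis, and the obstacle you flag is a red herring while the real one goes unaddressed.

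The cross-layer step does not close as written. You correctly identify the Gaussian part of $Z^{h_t^l(\xi)}$ as $\hat Z^{W_0^l x_t^{l-1}(\xi)}$, with unconditional covariance equal to the Gram of $\{Z^{x_t^{l-1}(\xi)}\}$. But the residual $\sum_{s\le t}F_s(\xi)$ depends on $\{Z^{dh_r^l(\xi_i)}\}_{r<t}$, which through $\phi'(Z^{h_r^l(\xi_i)})$ depend on the \emph{past} forward Gaussians $\hat Z^{W_0^l\delta x_s^{l-1}(\xi_i)}$, $s<t$. Any $\sigma$-algebra that renders the residual deterministic therefore also freezes those past pieces, so only the fresh increment $\hat Z^{W_0^l\delta x_t^{l-1}(\xi)}$ survives, and its conditional non-degeneracy over $\xi$ is \emph{not} implied by your fixed-time hypothesis. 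The paper repairs this by strengthening the invariant to the joint \emph{space--time} families $\{\hat Z^{W_0^l\delta x_s^{l-1}(\xi_i)}\}_{i,s\le t}$ and $\{\hat Z^{W_0^{l\top}dh_s^l(\xi_i)}\}_{i,s\le t}$ being non-degenerate; then conditioning on $\cF_{t-1}$ leaves the time-$t$ slice conditionally non-degenerate by construction.

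On the ``main obstacle'': the distinct-$|b_i|$ condition never needs to be propagated. For $l\ge 3$ the paper avoids it entirely by conditioning on history \emph{and} on all but one of the fresh forward Gaussians $\hat Z^{W_0^{l-1}\delta x_{t+1}^{l-2}(\xi_j)}$, $j\ne k$; what remains is a single non-degenerate scalar Gaussian inside one $\phi$, and the countable-level-set clause of Assumption~\ref{assumption:good+analytical} (not the \good{} combination) forces $\lambda_{k,t+1}=0$. The \good{} property with distinct $|b_i|$ is used only at layer $l=2$, and there the source of fresh randomness is the \emph{backward} Gaussian $\hat Z^{W_0^{2\top}dh_t^2(\xi_k)}$: plugging \eqref{eq:grad_mid} into the layer-$1$ update gives $Z^{h_t^1(\xi_i)}=\Delta+c\,(\xi_k^\top\xi_i)\cdot(\text{fresh Gaussian})+\cdots$, so the slopes are literally the input inner products $\xi_k^\top\xi_i$ and Assumption~\ref{assumption:OrthogonalSamples} supplies $|b_i|\ne|b_j|$ directly---no propagation through $\phi$ or across time is needed. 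Separately, your base case is incorrect as stated: Assumption~\ref{assumption:OrthogonalSamples} only asserts distinct nonzero inner products, not that the input Gram $(\la\xi_i,\xi_j\ra)$ is positive definite; three vectors in $\RR^2$ can satisfy the assumption while being linearly dependent.
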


This non-degeneracy property has important implications for the convergence behavior of the model. In particular, it allows us to characterize the state of the model at convergence, as described in the following corollary.

\begin{corollary}\label{cor:convergence}
Consider an infinite-width $L$-layer MLP under the conditions of Theorem~\ref{thm:degenrate}. If the model converges at time $T$, meaning that the model weights remain unchanged for all $t \geq T$, then the error signal vanishes for all subsequent mini-batches:
\begin{align*}
\mathring{\chi}_{T,i} = 0, \quad \forall i \in \bigcup_{t \geq T}\mathcal{B}_t,
\end{align*}
where $\mathcal{B}_t$ denotes the mini-batch at time $t$.
\end{corollary}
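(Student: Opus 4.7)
The plan is to exploit the explicit form of the last-layer weight update \eqref{eq:ZhatW} together with the linear independence of the post-activation features guaranteed by \Cref{thm:degenrate}. The intuition is very simple: if the weights have stopped moving, then in particular the increment to $\widehat{W}^{L+1}$ must be zero, and this increment is a linear combination of $\{Z^{x_t^L(\xi_i)}\}_i$ weighted by the error signals; linear independence then forces the error signals to vanish.

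Concretely, I would first observe that the hypothesis---the weights are stationary from time $T$ onward---implies $Z^{\delta W_{t+1}^{L+1}} = 0$ for every $t \geq T$, and also that the infinite-width output is frozen, i.e.\ $\mathring{f}_t = \mathring{f}_T$ for all $t \geq T$. Combining this with \eqref{eq:ZhatW} yields, for every $t \geq T$,
\begin{align*}
0 \;=\; Z^{\delta W_{t+1}^{L+1}} \;=\; -\eta \sum_{i \in [m]} \mathring{\chi}_{t,i}\, Z^{x_t^L(\xi_i)} \;=\; -\eta \sum_{i \in \mathcal{B}_t} \mathring{\chi}_{t,i}\, Z^{x_t^L(\xi_i)},
\end{align*}
where the last equality uses that $\mathring{\chi}_{t,i}=0$ for $i \notin \mathcal{B}_t$ by definition.

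Next I would invoke \Cref{thm:degenrate} at layer $L$: the random variables $\{Z^{x_t^L(\xi_i)}\}_{i \in S}$ are linearly independent, and in particular so is the sub-collection indexed by $\mathcal{B}_t \subseteq S$. Linear independence turns the display above into the coefficient identity $\mathring{\chi}_{t,i}=0$ for every $i \in \mathcal{B}_t$ and every $t \geq T$. Since $\mathring{f}_t \equiv \mathring{f}_T$ on $\{t \geq T\}$, we have $\mathring{\chi}_{t,i} = \mathcal{L}'(\mathring{f}_T, y_i)\ind\{i \in \mathcal{B}_t\}$, so this is equivalent to $\mathcal{L}'(\mathring{f}_T, y_i) = 0$ for every $i \in \bigcup_{t \geq T}\mathcal{B}_t$, which is exactly the claim.

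There is essentially no hard step: \Cref{thm:degenrate} does all the heavy lifting, and the only thing to be mildly careful about is making sure one reads ``weights unchanged'' in the infinite-width ($Z$-variable) sense so that the increment $Z^{\delta W_{t+1}^{L+1}}$ really is zero, and not merely that the finite-width matrices happen to coincide. Once that is noted, the argument is a one-line application of linear independence to the last-layer update rule; the other layer updates are not needed, but the same reasoning could be run with \eqref{eq:hforwarddelta} and the backward-pass formulas if one wanted additional redundancy.
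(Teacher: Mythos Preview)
Your proposal is correct and takes essentially the same approach as the paper: both use \Cref{thm:degenrate} to argue that a nonzero error signal would force a nontrivial parameter update, contradicting stationarity. Your version is in fact more explicit than the paper's---the paper simply asserts that ``the non-degenerate trajectory ensures that a nonzero error signal would necessitate further parameter updates,'' whereas you pin this down to the last-layer increment \eqref{eq:ZhatW} and the linear independence of $\{Z^{x_t^L(\xi_i)}\}_i$, which makes the mechanism transparent.
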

This corollary establishes that feature non-degeneracy forces convergence to occur only at critical points where the error signal vanishes. More precisely, when the network converges, the error signals must vanish across all samples in subsequent mini-batches, implying convergence to a global minimum of the training objective. This is a consequence of the feature non-degeneracy established in Theorem~\ref{thm:degenrate}, as non-degenerate features ensure that weight updates can only stop when the network has effectively minimized the error signals.

\section{Key Techniques and Analysis}
In this section, we first identify the key technical challenges in establishing our main results, and then present the techniques and insights to address them. We begin by discussing two fundamental challenges: the tension between feature evolution and Structural stability, and the intricate coupling across network layers. We then develop a systematic framework based on Gaussian processes to overcome these challenges. The complete proof is presented in Appendix~\ref{app:mainproof}.
\subsection{Technical Challenges}
Establishing global convergence while allowing meaningful feature learning presents two fundamental technical challenges that must be addressed simultaneously:
\begin{enumerate}[leftmargin=*]
\item \textbf{Feature Evolution vs. Structural Stability:} In contrast to the NTK parameterization (where features stay near their initialization), $\mu$P enables features to evolve substantially during training. Specifically, for any feature $z\in\{x^{l},h^{l}\}_{l}$ in the Forward Pass (a) of Section~\ref{sec:pre}, we have:
\begin{align*}
Z^{z_{t}(\xi)}=Z^{z_{0}(\xi)}+\underbrace{Z^{\delta z_{1}(\xi)}+\cdots + Z^{\delta z_{t}(\xi)}}_{\text{feature learning term}}.  
\end{align*} 
The presence of the feature learning term makes it challenging to track and characterize features' properties throughout optimization. This contrasts sharply with the setting under NTK parametrization, where $Z^{z_{t}(\xi)}$ stays equal to its initialization $Z^{z_{0}(\xi)}$ \citep{yang2020feature} - a mathematically simpler but limited case where the network behavior is fully determined by the initial kernel.

\item \textbf{Cross-Layer Coupling:} In deep networks, changes in one layer's features affect both earlier and later layers through forward and backward propagation. For forward propagation in layer $l$ and backward propagation in layer $l+1$, we have by \eqref{eq:hforwarddelta} and \eqref{eq:grad_mid}:
\begin{align}
Z^{\delta h^{l}_{t}(\xi)}&= \hat{Z}^{W^{l}_{0}\delta x^{l-1}_{t}(\xi)} + F_{t}(\xi) \label{eq:forward_coupling} \\
Z^{dx_{s}^{l}(\xi)}&=\hat Z^{ W_{0}^{l+1\top}dh_{s}^{l+1}(\xi)} + G_{s}(\xi) \label{eq:backward_coupling}, 
\end{align}
where $F_t$ and $G_s$ capture the historical dependencies through previous features $\{Z^{dh^l_{s}(\xi_i)}\}_{i \in [m], s\in [t-1]}$ and gradients $\{Z^{x_{s}^{l-1}(\xi_i)}\}_{i \in [m], s\in [t-1]}$ respectively. This intricate coupling between forward and backward passes makes it challenging to ensure that features remain well-behaved as they propagate through the network.
\end{enumerate}

Our key insight in addressing these challenges lies in analyzing structural invariants preserved by the induced Gaussian processes during training. While features evolve substantially, we find that certain second-order properties—specifically, non-degeneracy—remain invariant across layers and time steps. This invariance ensures rich feature learning while preventing the network from getting stuck in local minima.

\subsection{The Gaussian Process View}
In the infinite-width limit, neural network training induces two families of Gaussian processes that capture forward and backward propagation:
\begin{align}
&\{\hat{Z}^{W_{0}^{l}\delta x_{s}^{l-1}(\xi_i)}\}_{i\in[m], s\in[t],2\leq l\leq L}, \label{eq:Gaussian_Process_Fm}\\
&\{\hat{Z}^{W_{0}^{l\top}dh_{s}^{l}(\xi_i)}\}_{i\in[m], s\in[t],2\leq l\leq L}. \label{eq:Gaussian_Process_Bm}
\end{align}

The forward process (\eqref{eq:Gaussian_Process_Fm}) tracks how features evolve across layers, while the backward process (\eqref{eq:Gaussian_Process_Bm}) describes gradient flow. Unlike prior work  that studies these processes in isolation, we discover fundamental connections between their structural properties that enable both feature learning and convergence.

\noindent\textbf{Covariance Structure of Gaussian Processes} Our key technical insight is that these Gaussian processes \eqref{eq:Gaussian_Process_Fm} and \eqref{eq:Gaussian_Process_Bm} exhibit invariant covariance properties that persist throughout training. Recall from \eqref{eq:forward_coupling} and \eqref{eq:backward_coupling} that both forward and backward propagation can be decomposed into a Gaussian term and a history-dependent term:
\begin{align*}
Z^{\delta h^{l}_{t}(\xi)} &= \underbrace{\hat{Z}^{W^{l}_{0}\delta x^{l-1}_{t}(\xi)}}_{\text{Gaussian term}} + \underbrace{F_{t}(\xi)}_{\text{history term}} \\
Z^{dx_{s}^{l}(\xi)} &= \underbrace{\hat{Z}^{W_{0}^{l+1\top}dh_{s}^{l+1}(\xi)}}_{\text{Gaussian term}} + \underbrace{G_{s}(\xi)}_{\text{history term}}
\end{align*}

We notice that these Gaussian terms can preserve covariance relationships across layers throughout training:
\begin{align*}
\text{Cov}(\hat{Z}^{W_{0}^{l}\delta x_{s}^{l-1}(\xi)}, \hat{Z}^{W_{0}^{l}\delta x_{t}^{l-1}(\bzeta)}) &= \EE[Z^{\delta x_{s}^{l-1}(\xi)}Z^{\delta x_{t}^{l-1}(\bzeta)}], \\
\text{Cov}(\hat{Z}^{W_{0}^{l\top}dh_{s}^{l}(\xi)}, \hat{Z}^{W_{0}^{l\top}dh_{t}^{l}(\bzeta)}) &= \EE[Z^{dh_{s}^{l}(\xi)}Z^{dh_{t}^{l}(\bzeta)}]. 
\end{align*}


These covariance relationships reveal that feature correlations between adjacent layers follow consistent patterns, even as individual features evolve. They link the feature spaces of adjacent layers through their second-order statistics, providing a structural bridge that persists throughout the training process. While previous work has investigated covariance structures in neural networks \citep{pandey2022structured, guth2024rainbow}—these studies do not explicitly examine the interaction of the covariance matrix across layers during training dynamics. Our analysis reveals how these structural invariants enable both feature learning and global convergence under $\mu$P.

\subsection{From Covariance Structure to Non-degeneracy}
The preservation of covariance relationships across layers ensures the non-degeneracy of the induced Gaussian processes throughout training. In the proof of Theorem~\ref{thm:degenrate}, we consider any linear combination of the Gaussian processes:
\begin{align*}
\sum_{i \in [m], s\in [t]}\lambda_{i,s}\hat{Z}^{W_{0}^{l}\delta x_{s}^{l-1}(\xi_i)},  \sum_{i \in [m], s\in [t]}\lambda_{i,s}\hat{Z}^{W_{0}^{l\top}dh_{s}^{l}(\xi_i)}.
\end{align*}
Through our covariance preservation property, we show that if these linear combinations degenerate (i.e., equal to zero almost surely), then the corresponding linear combinations of original features and gradients must also degenerate:
\begin{align*}
\sum_{i \in [m], s\in [t]}\lambda_{i,s}Z^{\delta x_{s}^{l-1}(\xi_i)} &\overset{a.s.}{=} 0, \\
\quad \sum_{i \in [m], s\in [t]}\lambda_{i,s}Z^{dh_{s}^{l}(\xi_i)} &\overset{a.s.}{=} 0.
\end{align*}
This connection through linear combinations allows us to transfer the non-degeneracy property from feature space to the induced Gaussian processes across layers, establishing that both forward and backward processes remain non-degenerate throughout training.
This result reveals a fundamental connection between covariance structure and feature richness: the preservation of covariance relationships ensures that linear independence propagates through layers. 

\noindent\textbf{This directly contrasts with other parametrizations.} In the NTK parametrization, since features stay close to initialization with $Z^{z_{t}(\xi)} = Z^{z_{0}(\xi)}$, the process necessarily becomes degenerate as it fails to capture new information during training. Our analysis can demonstrate that $\mu$P uniquely maintains the non-degeneracy of features across both space and time dimensions, enabling the network to learning rich and meaningful features throughout training.
\begin{figure}[!t]
\centering
\includegraphics[width=0.9\columnwidth]{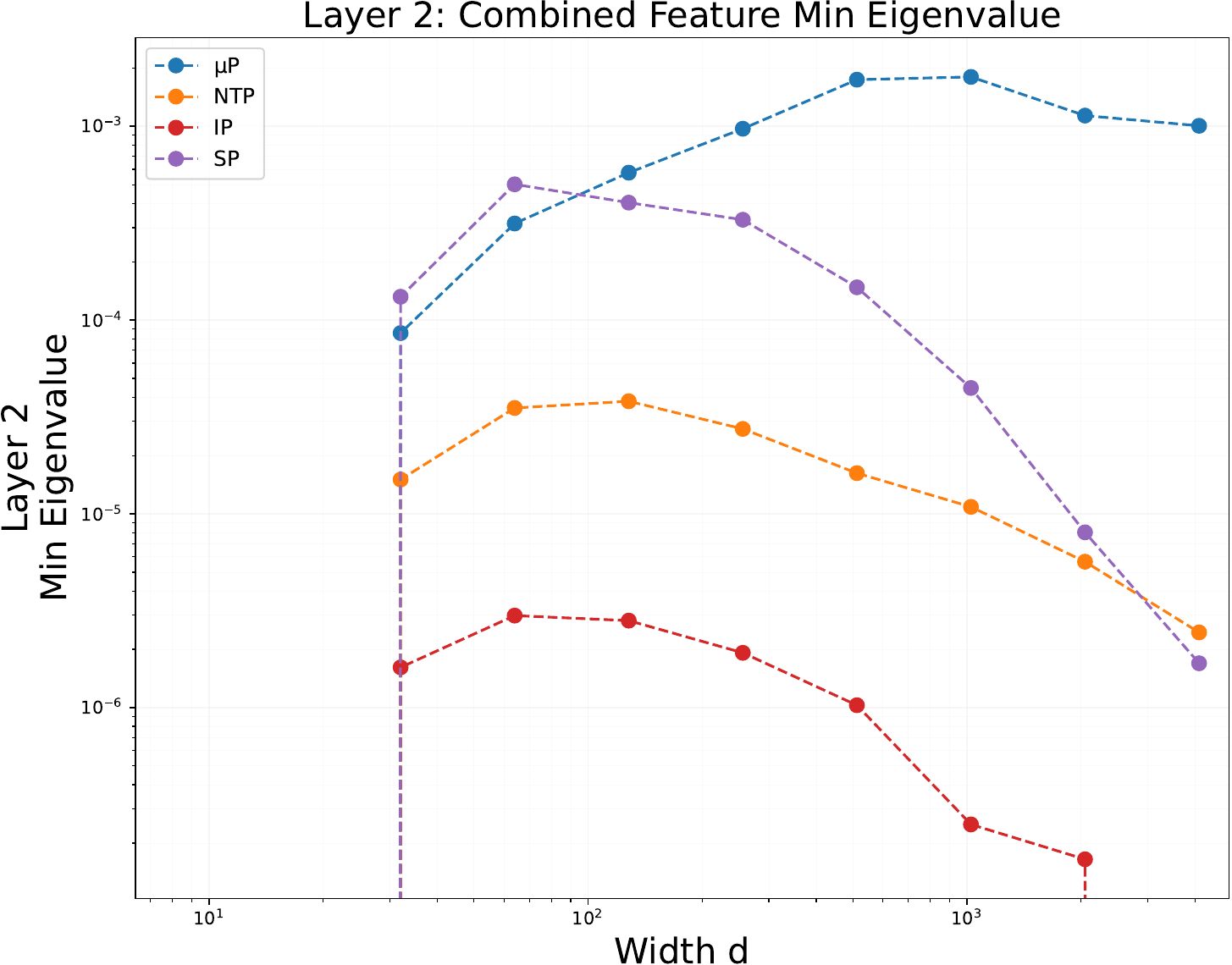}
\caption{Layer 2: Minimum eigenvalue analysis of pre-activation features by concatenating initial and final states across different parametrizations in 3 hidden-layer MLPs trained on CIFAR-10. The $\mu$P parametrization maintains higher eigenvalues as width increases, demonstrating better preservation of feature richness. In contrast, other parametrizations (NTP, IP, SP) show substantial decay in eigenvalues with increasing width, indicating feature degeneration. This empirically validates our theoretical analysis that $\mu$P uniquely preserves non-degeneracy across both spatial and temporal dimensions, while NTP and SP fail to capture new information during training.}
\label{fig:layer1_feature}
\end{figure}

To empirically validate this theoretical finding, we analyze the minimum eigenvalue of the feature matrix constructed from joint space-time features at Layer 2, complementing our analysis of feature diversity in Figure~\ref{fig:features}. Under the same experimental setup with 3 hidden-layer MLPs trained on CIFAR-10, Figure~\ref{fig:layer1_feature} constructs joint space-time representations: for each input $\xi_i$, we concatenate initial features $h^0(\xi_i)$ and final features $h^T(\xi_i)$ to form the combined matrix $[h^0_1, h^T_1, h^0_2, h^T_2, \ldots, h^0_N, h^T_N]$, then compute the minimum eigenvalue of its Gram matrix. The results show that $\mu$P maintains higher eigenvalues across different network widths compared to other parametrizations.

This aligns with our theoretical prediction and further strengthens the findings in Figure~\ref{fig:features} where we observed $\mu$P's unique ability to achieve both feature learning and feature richness. Additional comprehensive eigenvalue spectrum analysis across all layers and both pre- and post-activation features is provided in Appendix~\ref{app:add1}, \ref{app:add2}, and \ref{app:add3}.

\subsection{Evolution Framework}
To rigorously track how these structural properties evolve throughout training, we need to carefully handle the natural flow of information in neural networks: forward propagation followed by backward propagation. In each iteration, the network first computes forward features through all layers, then calculates gradients backwards for parameter updates. This computational pattern naturally leads to a two-level filtration framework. We introduce a sequence of $\sigma$-algebras to track the evolution of random variables during training. Let $\cF_0$ denote the initial condition:
\begin{align*}
\cF_0 = \sigma\big(\{Z^{h^l_0(\xi_i)}, Z^{x^l_0(\xi_i)}\}_{i \in [m], l \in [L]}, Z^{\widehat{W}_0^{L+1}}\big)
\end{align*}
Then we define $\cF_t$ to track all completed iterations up to time $t$, and an extended filtration $\cG_t$ to capture the forward pass of the $(t+1)$-th iteration:
\begin{align}
\cF_{t} &= \sigma\big(\cF_0, \{\hat{Z}^{W_{0}^{l}\delta x_{s}^{l-1}(\xi_i)}\}_{i\in[m], s\in[t],2\leq l\leq L}, \notag \\
&\qquad \{\hat{Z}^{W_{0}^{l\top}dh_{s}^{l}(\xi_i)}\}_{i\in[m], s\in[t],2\leq l\leq L}\big) \\
{\cG}_{t} &= \sigma\big(\cF_t, \{\hat{Z}^{W_{0}^{l}\delta x_{t+1}^{l-1}(\xi_i)}\}_{i\in[m], 2\leq l \leq L}\big)
\end{align}
This filtration structure allows us to precisely track how information flows through the network: $\cF_t$ contains all information up to time $t$, while $\cG_t$ extends this to include the forward pass information at time $t+1$ before its backward pass begins. This framework enables us to:

\begin{enumerate}[leftmargin=*]
\item \textbf{Inductive Proof Structure:} The filtration framework enables a structured inductive proof that follows the natural flow of computation in neural networks. We establish non-degeneracy in four steps, motivated by how information propagates through the network:
\begin{itemize}[leftmargin=*]
    \item  Step 1: Features in first hidden layer $\hat{Z}^{W_{0}^{2}\delta x_{s}^{1}(\xi_i)}$. This forms the base case as it only depends on the input,
    \item Step 2: Features in remaining layers $\hat{Z}^{W_{0}^{l}\delta x_{s}^{l-1}(\xi_i)}$. Using the non-degeneracy of previous layers,
    \item Step 3: Gradients in last layer $\hat{Z}^{W_{0}^{L\top}dh_{s}^{L}(\xi_i)}$. Built upon the established feature properties,
    \item Step 4: Gradients in remaining layers $\hat{Z}^{W_{0}^{l\top}dh_{s}^{l}(\xi_i)}$. Completing the backward pass analysis.
\end{itemize}
Each step leverages the non-degeneracy established in previous steps, creating a chain of dependency that mirrors the network's computation graph.
\item \textbf{Conditional Analysis:} The filtration enables precise decomposition of feature and gradient updates into new and historical information:
\begin{align*}
Z^{h^{l}_{s}(\xi_i)} &= \underbrace{\hat{Z}^{W_{0}^{l}\delta x^{l-1}_{s}(\xi_i)}}_{\text{new randomness}} + \underbrace{\Delta_{s}(\xi_i)}_{\text{history}} \\
Z^{dx_{s}^{l}(\xi_i)} &= \underbrace{\hat{Z}^{W_{0}^{l+1\top}dh_{s}^{l+1}(\xi_i)}}_{\text{new randomness}} + \underbrace{G_{s}(\xi_i)}_{\text{history}}
\end{align*}
where $\Delta_{s}(\xi_i) \in \cF_{s-1}$ captures the accumulated feature history, and $G_{s}(\xi_i)$ represents previous gradient information. This decomposition is crucial for our inductive proof: by focusing on the \textbf{new randomness} in each step, we can show that non-degeneracy is preserved when conditioned on all historical information.

\item \textbf{Non-degeneracy Preservation:} By leveraging \good{} activation functions as introduced in Assumption~\ref{assumption:good+analytical} (e.g., Sigmoid, Tanh, SiLU) and the covariance structure, we show that non-degeneracy propagates forward in time. Specifically, if at time $t$ we have:
\begin{align*}
\sum_{i \in [m], s\in [t]}\lambda_{i,s}\hat{Z}^{W_{0}^{l}\delta x_{s}^{l-1}(\xi_i)} &\overset{a.s.}{\not=} 0 \\
\sum_{i \in [m], s\in [t]}\lambda_{i,s}\hat{Z}^{W_{0}^{l\top}dh_{s}^{l}(\xi_i)} &\overset{a.s.}{\not=} 0
\end{align*}
Then, we show that these sums remain nonzero at time $t+1$ based on two key properties: (1) the non-degeneracy of Gaussian processes is preserved when they share the same covariance structure, and (2) \good{} activation functions, such as Sigmoid and SiLU, exhibit a crucial ``non-collapsing'' property—mapping distinct inputs to distinct outputs unless all combining coefficients are zero. Together, these properties ensure that features can evolve significantly during training while preserving their diversity, a fundamental distinction from the NTK parametrization, where features remain near their initialization.

\end{enumerate}

Now, we revisit the key technical challenges introduced at the beginning of this section and demonstrate how our framework addresses them.

\noindent\textbf{Feature Evolution vs. Structural Stability:}  
Unlike NTK, where features remain close to their initialization, $\mu$P enables substantial feature evolution. Our framework ensures that new randomness, represented by Gaussian features such as $\hat{Z}^{W_{0}^{l}\delta x_{t+1}^{l-1}(\xi_i)}$, enters the system with a well-defined structure that preserves non-degeneracy. This structured evolution prevents feature collapse while allowing representations to adapt dynamically, ensuring both expressivity and stability throughout training.

\noindent\textbf{Cross-Layer Coupling:}  
The interplay between layers introduces dependencies that can destabilize training. By leveraging a two-level filtration structure $\cF_t$ and $\cG_t$, our framework tracks both forward propagation $Z^{h^{l}_{s}}$ and backward propagation $Z^{dx_{s}^{l}}$, ensuring that updates in one layer do not collapse the feature space of others. This structure maintains well-defined covariance relationships across layers, allowing $\mu$P to support both deep feature learning and global convergence, distinguishing it from NTK and standard parametrizations.

\section{Conclusion and Future Work}
In this work, we establish a fundamental theoretical result: deep neural networks under $\mu$P parametrization can simultaneously achieve meaningful feature learning while preserving feature non-degeneracy. Through a rigorous analysis of Gaussian processes and their covariance structures, we show that features not only remain linearly independent throughout training but also undergo substantial evolution from their initialization. This provides insight into a fundamental question in deep learning theory: how neural networks can simultaneously learn expressive representations and achieve global convergence.

Our analysis establishes fundamental connections between covariance preservation and feature richness. By preventing feature degeneracy, our framework provides a rigorous foundation for understanding how overparameterized networks learn expressive representations. Moreover, our results highlight the crucial role of parametrization in enabling both stable training and meaningful feature evolution. These insights into how $\mu$P enables both feature learning and global convergence suggest promising directions for bridging the gap between theory and practical deep learning success.

Several promising directions for future work emerge from our analysis. First, extending our theoretical framework to transformer architectures, particularly the attention mechanism, would be valuable for understanding feature learning in modern language models. Second, our analysis of structural invariants could provide new perspectives on convergence rates beyond just global convergence, potentially informing optimization strategies in deep learning. Third, studying how our insights on feature non-degeneracy influence generalization bounds may yield deeper theoretical foundations for understanding the generalization properties of deep neural networks. Finally, exploring how $\mu$P interacts with more complex training paradigms, such as fine-tuning and self-supervised learning, could further enhance our understanding of deep network training dynamics in practical settings. Additionally, extending our convergence analysis to continuous-depth architectures such as neural ODEs and residual networks \citep{yang2023feature,marion2023implicit,bordelon2023depthwise,gaoglobal} represents an interesting direction for future research.

\section*{Acknowledgements}
We thank the anonymous reviewers and area chair for their helpful comments. ZC is supported by the UCLA Dissertation Year Fellowship. QZ and QG are supported in part by the National Science Foundation CAREER Award 1906169 and IIS-2008981, as well as the Sloan Research Fellowship. The views and conclusions contained in this paper are those of the authors and should not be interpreted as representing any funding agencies.

\section*{Impact Statement}
This work develops theoretical foundations for understanding feature learning in deep neural networks, focusing on how different parametrization schemes affect the learning dynamics. While primarily theoretical in nature, our findings may inform the design of more efficient training algorithms and architectures. As our contribution focuses on mathematical foundations, its societal impact aligns with the well-established consequences of advancing machine learning research.


\bibliography{ICML2025CR/feature}
\bibliographystyle{icml2025}


\newpage
\appendix
\onecolumn

\section{Experimental Details}\label{sec:exp_details}
\noindent \textbf{Experimental details for Figures~\ref{fig:features} and \ref{fig:layer1_feature}.} We conduct experiments using MLPs with three hidden layers and input dimension $n_0 = 3072$ (flattened CIFAR-10 images), three hidden layers of equal width $n_1=n_2=n_3=n$ varying from ${8, 16, 32, 64, 128, 256, 512, 1024, 2048, 4096}$, and output dimension 1. In the experiment, we only use 10 samples randomly selected from the airplane and automobile classes in CIFAR-10. All networks use SiLU activation functions and are trained on a binary classification task with $\pm1$ targets for 1000 steps. We use a global learning rate $\eta=0.1$ across all parametrization schemes with $10$ runs for each width setting. The learning rate $\eta=0.1$ is chosen to ensure stable training across parametrizations.

We implement the following parametrization schemes:

\noindent \textbf{Standard Parametrization (SP):}
\begin{align*}
\sigma_{\ell} &= \sqrt{\frac{2}{n_{\ell-1}}}; \quad \eta_{\ell} = \eta\cdot\frac{1}{n}
\end{align*}
at all layers, with $\eta=0.1$.

\noindent \textbf{Neural Tangent Parametrization (NTP):}
\begin{align*}
\sigma_{\ell} &= \sqrt{\frac{2}{n_{\ell-1}}}; \quad \eta_{\ell} = \eta\cdot\frac{1}{n_{\ell-1}}
\end{align*}
at all layers, with $\eta=0.1$.

\noindent \textbf{Integrable Parametrization (IP):}
For initialization variances:
\begin{align*}
\sigma_{1} &= \sqrt{\frac{2}{d}}; \quad \sigma_{\ell} = \frac{\sqrt{2}}{n}, \quad \ell \geq 2.
\end{align*}
For learning rates:
\begin{align*}
\eta_{1} &= \eta\cdot\frac{n}{d}; \quad \eta_{2} = \eta_{3} = \eta; \quad \eta_{4} = \eta\cdot\frac{1}{n}.
\end{align*}

\noindent \textbf{Maximal Update Parametrization ($\mu$P):} For initialization variances:
\begin{align*}
\sigma_{1} &= \sqrt{\frac{2}{d}}; \quad \sigma_{2} = \sigma_{3} = \sqrt{\frac{2}{n}}; \quad \sigma_{4} = \frac{\sqrt{2}}{n}.
\end{align*}
For learning rates:
\begin{align*}
\eta_{1} = \eta\cdot\frac{n}{d}; \quad \eta_{2} = \eta_{3} = \eta; \quad \eta_{4} = \eta\cdot\frac{1}{n}    
\end{align*}
Networks are trained with batch size 10 for 1000 steps, sufficient for all widths to achieve stable feature representations with training loss smaller than $0.05$. For each width configuration, we conduct $10$ independent trials with different random seeds ($42\sim 53$) and report the mean values in Figures~\ref{fig:features} and \ref{fig:layer1_feature}.
To quantify feature properties, we measure two metrics:
\begin{itemize}
\item Feature change: $\|h(x)-h^0(x)\|_2/\|h^0(x)\|_2$, where $h^0$ represents features at initialization
\item Feature diversity: minimum eigenvalue of the Gram matrix $K_{ij} = \langle h(x_i), h(x_j)\rangle$ computed over batch samples
\end{itemize}
These measurements allow us to track both the evolution of features from their initialization state and the maintenance of feature richness throughout training.

\subsection{Additional Results on Activation Functions}\label{app:add1}

\noindent To further examine the impact of activation functions, we conduct experiments with Tanh and ReLU under the same settings as described in Section~\ref{sec:exp_details}. Below, we present the feature evolution and diversity results for these activations.

\begin{figure*}[t!]
\centering
\includegraphics[width=0.32\textwidth]{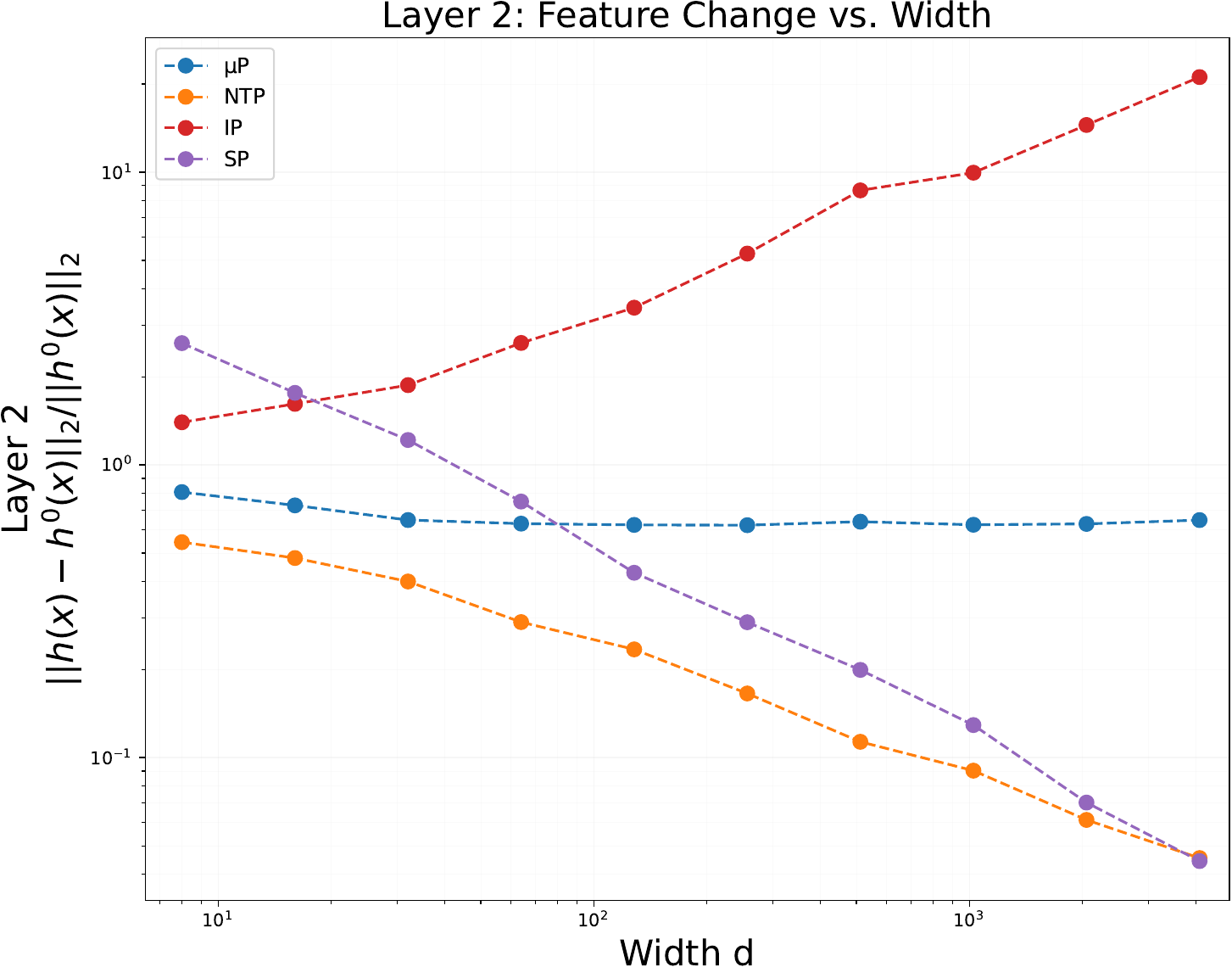}
\includegraphics[width=0.32\textwidth]{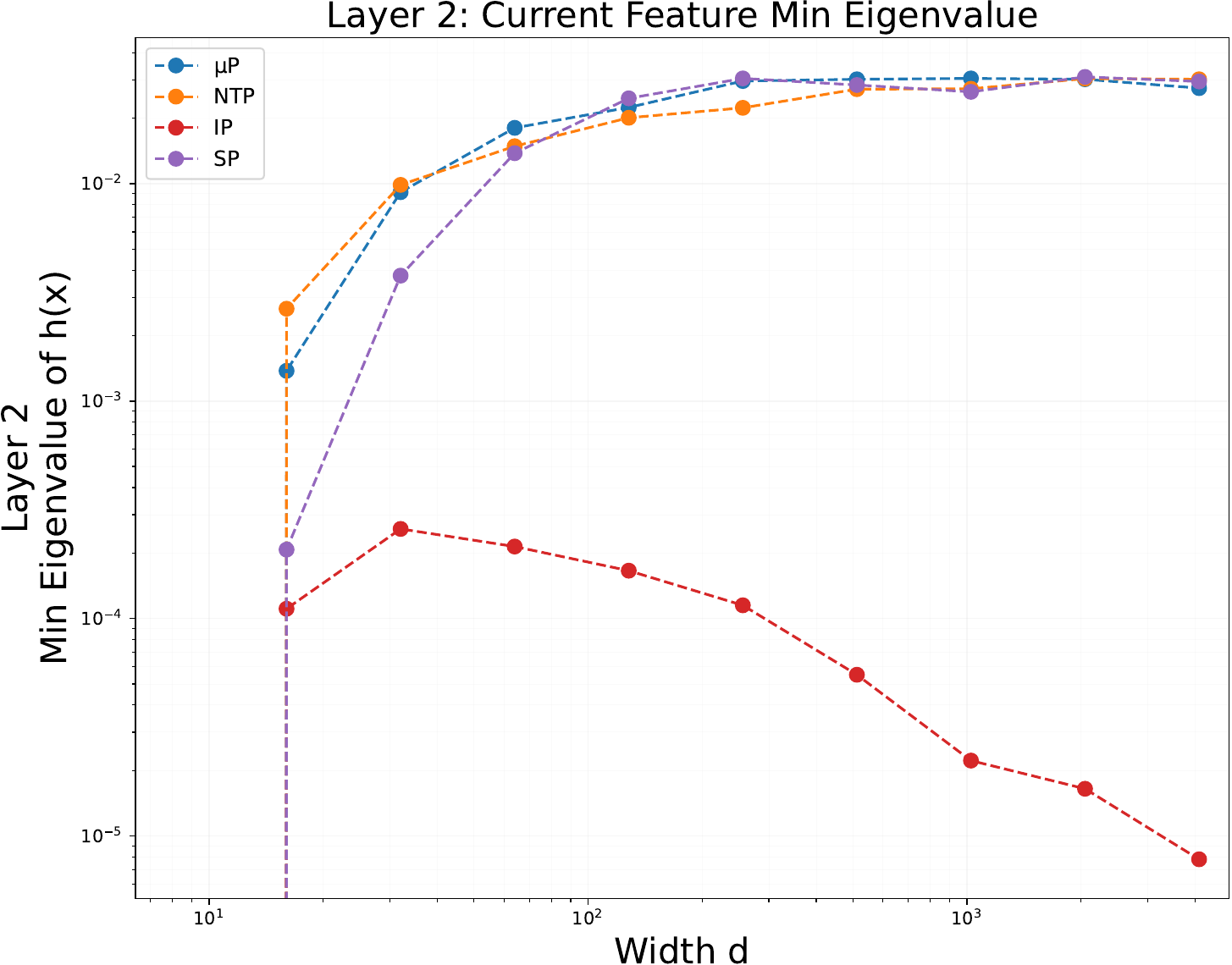}
\includegraphics[width=0.32\textwidth]{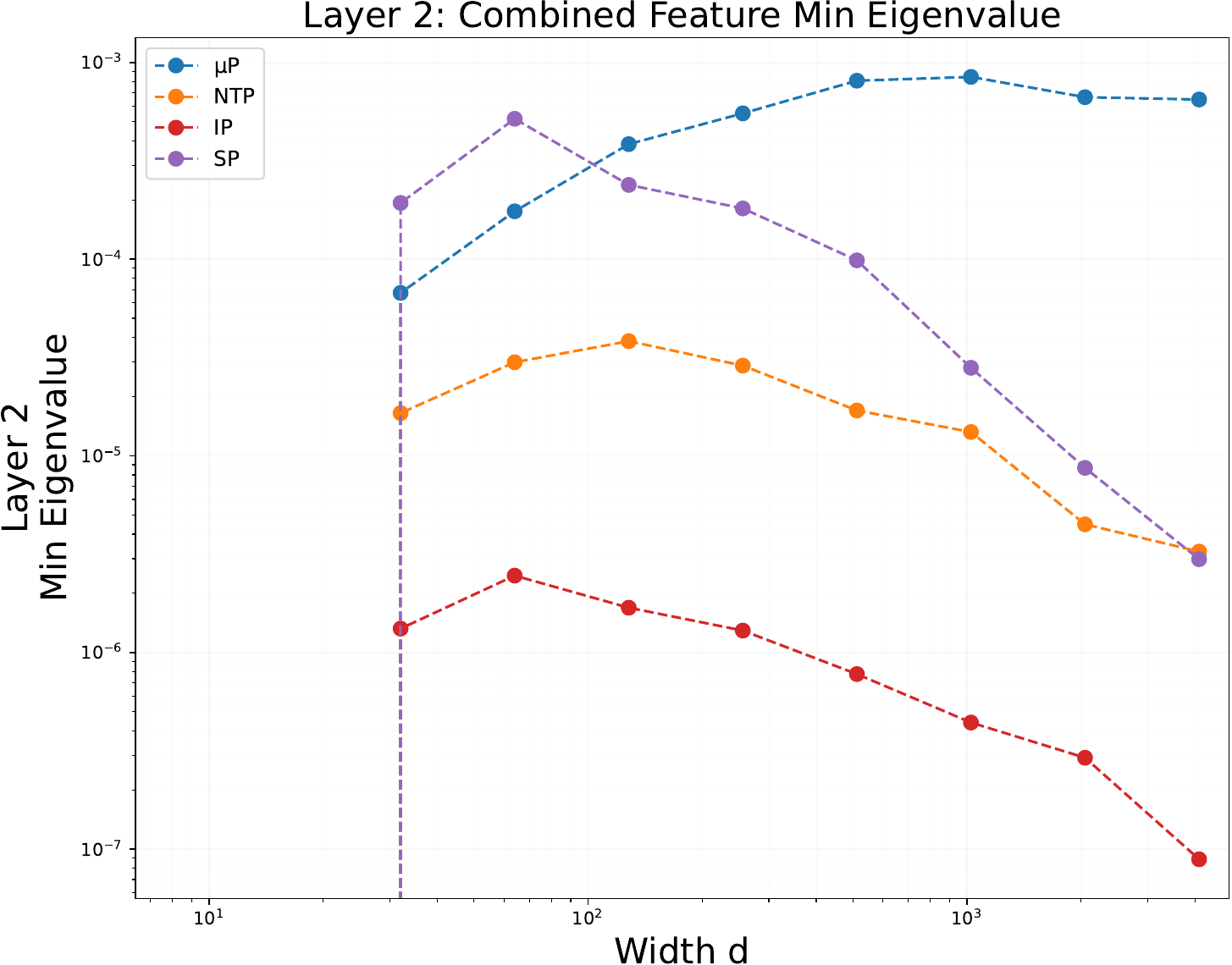}
\caption{Feature learning behavior for Tanh activation. 
\textbf{Left:} Feature change ($\|h(x)-h^0(x)\|_2/\|h^0(x)\|_2$). 
\textbf{Middle:} Feature diversity (minimum eigenvalue of Gram matrix).  
\textbf{Right:} Minimum eigenvalue analysis by concatenating initial and final features.  }
\label{fig:tanh_features}
\end{figure*}

\begin{figure*}[t!]
\centering
\includegraphics[width=0.32\textwidth]{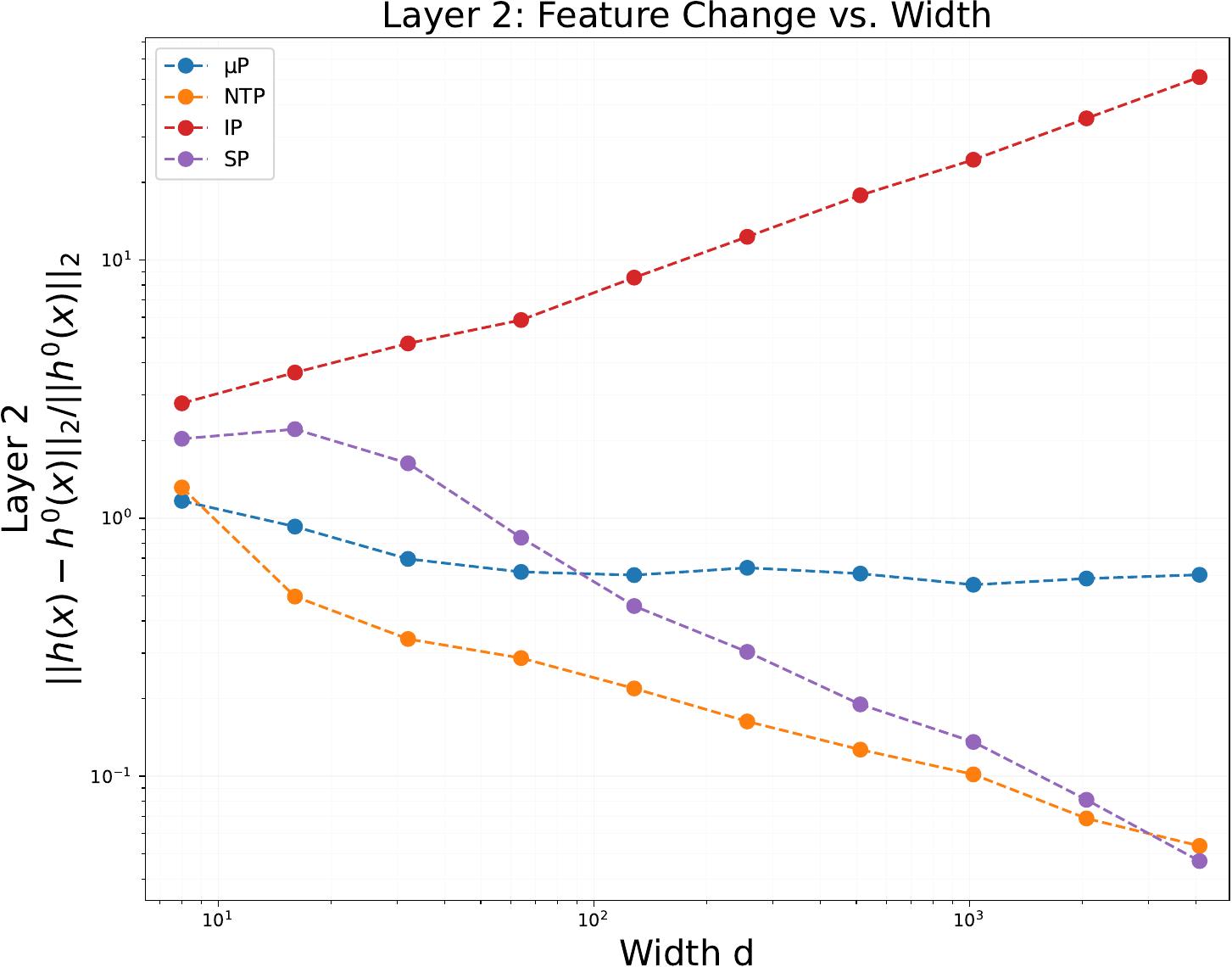}
\includegraphics[width=0.32\textwidth]{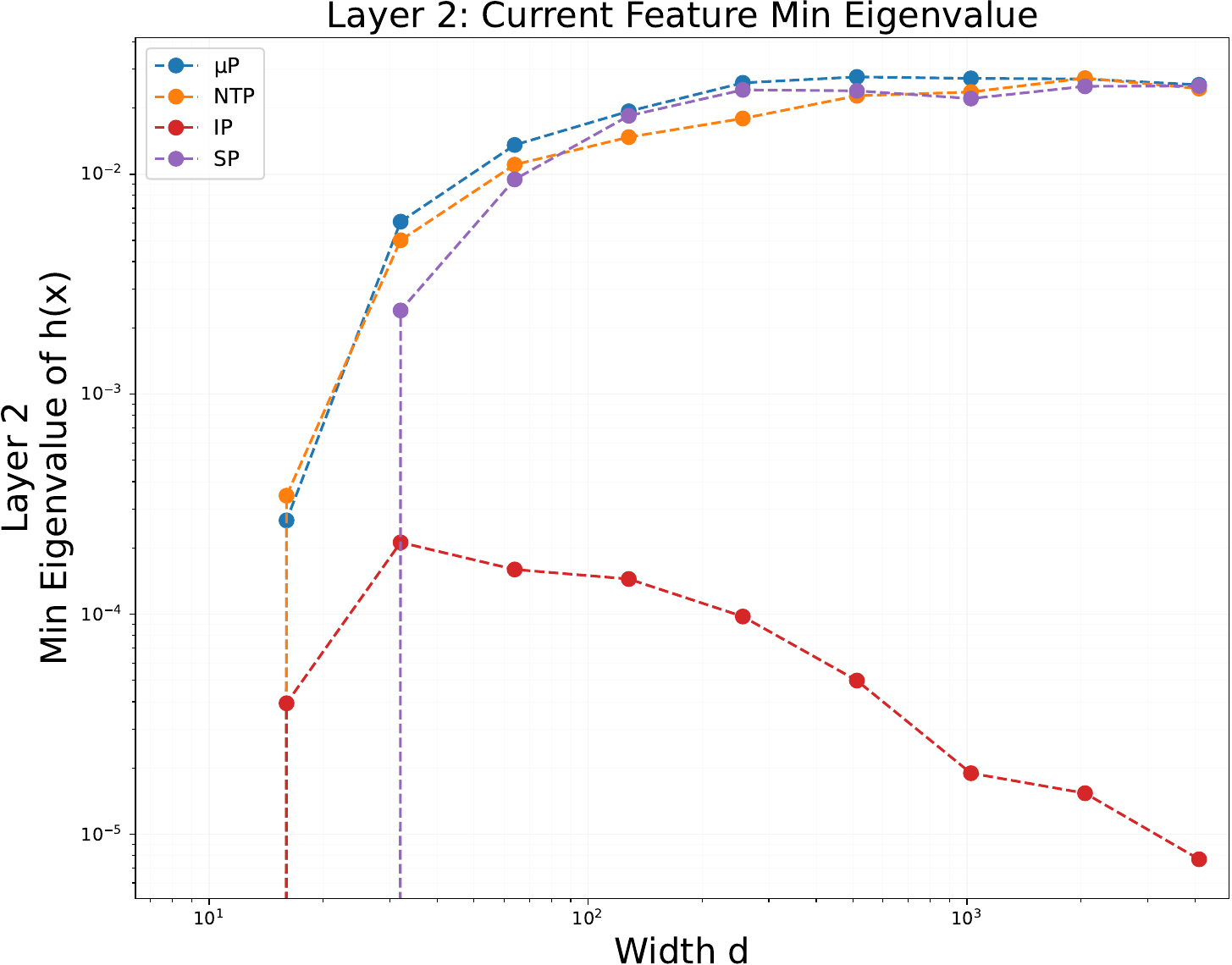}
\includegraphics[width=0.32\textwidth]{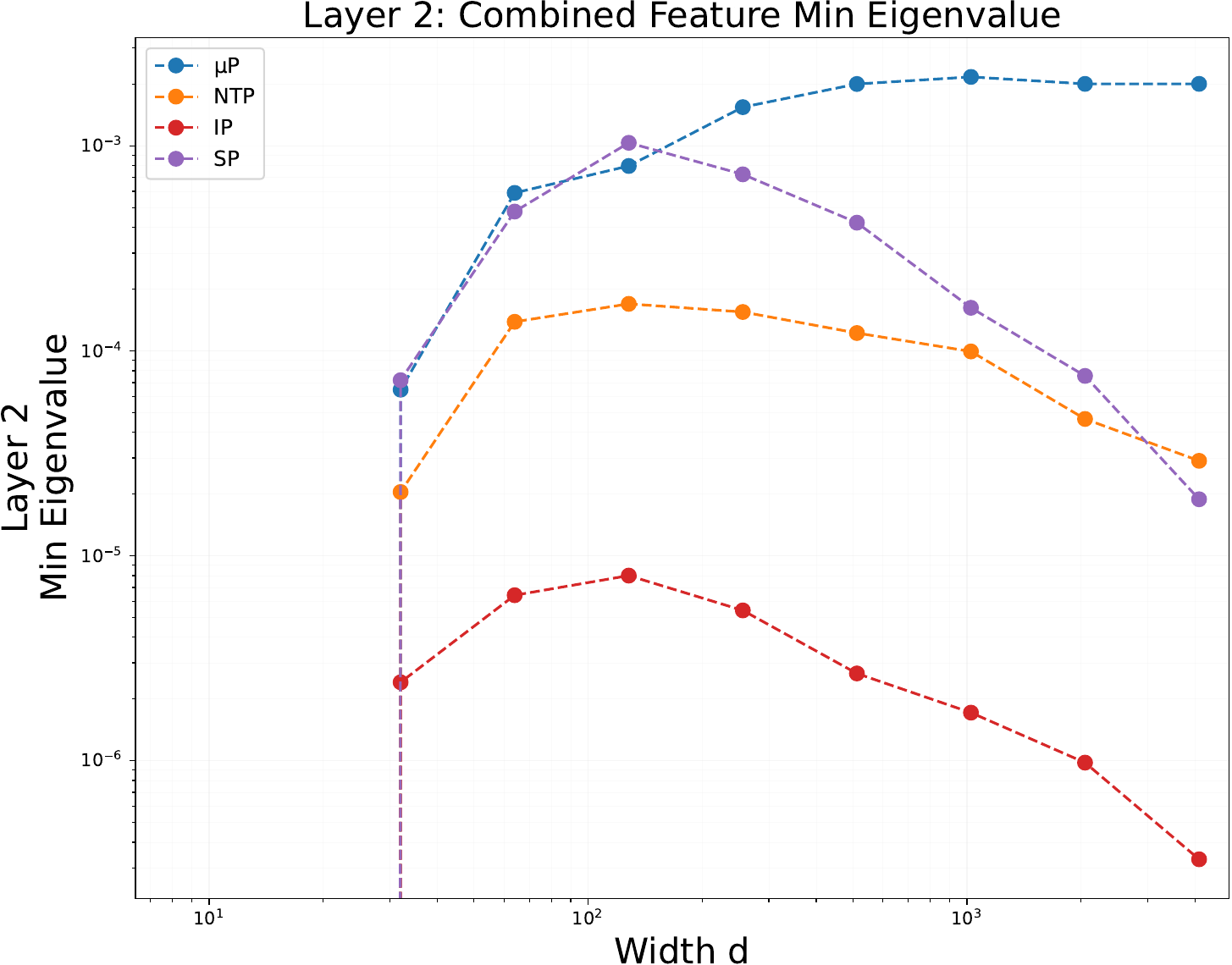}
\caption{Feature learning behavior for ReLU activation.  
\textbf{Left:} Feature change ($\|h(x)-h^0(x)\|_2/\|h^0(x)\|_2$).  
\textbf{Middle:} Feature diversity (minimum eigenvalue of Gram matrix).  
\textbf{Right:} Minimum eigenvalue analysis by concatenating initial and final features.  }
\label{fig:relu_features}
\end{figure*}

\noindent Our theoretical analysis fully explains the feature learning behavior of Tanh networks, as confirmed by our experimental results in \Cref{fig:tanh_features}. Tanh enables meaningful feature learning while leading to a gradual decrease in feature diversity as width goes up. This is consistent with our theoretical predictions, which account for the smooth and bounded nature of the Tanh activation.

While our theoretical analysis does not directly apply to ReLU due to its non-smooth nature, our experimental results in \Cref{fig:relu_features} indicate that ReLU-trained networks still exhibit feature learning and maintain meaningful representations under Maximal Update Parametrization ($\mu$P). One possible explanation is that, although ReLU lacks explicit smoothness assumptions required in our analysis, its piecewise linear structure still allows for non-trivial feature evolution in practice. Moreover, $\mu$P ensures that weight updates are appropriately scaled across layers, preventing degenerate training dynamics that could otherwise hinder learning in deep networks. Understanding the precise mechanisms behind ReLU's feature learning in the infinite-width setting remains an important direction for future theoretical work.

\subsection{Additional Results on Eigenvalue Spectrum Analysis}\label{app:add2}
\subsubsection{Post-activation Eigenvalue Spectrum}

\begin{figure}[H]
    \centering
    \subfigure[Post-activation Eigenvalue Spectrum (Layer 1)]{
        \includegraphics[width=0.45\textwidth]{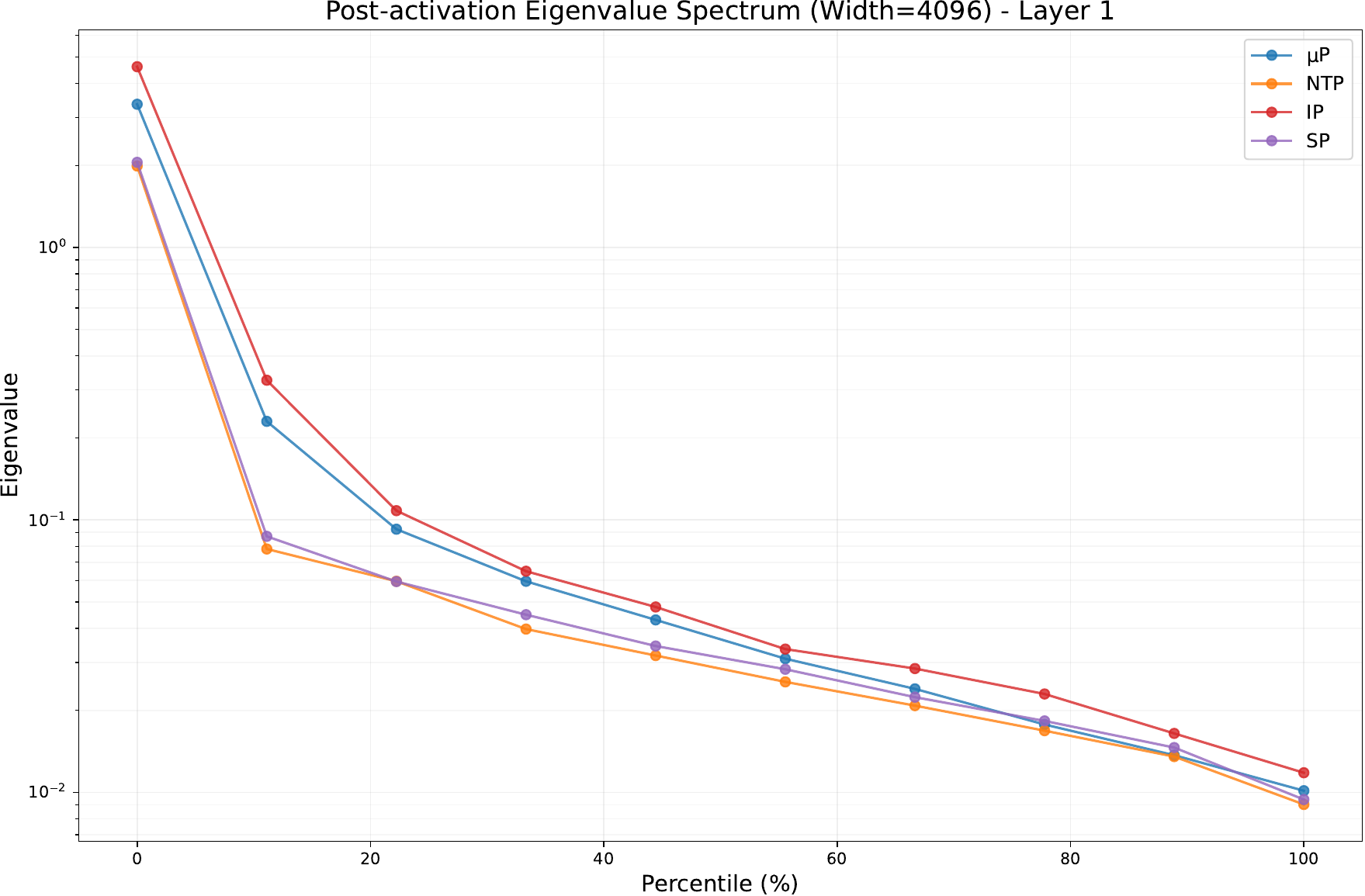}
    }
    \hfill
    \subfigure[Post-activation Eigenvalue Spectrum with Initial (Layer 1)]{
        \includegraphics[width=0.45\textwidth]{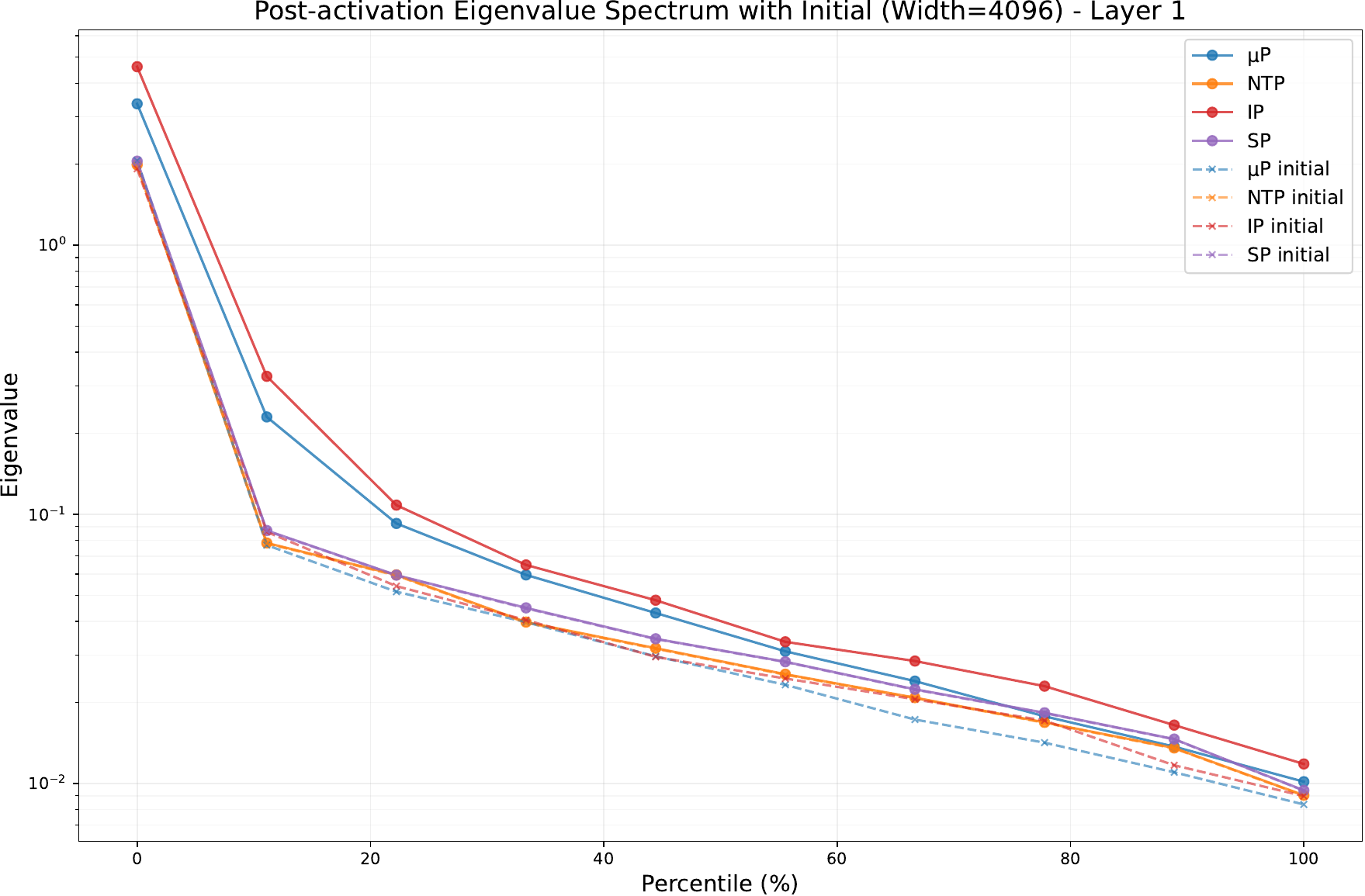}
    }
    \caption{Layer 1 Post-activation Eigenvalue Spectrum Analysis. The x-axis shows percentiles of the sorted eigenvalues of the feature Gram matrix, where 0\% represents the largest eigenvalue and 100\% the smallest. The y-axis (log scale) shows the magnitude of these eigenvalues. $\mu$P and IP maintain higher eigenvalues throughout the spectrum compared to NTP and SP, with the right plot including initialization values (dashed lines) for comparison.}
    \label{fig:layer1_post_spectrum}
\end{figure}

\begin{figure}[H]
    \centering
    \subfigure[Post-activation Eigenvalue Spectrum (Layer 2)]{
        \includegraphics[width=0.45\textwidth]{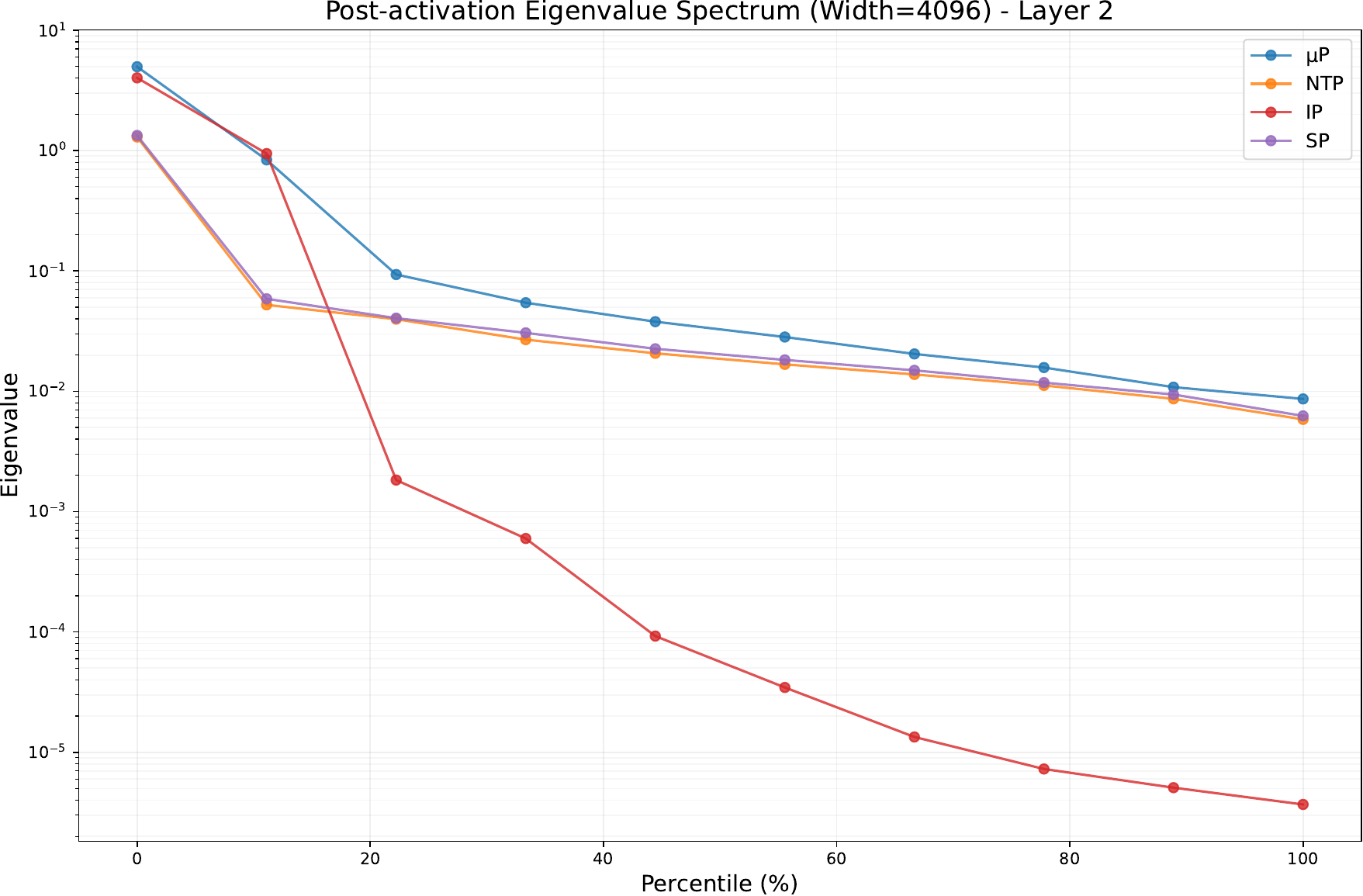}
    }
    \hfill
    \subfigure[Post-activation Eigenvalue Spectrum with Initial (Layer 2)]{
        \includegraphics[width=0.45\textwidth]{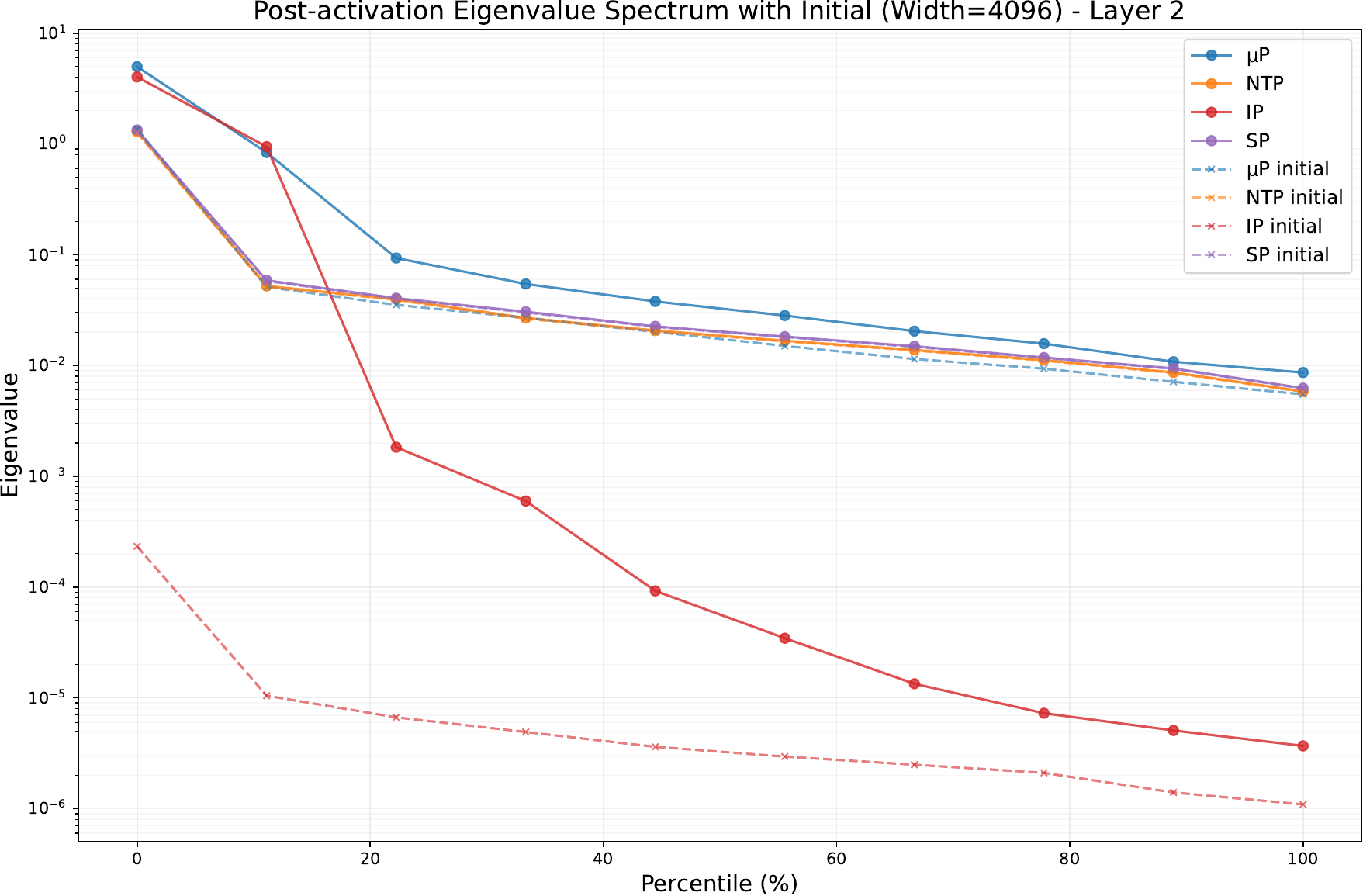}
    }
    \caption{Layer 2 Post-activation Eigenvalue Spectrum Analysis. The plots display eigenvalues of the feature Gram matrix ranked by percentile (0\% = largest, 100\% = smallest). IP shows dramatic eigenvalue collapse at higher percentiles (smaller eigenvalues, right side of plot), while $\mu$P maintains substantially higher eigenvalues across all percentiles, indicating preserved feature diversity even among the least significant dimensions.}
    \label{fig:layer2_post_spectrum}
\end{figure}

\begin{figure}[H]
    \centering
    \subfigure[Post-activation Eigenvalue Spectrum (Layer 3)]{
        \includegraphics[width=0.45\textwidth]{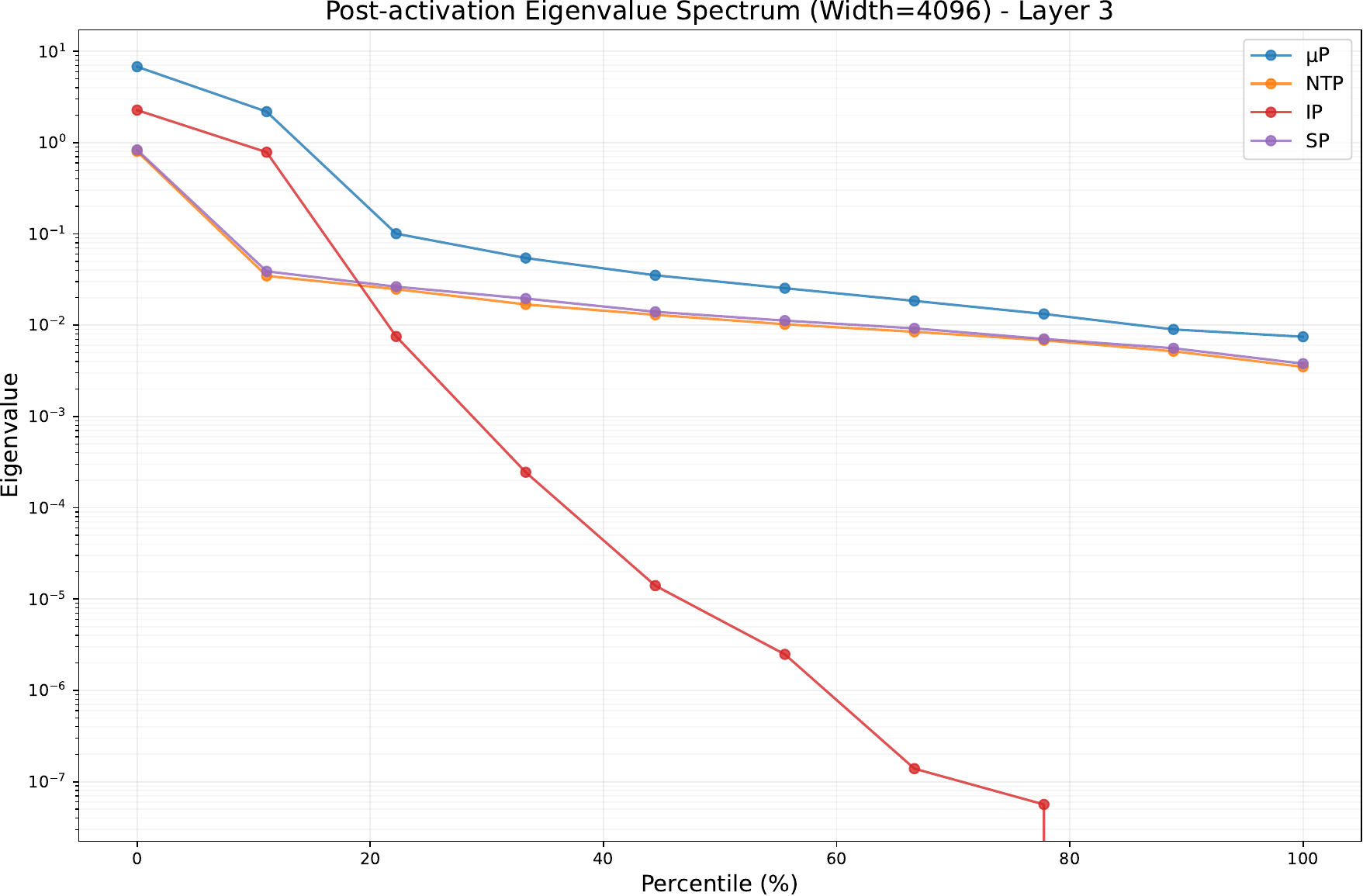}
    }
    \hfill
    \subfigure[Post-activation Eigenvalue Spectrum with Initial (Layer 3)]{
        \includegraphics[width=0.45\textwidth]{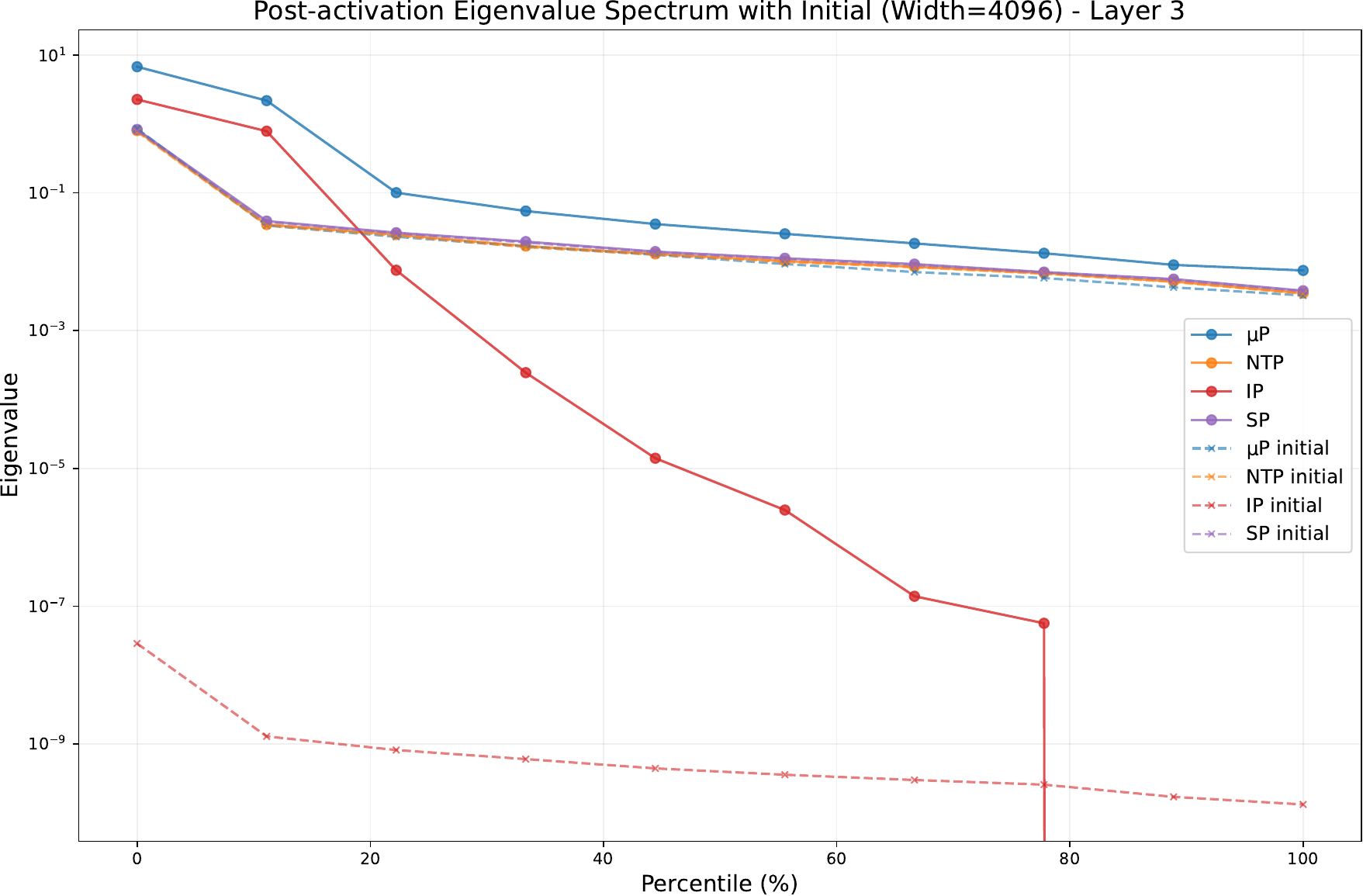}
    }
    \caption{Layer 3 Post-activation Eigenvalue Spectrum Analysis. The x-axis shows percentiles of ranked eigenvalues, with 100\% representing the smallest eigenvalue. In this deepest layer, IP exhibits catastrophic collapse, with eigenvalues at higher percentiles (60-100\%) plummeting to $10^{-7}$. In contrast, $\mu$P maintains eigenvalues orders of magnitude larger across all percentiles, demonstrating superior feature independence throughout the entire spectrum.}
    \label{fig:layer3_post_spectrum}
\end{figure}

\subsubsection{Pre-activation Eigenvalue Spectrum}

\begin{figure}[H]
    \centering
    \subfigure[Pre-activation Eigenvalue Spectrum (Layer 1)]{
        \includegraphics[width=0.45\textwidth]{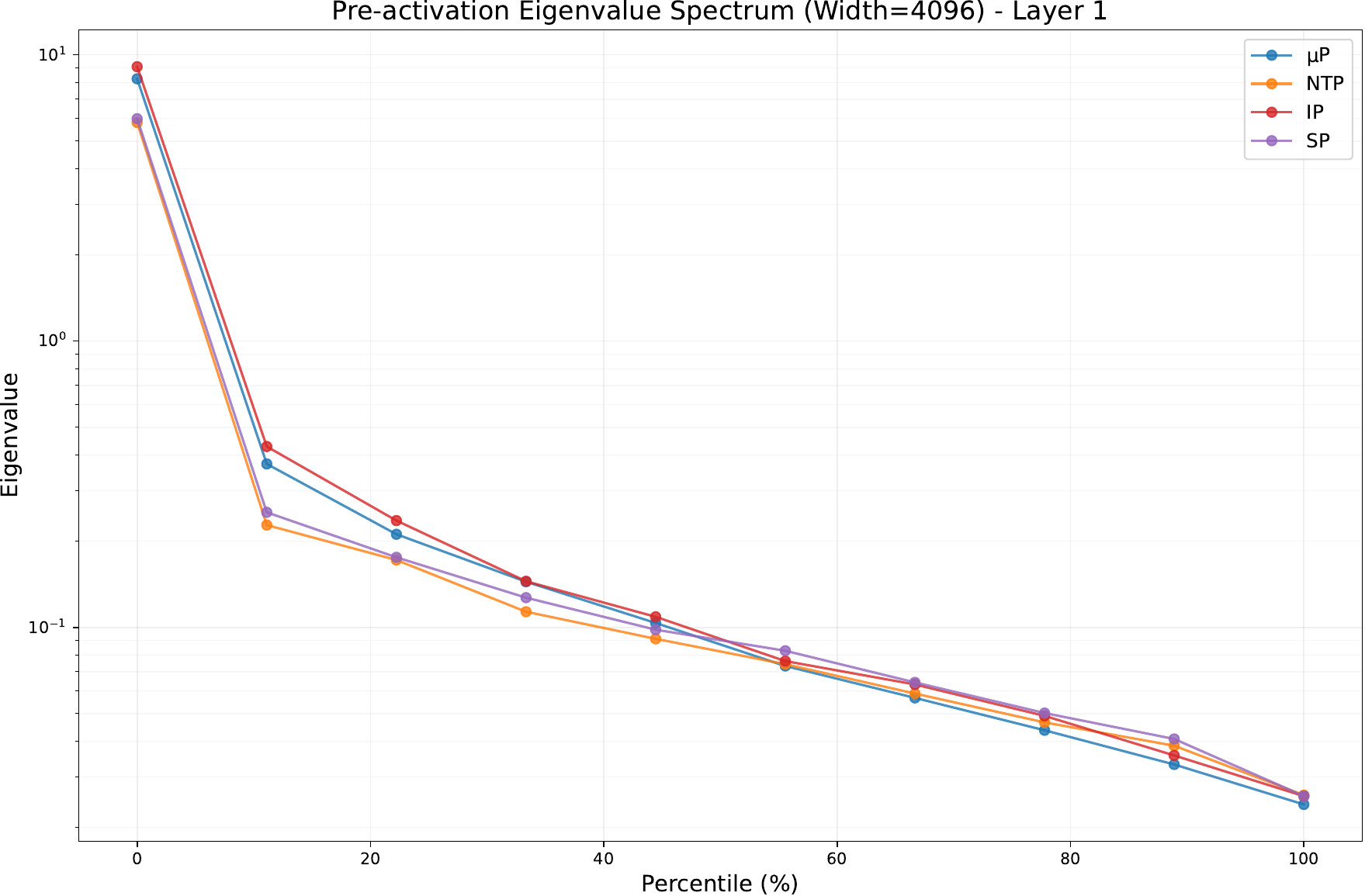}
    }
    \hfill
    \subfigure[Pre-activation Eigenvalue Spectrum with Initial (Layer 1)]{
        \includegraphics[width=0.45\textwidth]{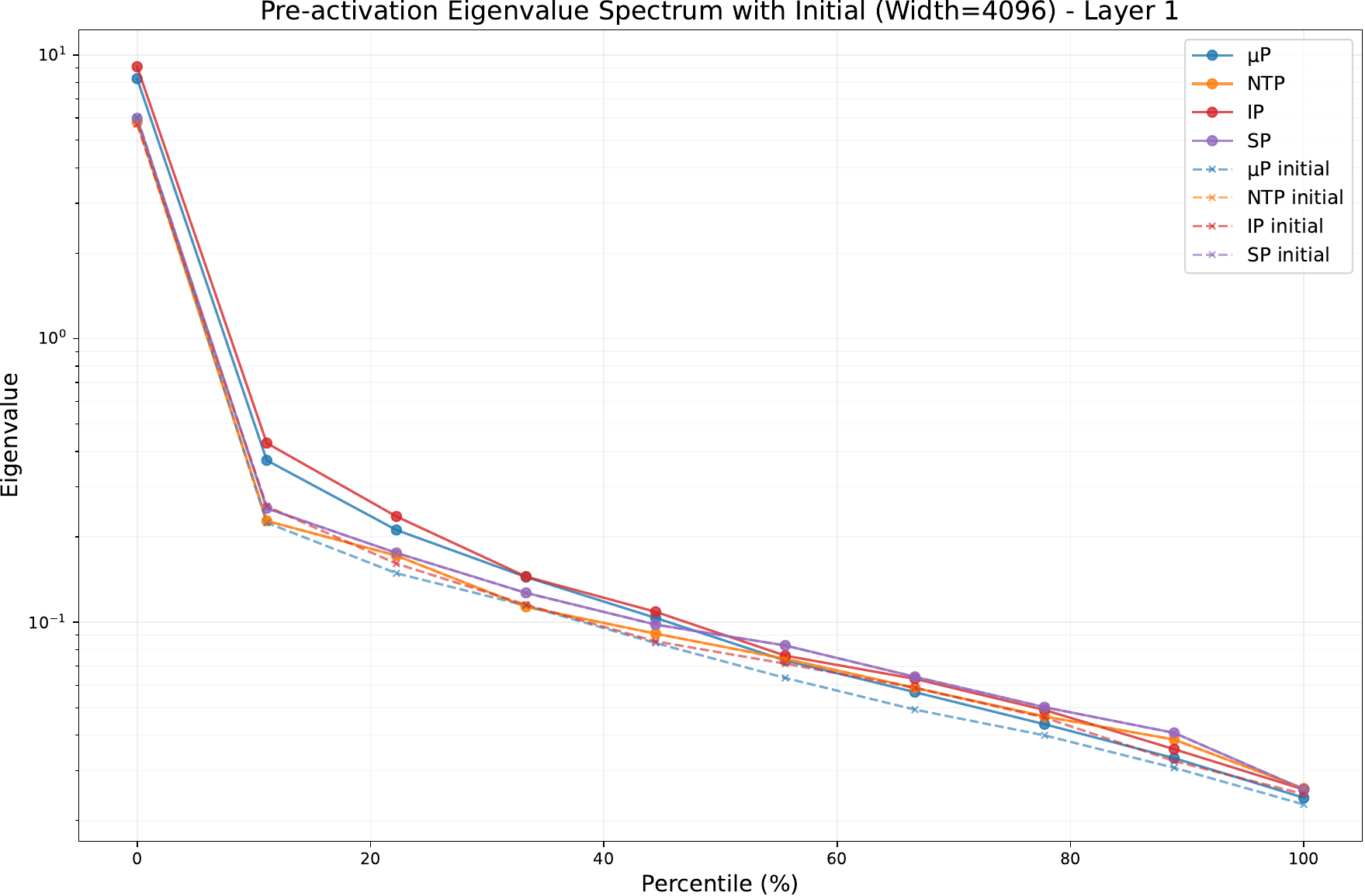}
    }
    \caption{Layer 1 Pre-activation Eigenvalue Spectrum Analysis. The eigenvalues of the feature Gram matrix are plotted from largest (0\% percentile) to smallest (100\% percentile). Higher eigenvalues, particularly at higher percentiles (right side), indicate greater feature diversity and less redundancy. $\mu$P maintains strong feature independence throughout training compared to other parameterizations.}
    \label{fig:layer1_pre_spectrum}
\end{figure}

\begin{figure}[H]
    \centering
    \subfigure[Pre-activation Eigenvalue Spectrum (Layer 2)]{
        \includegraphics[width=0.45\textwidth]{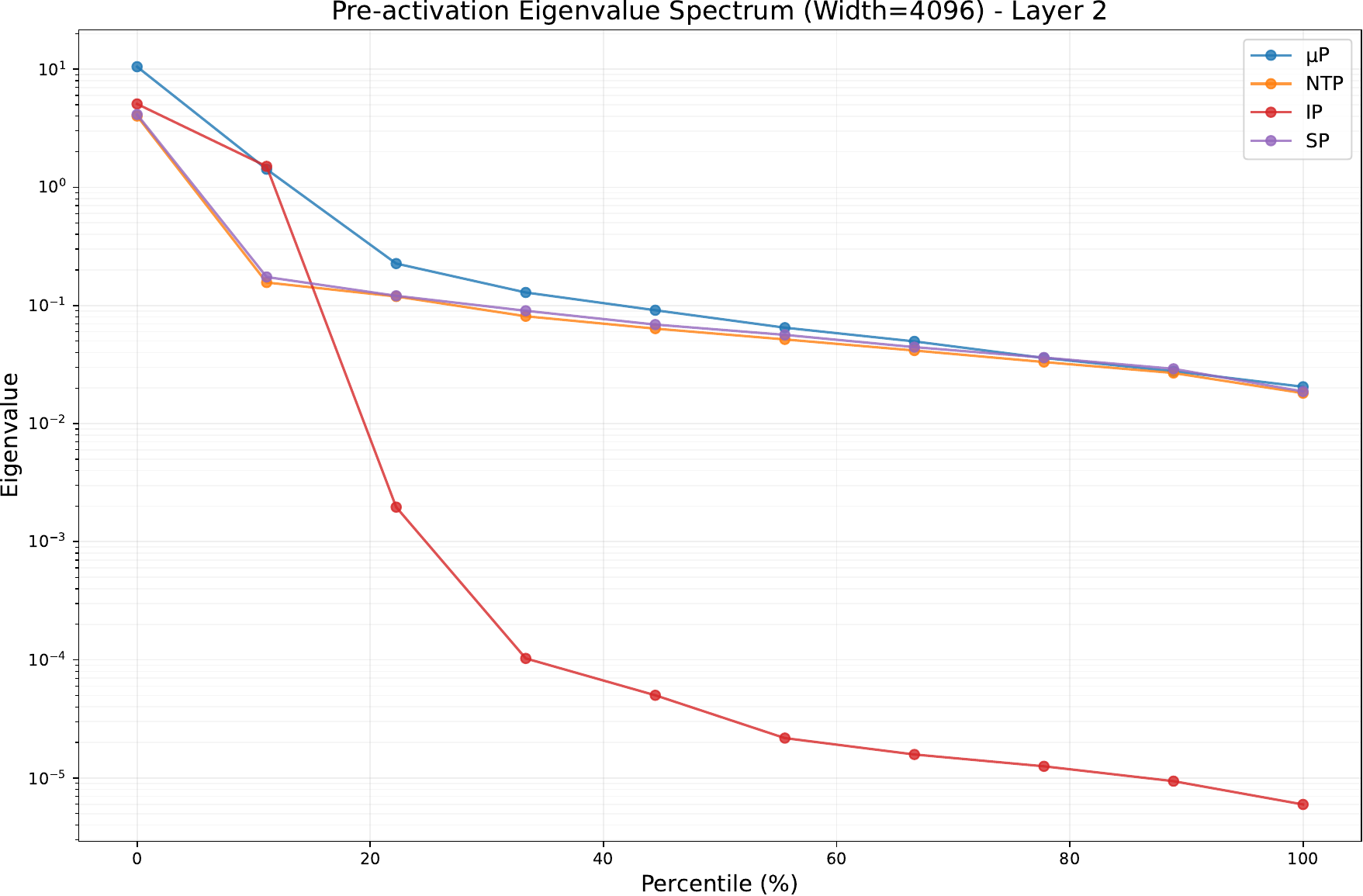}
    }
    \hfill
    \subfigure[Pre-activation Eigenvalue Spectrum with Initial (Layer 2)]{
        \includegraphics[width=0.45\textwidth]{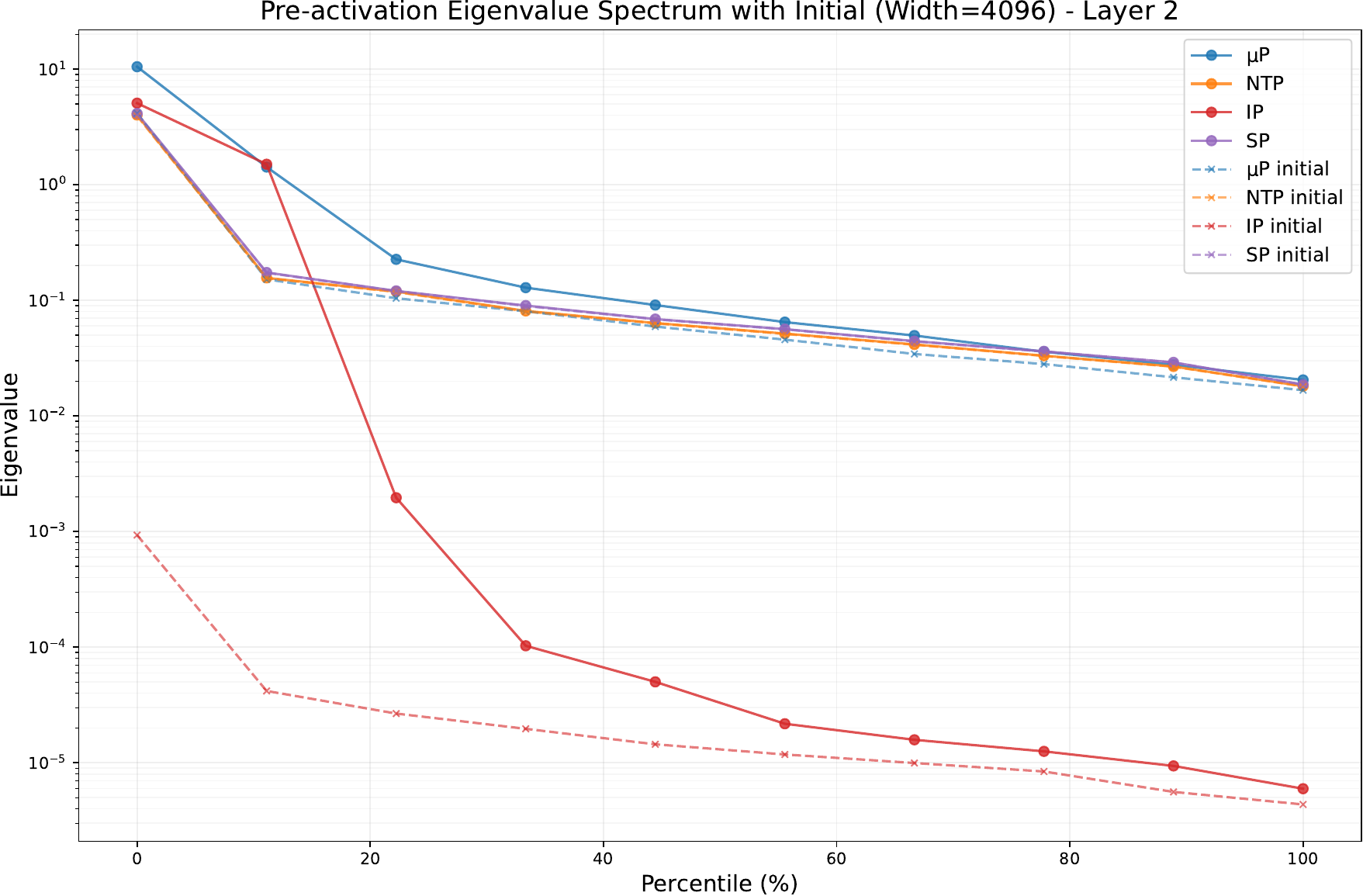}
    }
    \caption{Layer 2 Pre-activation Eigenvalue Spectrum Analysis. Eigenvalues of the feature Gram matrix are ranked by percentile on the x-axis. The smallest eigenvalues (highest percentiles, right side) are particularly important as they indicate linear independence among features. $\mu$P maintains significantly higher eigenvalues at all percentiles compared to NTP and SP, while IP experiences severe feature collapse above the 20th percentile.}
    \label{fig:layer2_pre_spectrum}
\end{figure}

\begin{figure}[H]
    \centering
    \subfigure[Pre-activation Eigenvalue Spectrum (Layer 3)]{
        \includegraphics[width=0.45\textwidth]{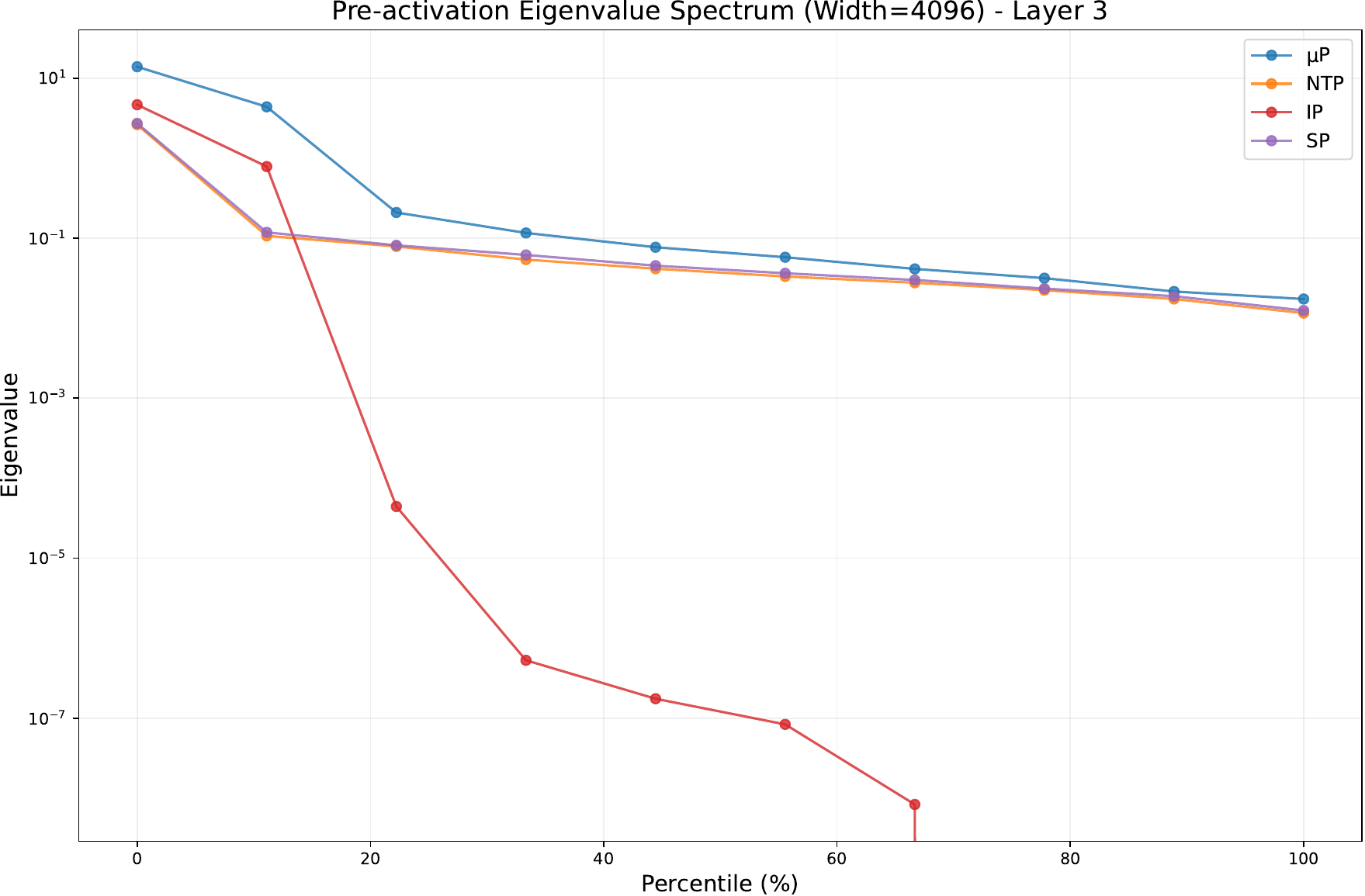}
    }
    \hfill
    \subfigure[Pre-activation Eigenvalue Spectrum with Initial (Layer 3)]{
        \includegraphics[width=0.45\textwidth]{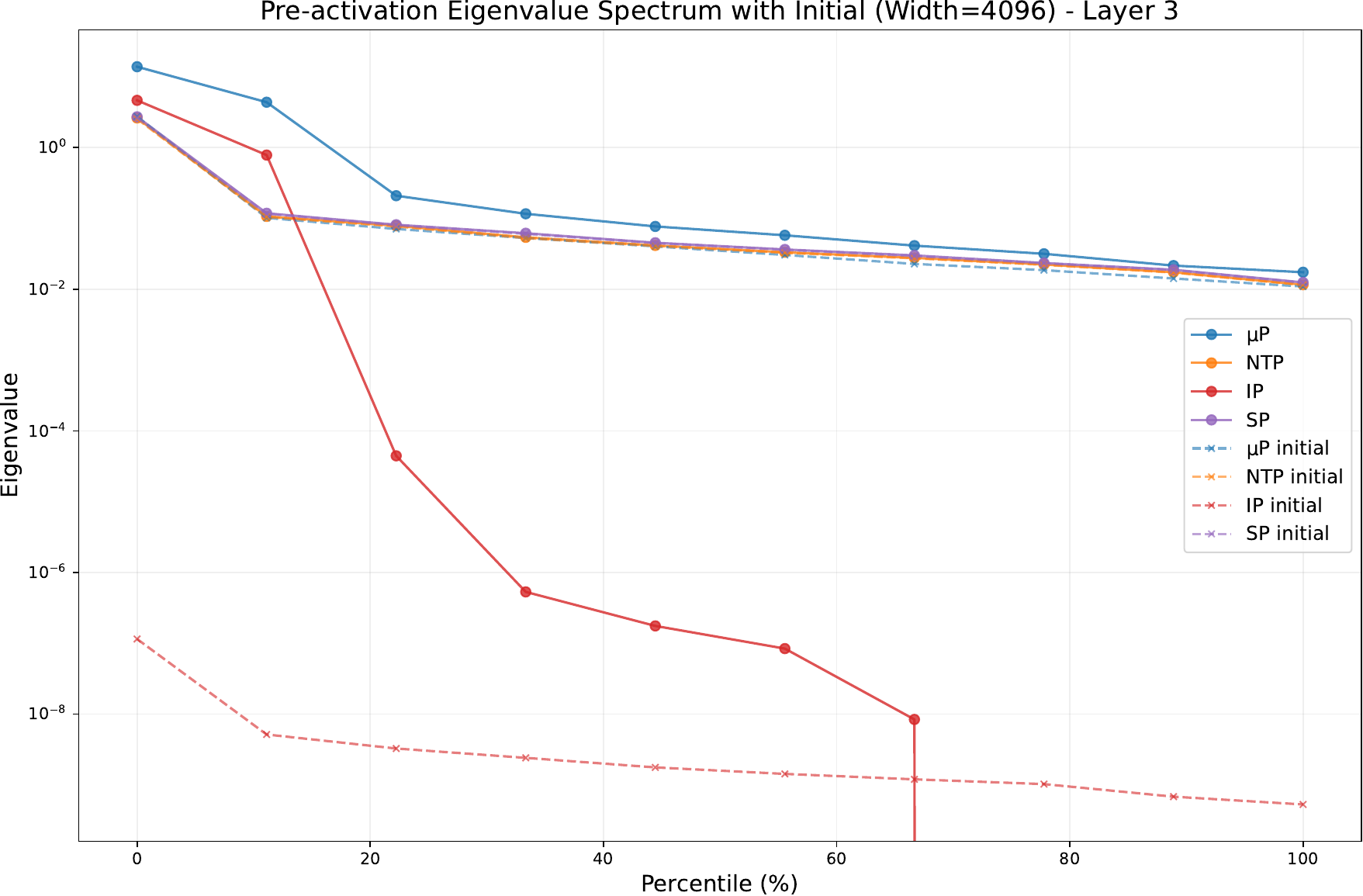}
    }
    \caption{Layer 3 Pre-activation Eigenvalue Spectrum Analysis. Eigenvalues are arranged by percentile rank, from largest (0\%) to smallest (100\%). At this deepest layer, the percentile-based analysis reveals the most dramatic differentiation between parameterizations. $\mu$P maintains substantial eigenvalues even at the highest percentiles (80-100\%), while IP experiences catastrophic collapse beyond the 20th percentile. The right plot shows initialization values (dashed lines) for comparison with final trained features.}
    \label{fig:layer3_pre_spectrum}
\end{figure}

\subsection{Feature Learning and Independence Metrics}\label{app:add3}

\subsubsection{Post-Activation Features}

\begin{figure}[H]
    \centering
    \subfigure[Feature Change ($\|x - x^0\|_2/\|x^0\|_2$)]{
        \includegraphics[width=0.3\textwidth, trim={0 0 0 0}, clip]{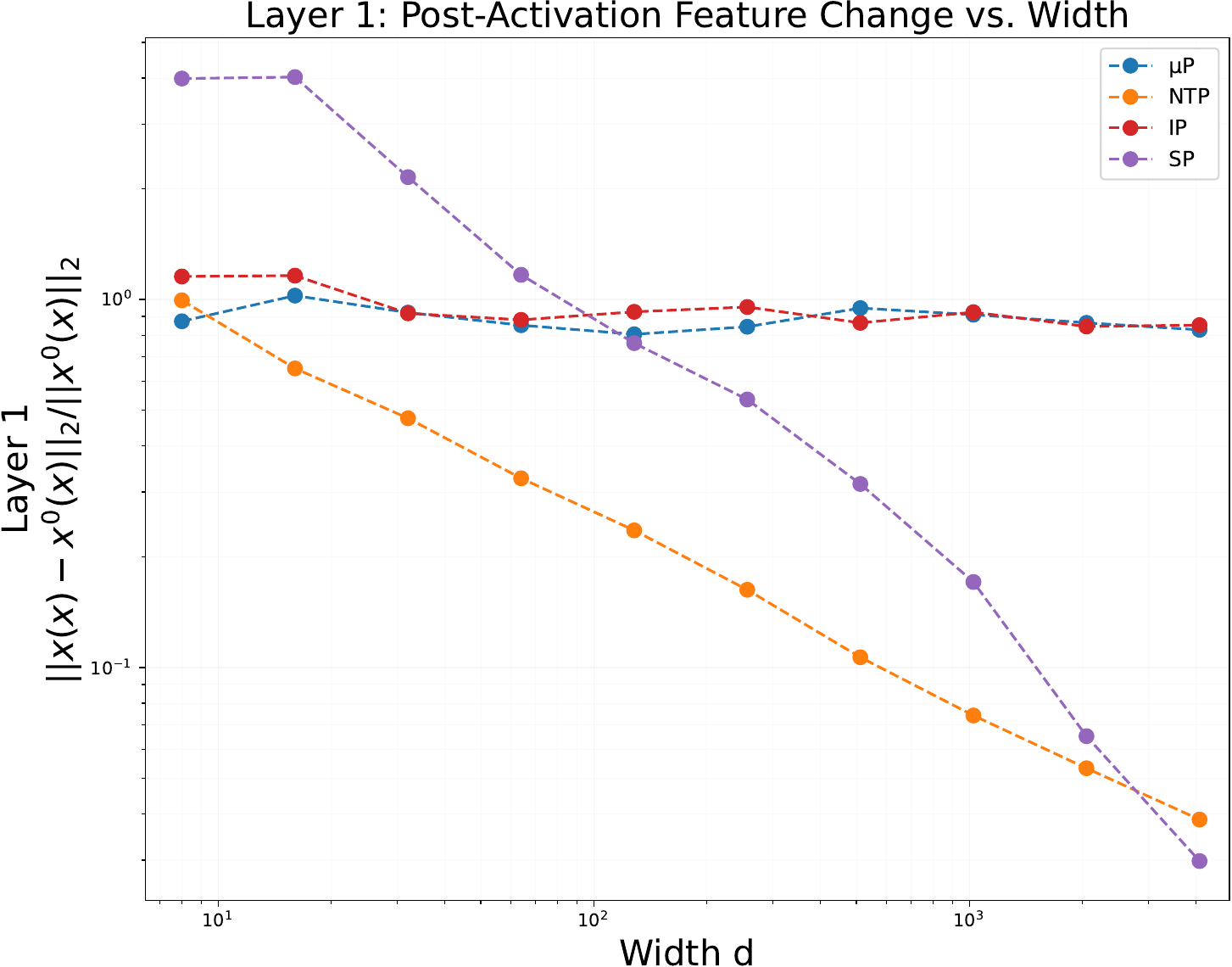}
        \label{fig:layer1_post_feature_change}
    }
    \hfill
    \subfigure[Feature Diversity (minimum eigenvalue of Gram matrix)]{
        \includegraphics[width=0.3\textwidth, trim={0 0 0 0}, clip]{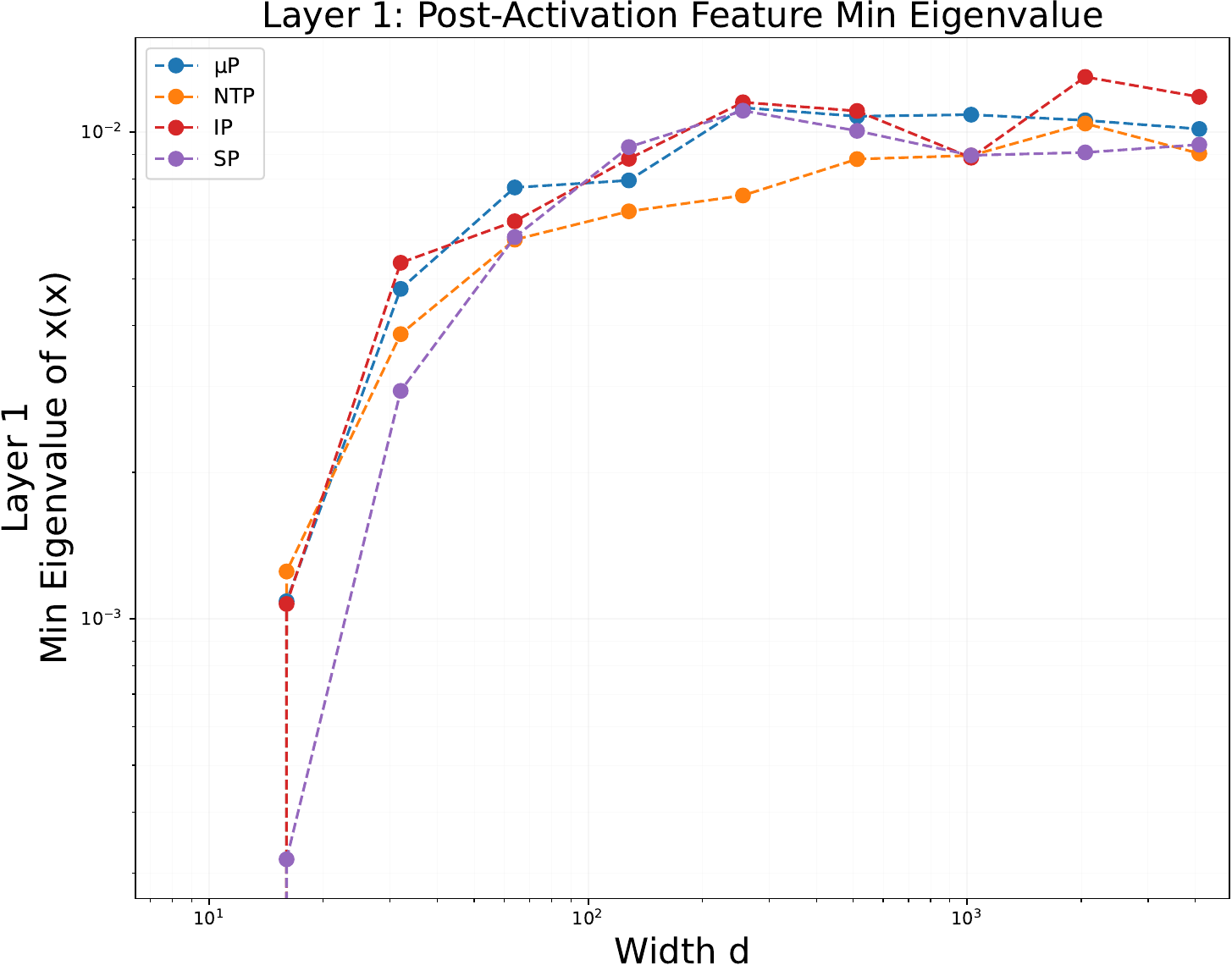}
        \label{fig:layer1_post_current_min_eig}
    }
    \hfill
    \subfigure[Minimum eigenvalue analysis by concatenating initial and final features]{
        \includegraphics[width=0.3\textwidth, trim={0 0 0 0}, clip]{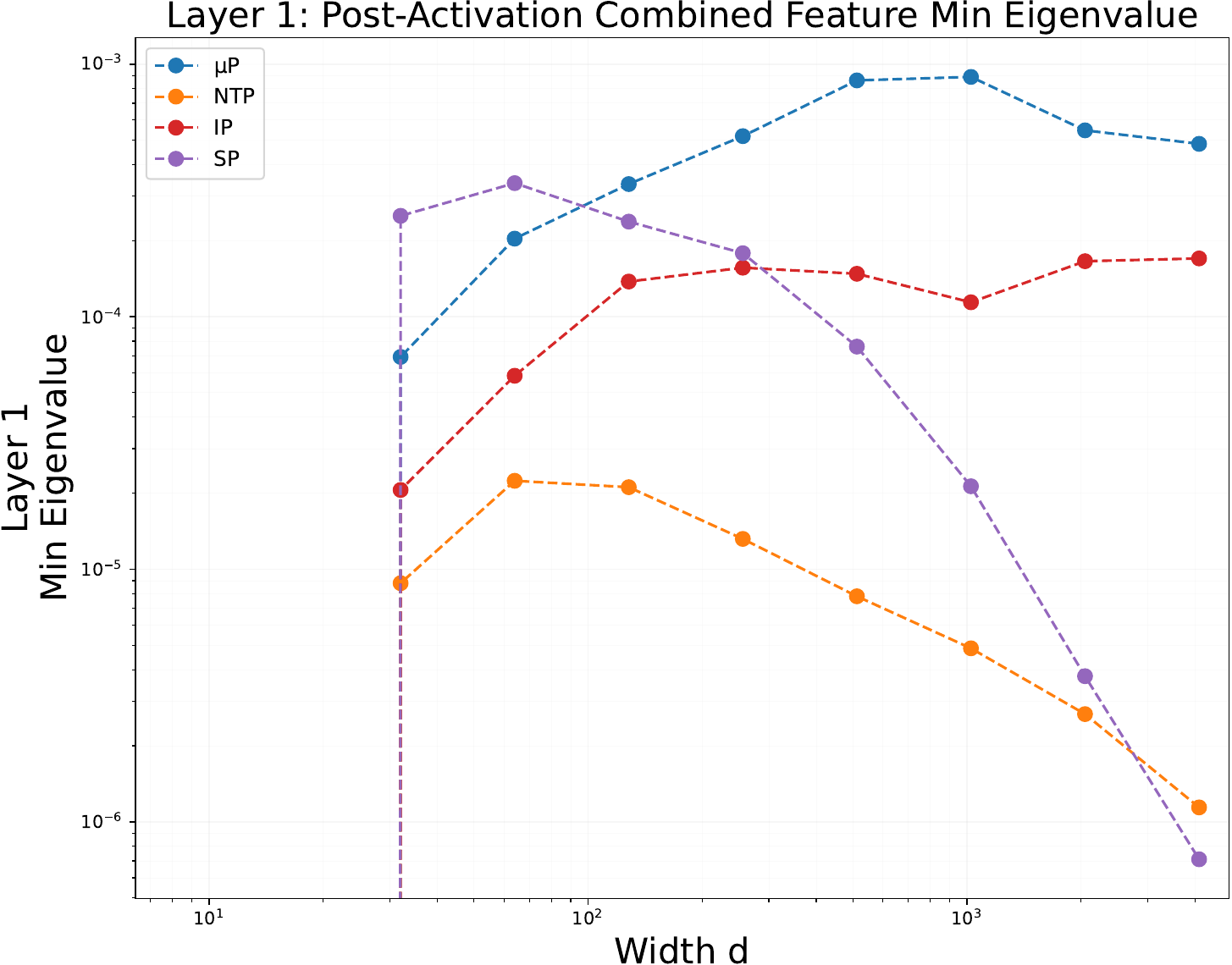}
        \label{fig:layer1_post_combined_min_eig}
    }
    \caption{Post-Activation feature learning behavior for SILU activation in Layer 1. \textbf{Left:} Feature change ($\|x - x^0\|_2/\|x^0\|_2$). \textbf{Middle:} Feature diversity (minimum eigenvalue of Gram matrix). \textbf{Right:} Minimum eigenvalue analysis by concatenating initial and final features.}
    \label{fig:layer1_post_analysis}
\end{figure}

\begin{figure}[H]
    \centering
    \subfigure[Feature Change ($\|x - x^0\|_2/\|x^0\|_2$)]{
        \includegraphics[width=0.3\textwidth, trim={0 0 0 0}, clip]{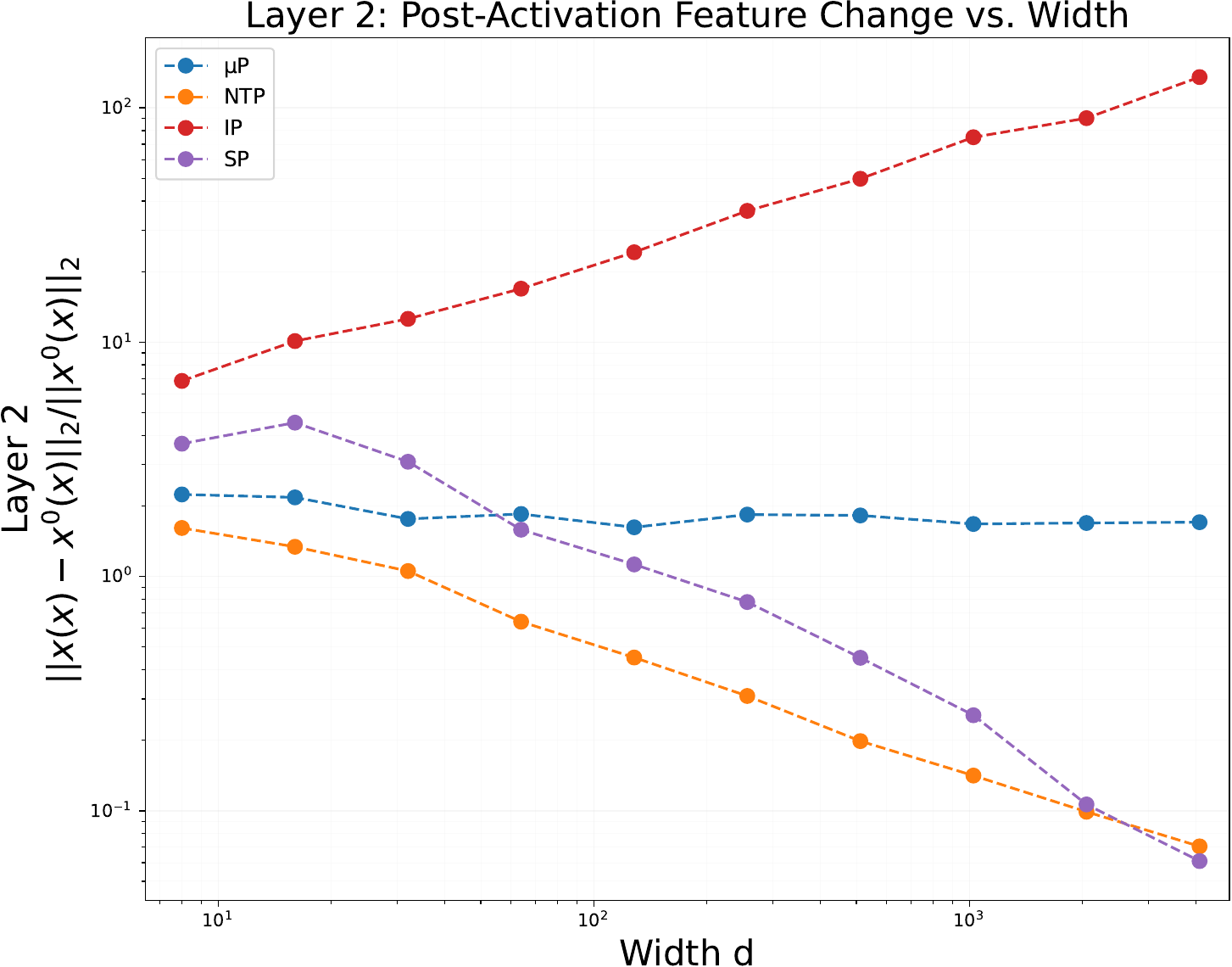}
        \label{fig:layer2_post_feature_change}
    }
    \hfill
    \subfigure[Feature Diversity (minimum eigenvalue of Gram matrix)]{
        \includegraphics[width=0.3\textwidth, trim={0 0 0 0}, clip]{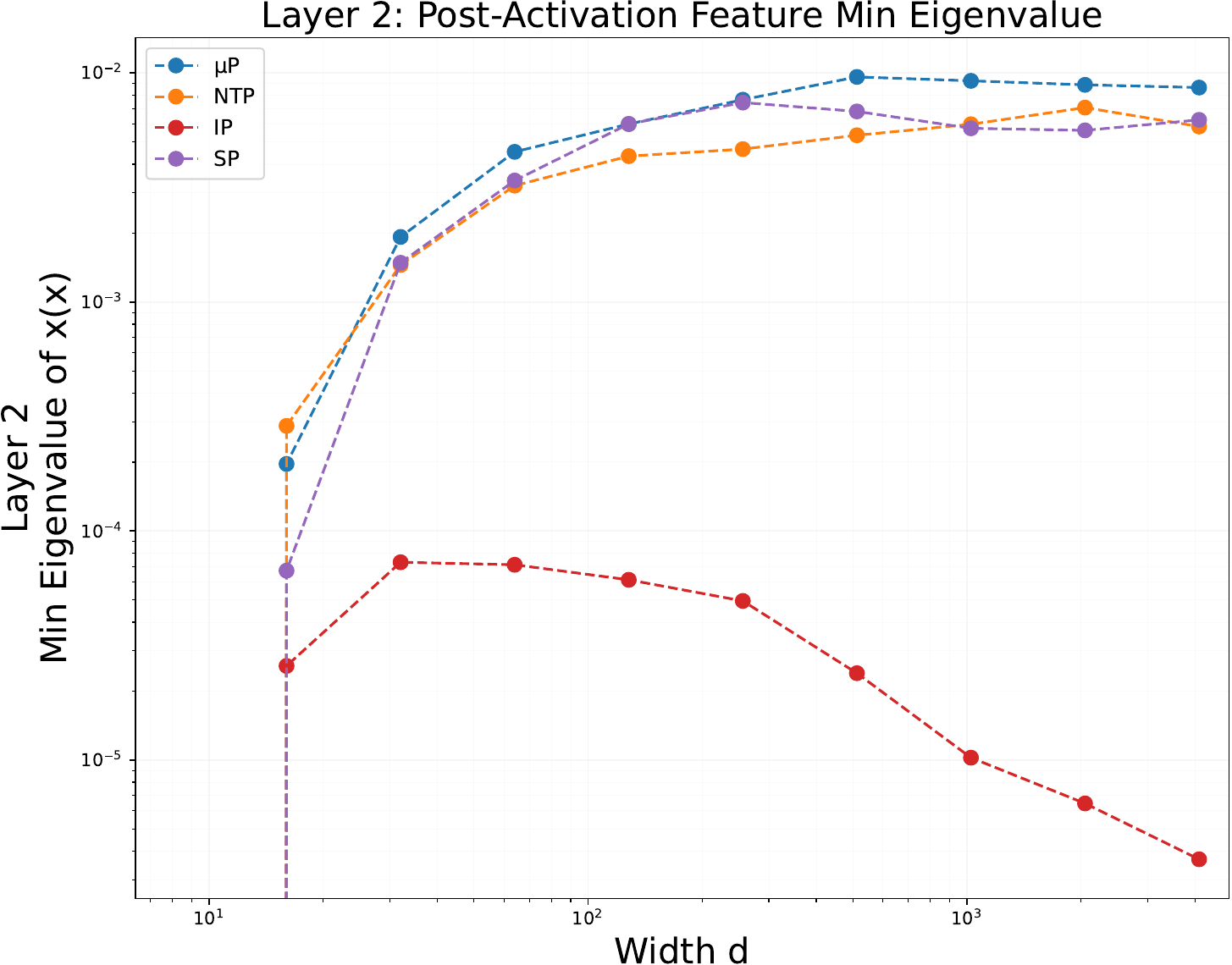}
        \label{fig:layer2_post_current_min_eig}
    }
    \hfill
    \subfigure[Minimum eigenvalue analysis by concatenating initial and final features]{
        \includegraphics[width=0.3\textwidth, trim={0 0 0 0}, clip]{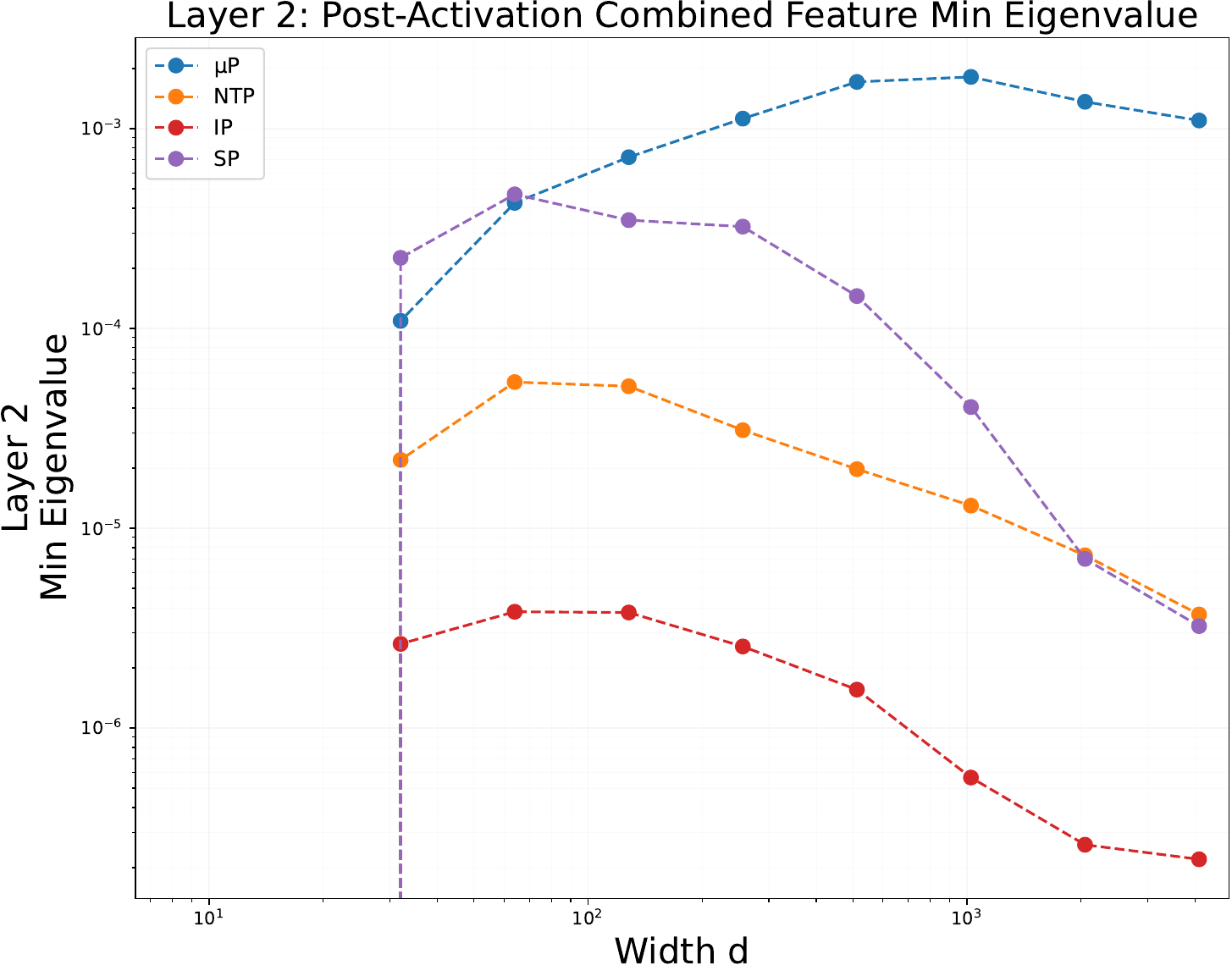}
        \label{fig:layer2_post_combined_min_eig}
    }
    \caption{Post-Activation feature learning behavior for SILU activation in Layer 2. \textbf{Left:} Feature change ($\|x - x^0\|_2/\|x^0\|_2$). \textbf{Middle:} Feature diversity (minimum eigenvalue of Gram matrix). \textbf{Right:} Minimum eigenvalue analysis by concatenating initial and final features.}
    \label{fig:layer2_post_analysis}
\end{figure}

\begin{figure}[H]
    \centering
    \subfigure[Feature Change ($\|x - x^0\|_2/\|x^0\|_2$)]{
        \includegraphics[width=0.3\textwidth, trim={0 0 0 0}, clip]{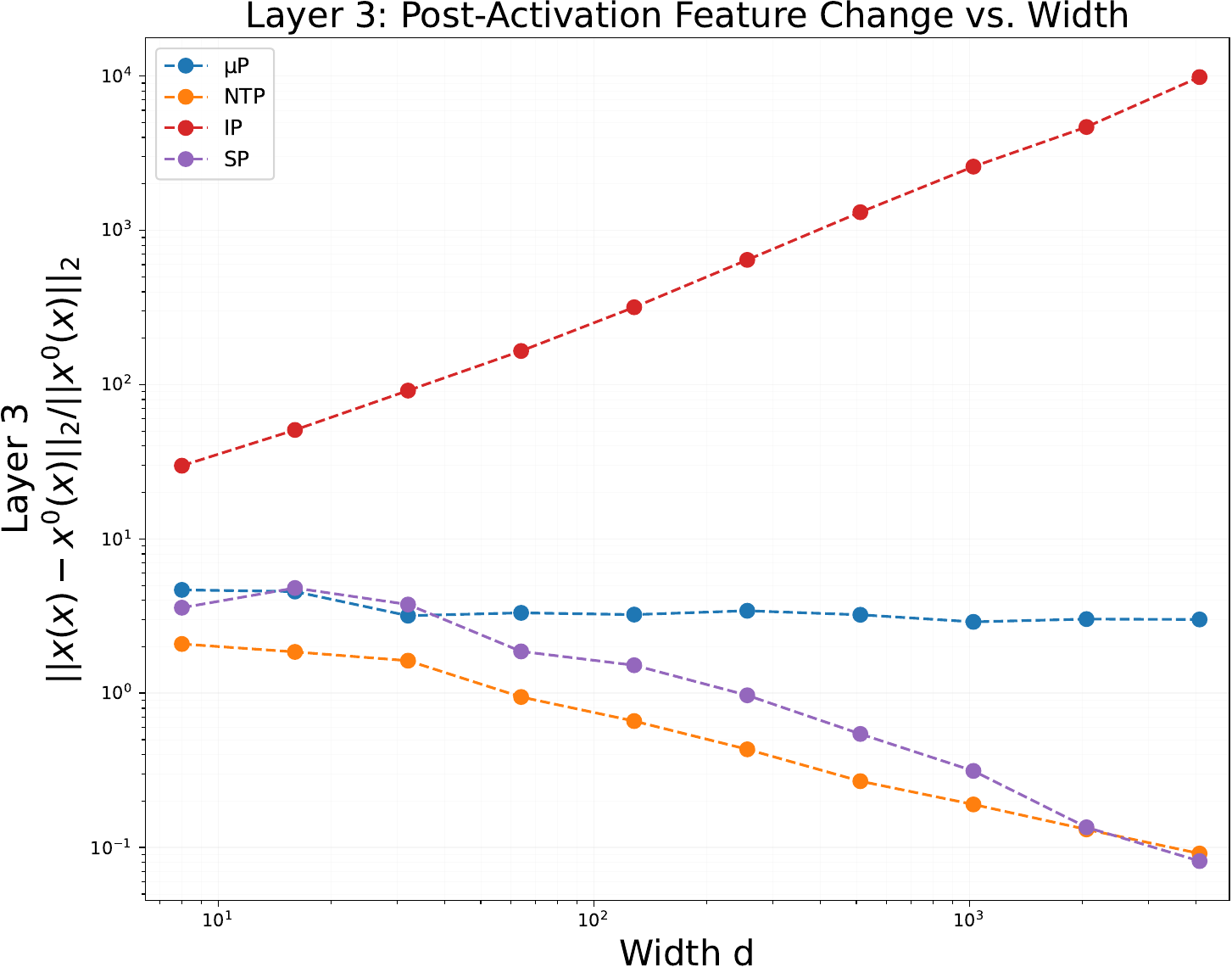}
        \label{fig:layer3_post_feature_change}
    }
    \hfill
    \subfigure[Feature Diversity (minimum eigenvalue of Gram matrix)]{
        \includegraphics[width=0.3\textwidth, trim={0 0 0 0}, clip]{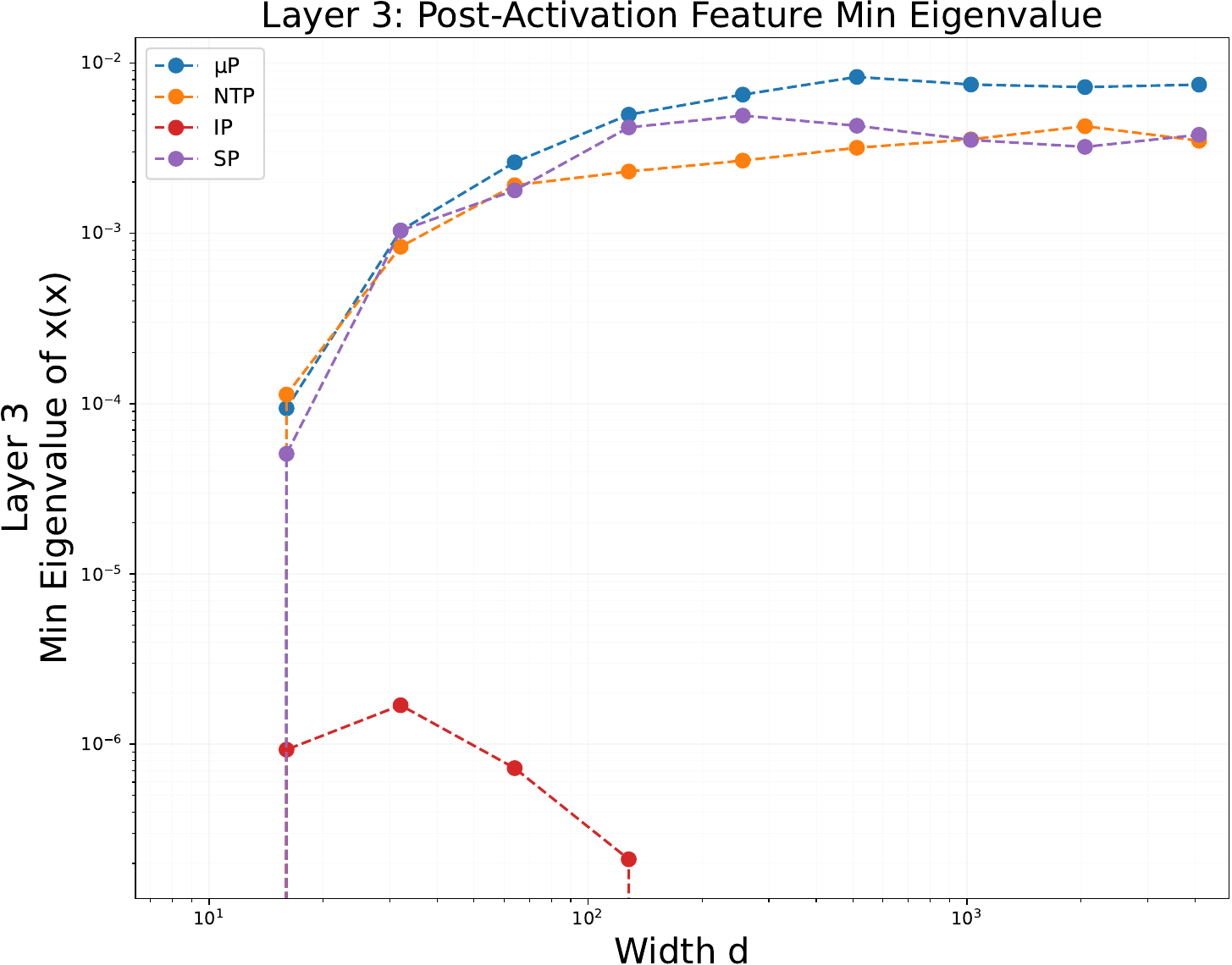}
        \label{fig:layer3_post_current_min_eig}
    }
    \hfill
    \subfigure[Minimum eigenvalue analysis by concatenating initial and final features]{
        \includegraphics[width=0.3\textwidth, trim={0 0 0 0}, clip]{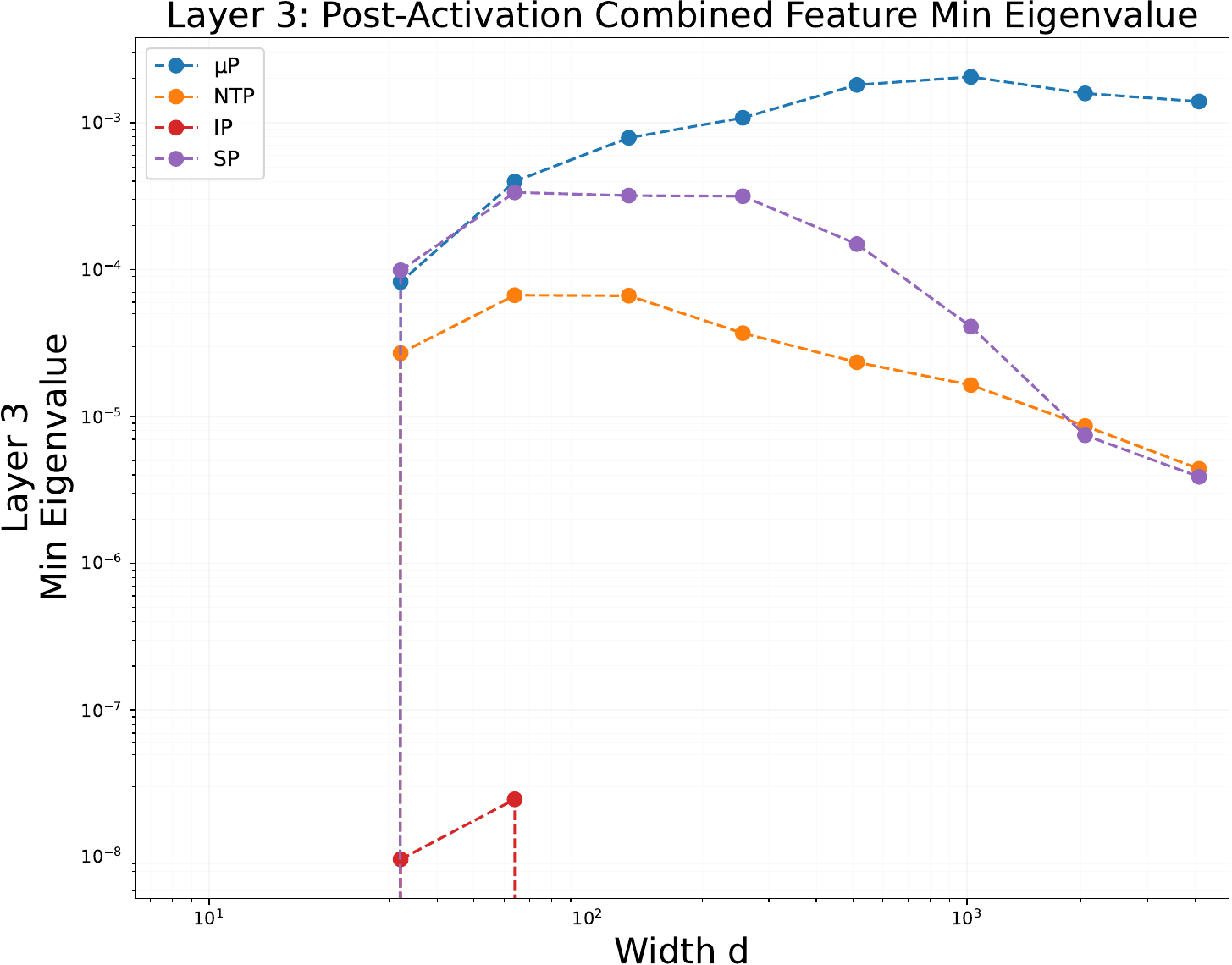}
        \label{fig:layer3_post_combined_min_eig}
    }
    \caption{Post-Activation feature learning behavior for SILU activation in Layer 3. \textbf{Left:} Feature change ($\|x - x^0\|_2/\|x^0\|_2$). \textbf{Middle:} Feature diversity (minimum eigenvalue of Gram matrix). \textbf{Right:} Minimum eigenvalue analysis by concatenating initial and final features.}
    \label{fig:layer3_post_analysis}
\end{figure}

\subsubsection{Pre-Activation Features}

\begin{figure}[H]
    \centering
    \subfigure[Feature Change ($\|h - h^0\|_2/\|h^0\|_2$)]{
        \includegraphics[width=0.3\textwidth, trim={0 0 0 0}, clip]{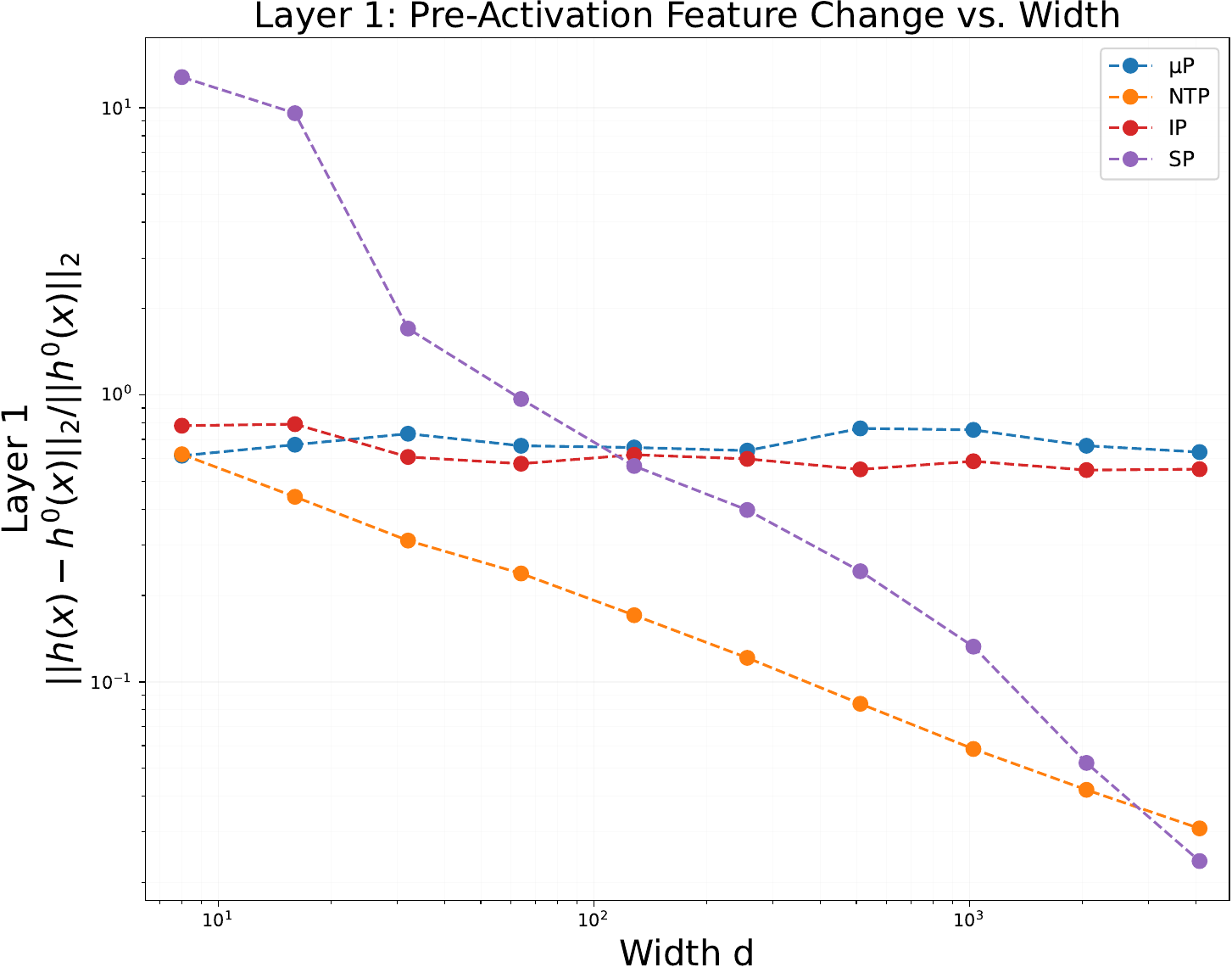}
        \label{fig:layer1_feature_change}
    }
    \hfill
    \subfigure[Feature Diversity (minimum eigenvalue of Gram matrix)]{
        \includegraphics[width=0.3\textwidth, trim={0 0 0 0}, clip]{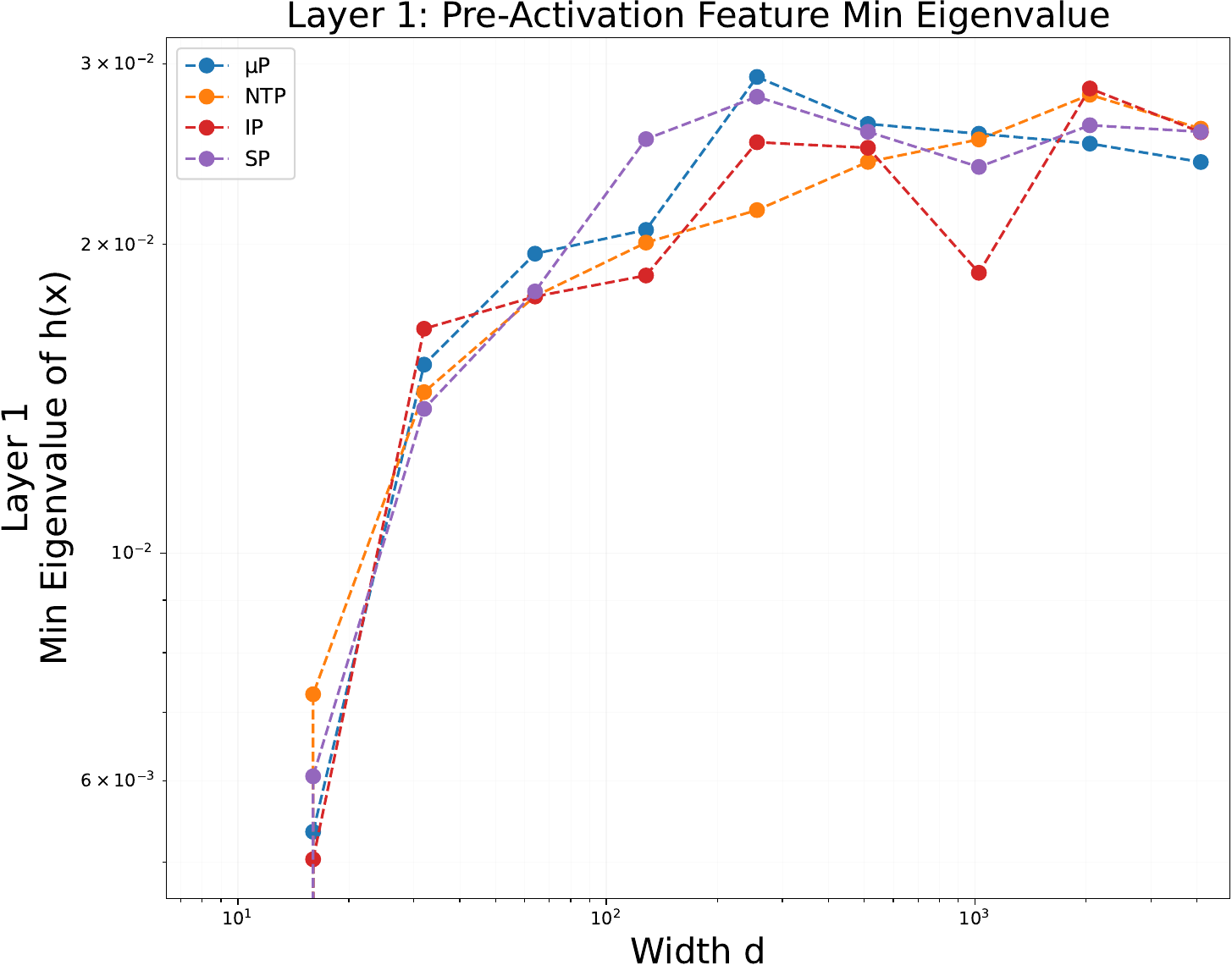}
        \label{fig:layer1_current_min_eig}
    }
    \hfill
    \subfigure[Minimum eigenvalue analysis by concatenating initial and final features]{
        \includegraphics[width=0.3\textwidth, trim={0 0 0 0}, clip]{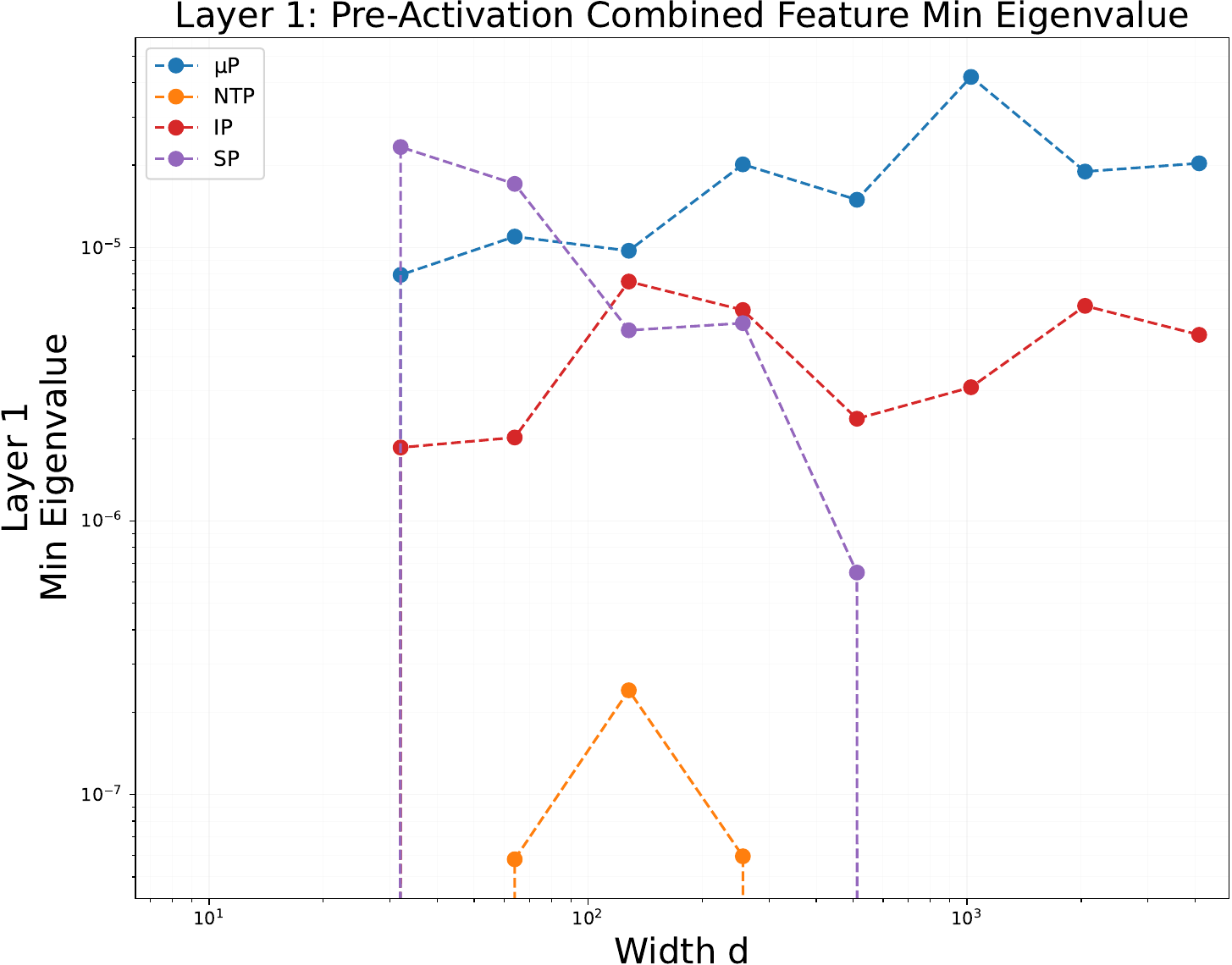}
        \label{fig:layer1_combined_min_eig}
    }
    \caption{Pre-Activation feature learning behavior for SILU activation in Layer 1. \textbf{Left:} Feature change ($\|h - h^0\|_2/\|h^0\|_2$). \textbf{Middle:} Feature diversity (minimum eigenvalue of Gram matrix). \textbf{Right:} Minimum eigenvalue analysis by concatenating initial and final features.}
    \label{fig:layer1_analysis}
\end{figure}

\begin{figure}[H]
    \centering
    \subfigure[Feature Change ($\|h - h^0\|_2/\|h^0\|_2$)]{
        \includegraphics[width=0.3\textwidth, trim={0 0 0 0}, clip]{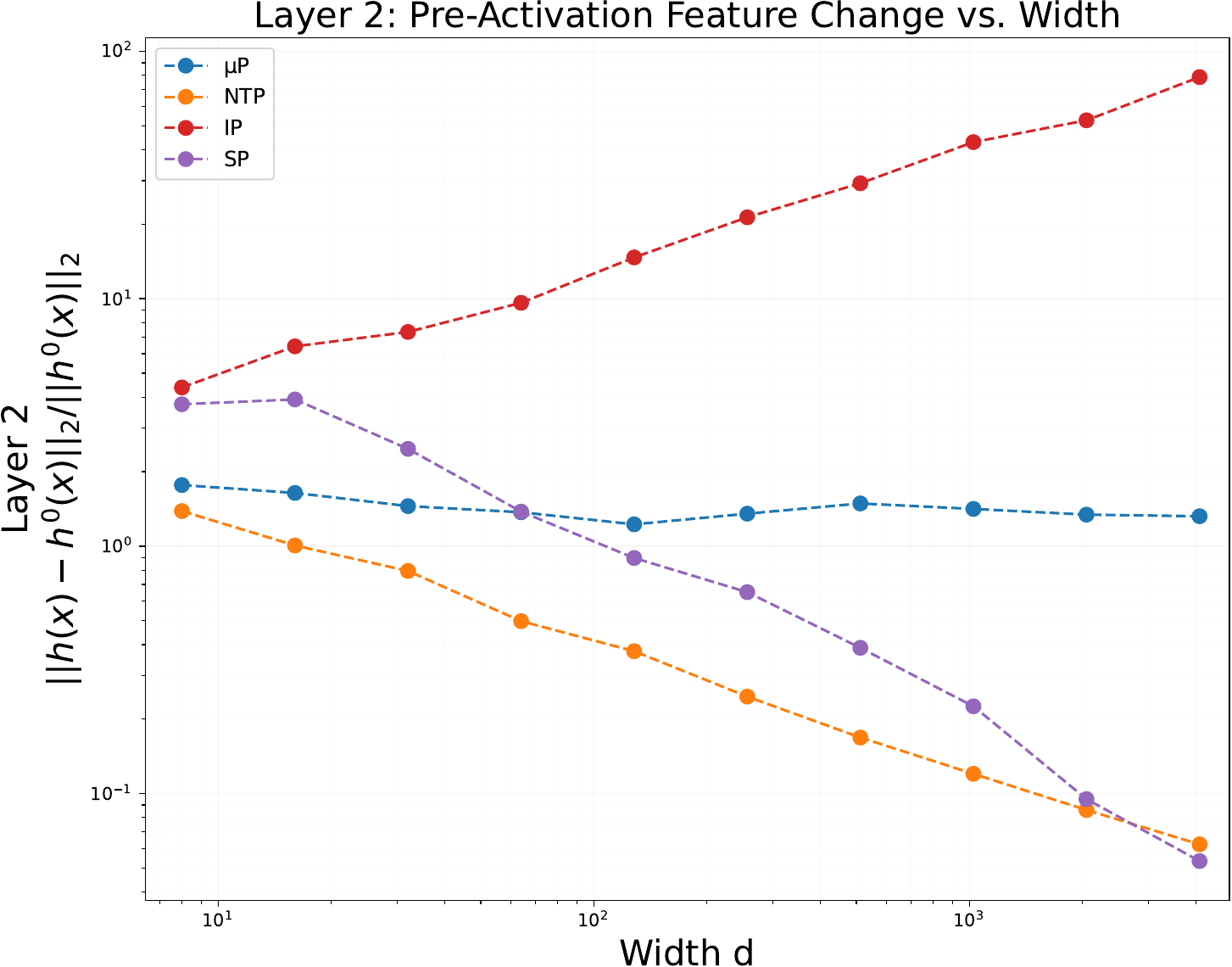}
        \label{fig:layer2_feature_change}
    }
    \hfill
    \subfigure[Feature Diversity (minimum eigenvalue of Gram matrix)]{
        \includegraphics[width=0.3\textwidth, trim={0 0 0 0}, clip]{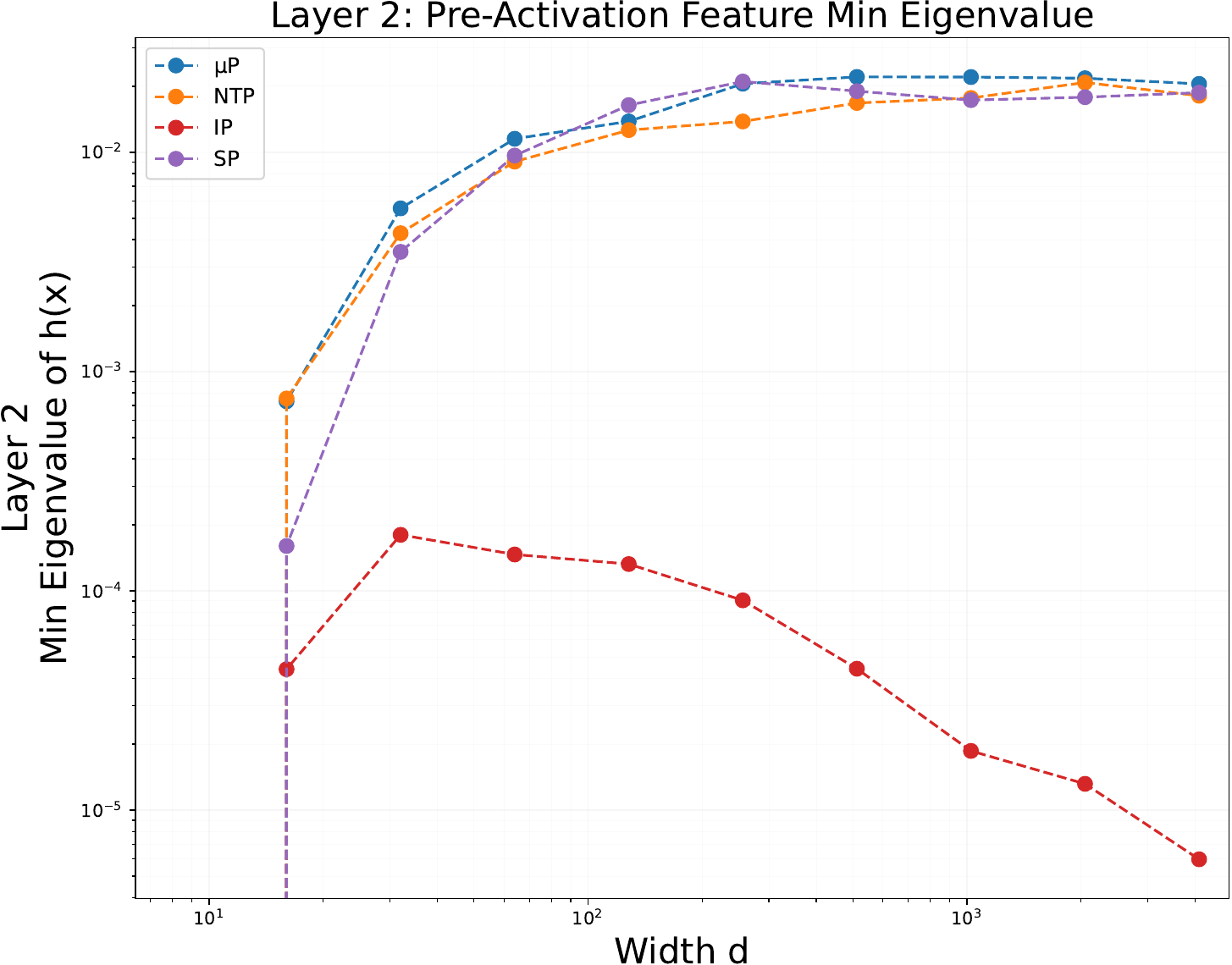}
        \label{fig:layer2_current_min_eig}
    }
    \hfill
    \subfigure[Minimum eigenvalue analysis by concatenating initial and final features]{
        \includegraphics[width=0.3\textwidth, trim={0 0 0 0}, clip]{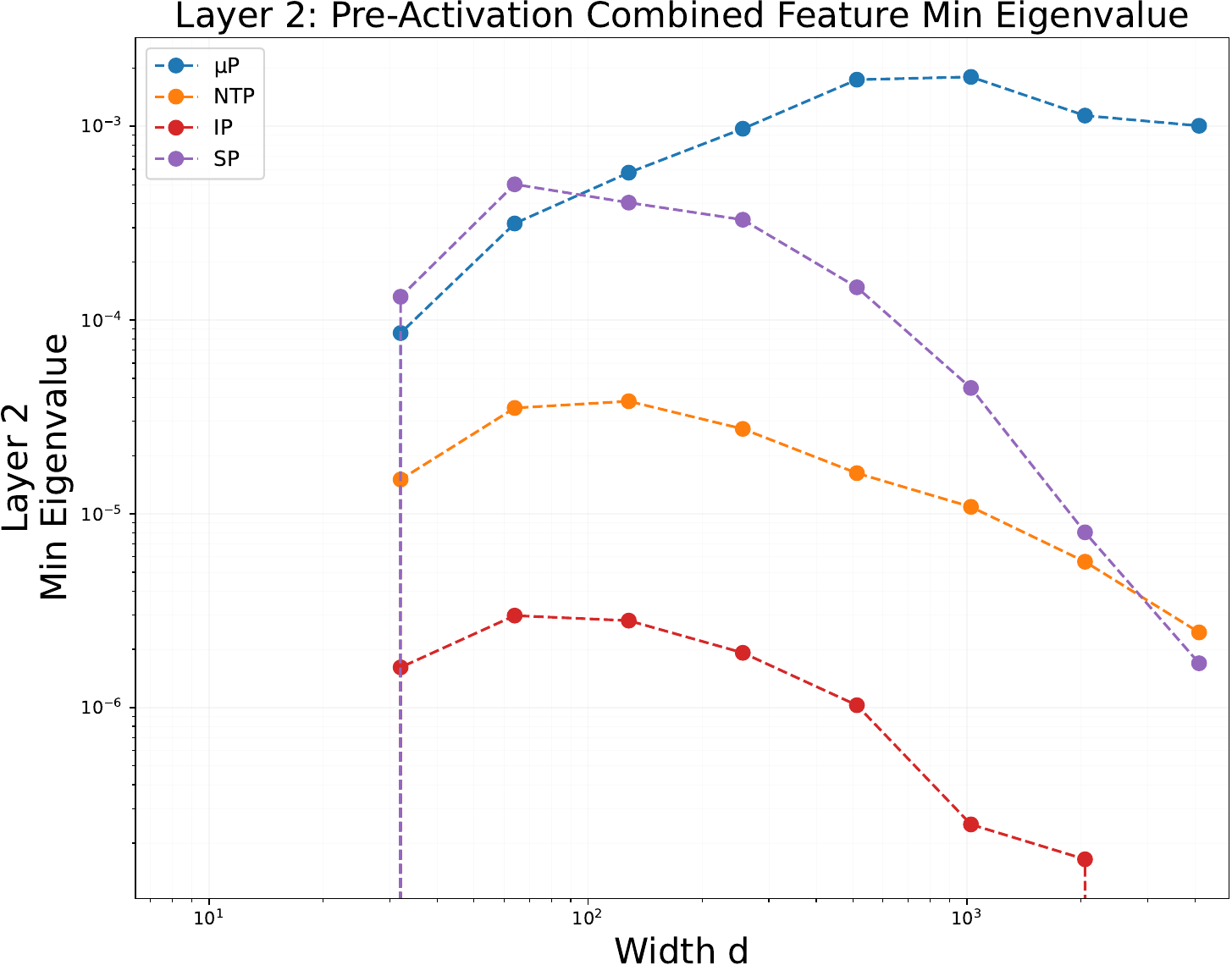}
        \label{fig:layer2_combined_min_eig}
    }
    \caption{Pre-Activation feature learning behavior for SILU activation in Layer 2. \textbf{Left:} Feature change ($\|h - h^0\|_2/\|h^0\|_2$). \textbf{Middle:} Feature diversity (minimum eigenvalue of Gram matrix). \textbf{Right:} Minimum eigenvalue analysis by concatenating initial and final features.}
    \label{fig:layer2_analysis}
\end{figure}

\begin{figure}[H]
    \centering
    \subfigure[Feature Change ($\|h - h^0\|_2/\|h^0\|_2$)]{
        \includegraphics[width=0.3\textwidth, trim={0 0 0 0}, clip]{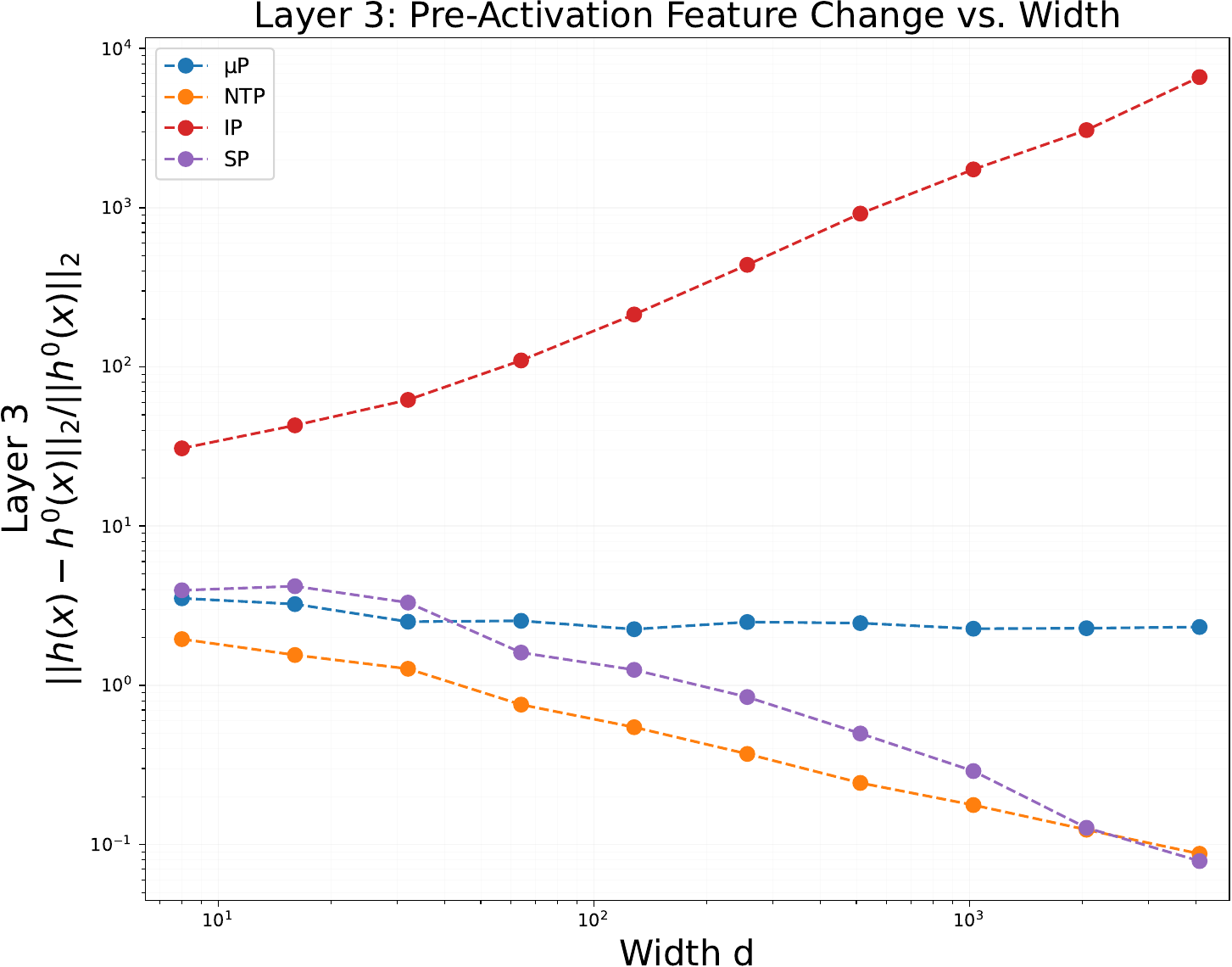}
        \label{fig:layer3_feature_change}
    }
    \hfill
    \subfigure[Feature Diversity (minimum eigenvalue of Gram matrix)]{
        \includegraphics[width=0.3\textwidth, trim={0 0 0 0}, clip]{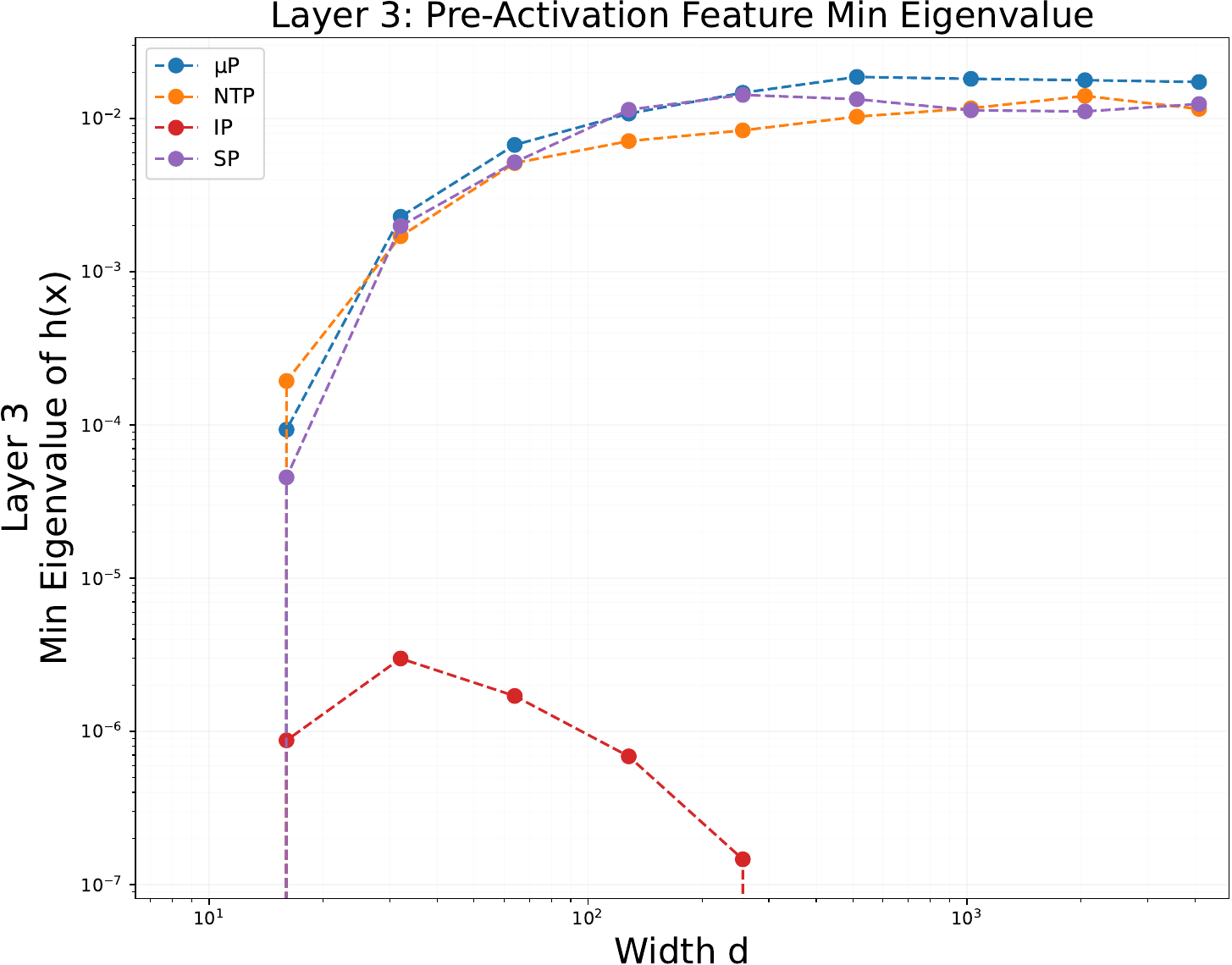}
        \label{fig:layer3_current_min_eig}
    }
    \hfill
    \subfigure[Minimum eigenvalue analysis by concatenating initial and final features]{
        \includegraphics[width=0.3\textwidth, trim={0 0 0 0}, clip]{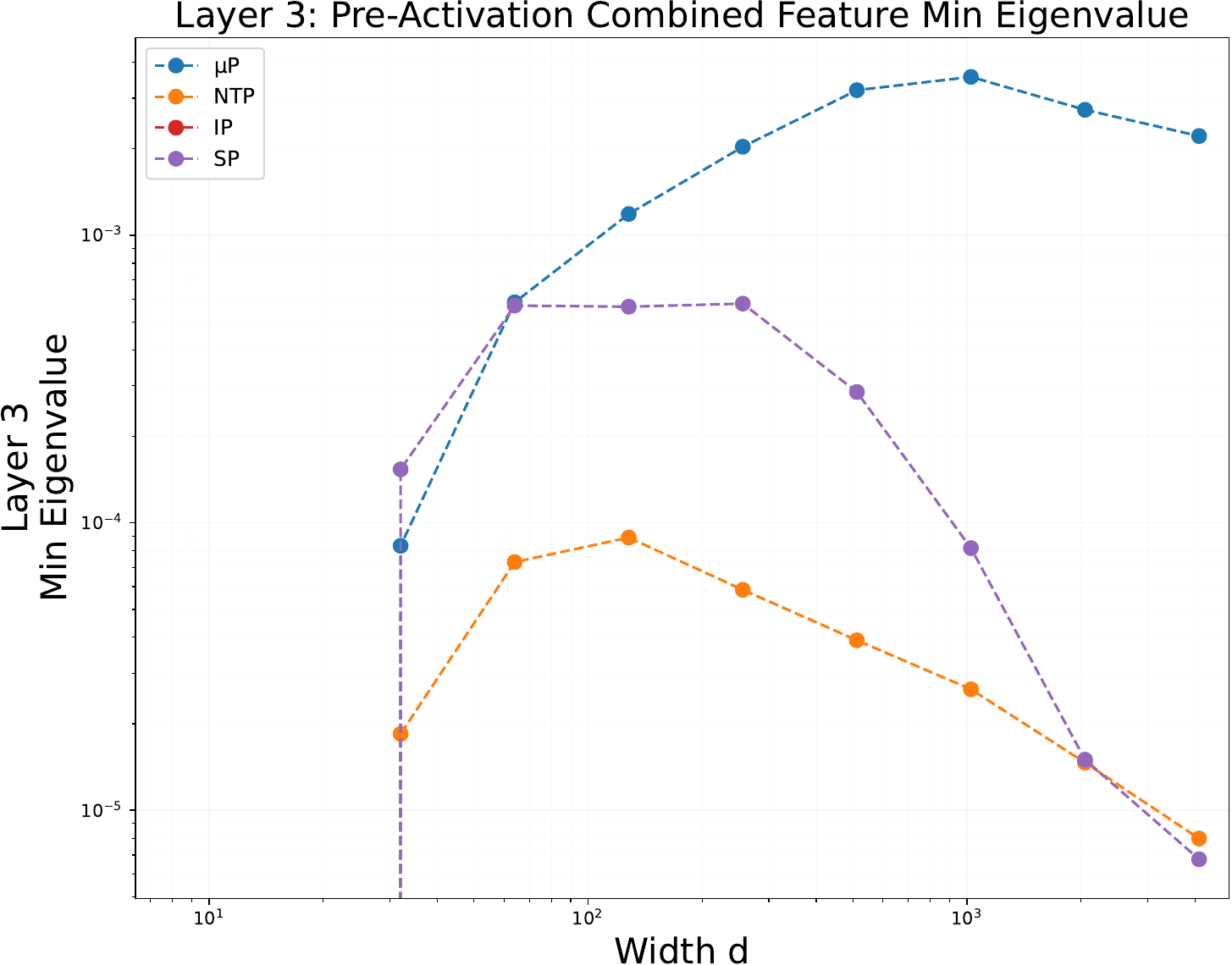}
        \label{fig:layer3_combined_min_eig}
    }
    \caption{Pre-Activation feature learning behavior for SILU activation in Layer 3. \textbf{Left:} Feature change ($\|h - h^0\|_2/\|h^0\|_2$). \textbf{Middle:} Feature diversity (minimum eigenvalue of Gram matrix). \textbf{Right:} Minimum eigenvalue analysis by concatenating initial and final features.}
    \label{fig:layer3_analysis}
\end{figure}

\section{More Details for $\mu$P Parametrization}\label{sec:fullmup}
Formally, the MLP definition \citep[Table~1]{yang2020feature} in this section is
\begin{align}
h^1 = W\xi\in\RR^n, x^l = \phi(h^l)\in\RR^n, h^{l+1} = W^{l+1}x^l\in\RR^n,f(\xi) = W^{L+1}x^L,    
\end{align}
where $L > 1$ is any positive integer and $l\in\{1,\dots, L-1\}$. Then the $\mu$P for this $L$-hidden-layer MLP is defined as follows \citep{yang2020feature}.

\begin{enumerate}
\item Initial weight matrices in the middle layer: $W_{0}^{2},\ldots,W_{0}^{L}$, with each coordinates $(W_{0}^{l})_{\alpha\beta}\sim\cN(0,1/n)$.
\item Initial weight matrix in the input and output layers: input layer matrix $W_{0}^{1}\in\RR^{n\times d}$ and output layer matrix $\widehat{W}_{0}^{L+1}\coloneqq W_{0}^{L+1}n\in\RR^{1\times n}$, with each coordinate $(W_{0}^{1})_{\alpha\beta},(\widehat{W}_{0}^{L+1})_{\alpha\beta}\sim\cN(0,1)$.
\item Initial model outputs: we define the scalars $f_{0}(\xi) \coloneqq W_{0}^{L+1}x_{0}^{L}(\xi)$ for any input $\xi$.
\end{enumerate}

Assuming the same \Cref{assumption:good+analytical} for $\phi$, we can characterize the $Z$ variables in the \emph{infinite-width} training dynamics of {SGD} for this $L$-hidden-layer MLP similarly as follows \citep{yang2020feature}.

\begin{enumerate}
\item For $z\in\{x^{l},h^{l}\}_{l}$, we have
\begin{align}
Z^{z_{t}(\xi)}=Z^{z_{0}(\xi)}+Z^{\delta z_{1}(\xi)}+\cdots Z^{\delta z_{t}(\xi)}
\end{align}
\item For $l\in[L],x=x^{l},h=h^{l}$, we have
\begin{align}
Z^{\delta x_{t}(\xi)}=\phi(Z^{h_{t}(\xi)})-\phi(Z^{h_{t-1}(\xi)}).
\end{align}
\item For $h=h^{1}$, we have
\begin{align*}
Z^{\delta h_{t}(\xi)}=-\sum_{i \in [m]}\eta\mathring{\chi}_{t-1, i}\xi_{i}^{\top}\xi Z^{dh_{t-1}(\xi_i)}.    
\end{align*}
\item For $2 \leq l \leq L,h=h^{l},x=x^{l-1},W=W^{l}$, we have
\begin{align*}
Z^{\delta h_{t}(\xi)}&= \hat{Z}^{W_{0}\delta x_{t}(\xi)}+\dot{Z}^{W_{0}\delta x_{t}(\xi)} - \eta\sum_{s=0}^{t-1}\sum_{i\in[m]}\mathring{\chi}_{s}Z^{dh_{s}(\xi_i)}\EE Z^{x_{s}(\xi_i)}Z^{x_{t}(\xi)}
\end{align*}
where 
\begin{align*}
 \dot{Z}^{W_{0}\delta x_{t}(\xi)}= \sum_{i\in[m]}\sum_{s=0}^{t-1}Z^{dh_{s}(\xi_i)}\EE\frac{\partial Z^{\delta x_{t}(\xi)}}{\partial\hat{Z}^{W_{0}^{\top}dh_{s}(\xi_i)}}.
\end{align*}
\item For last layer weight
\begin{align}
Z^{\widehat{W}_{t}^{L+1}}=Z^{\widehat{W}_{0}^{L+1}} -\eta\sum_{s=0}^{t-1}\sum_{i\in[m]}\mathring{\chi}_{s,i}Z^{x_{s}^{L}(\xi_i)}
\end{align}
\item The output deltas have limits 
\begin{align}
\delta\mathring{f}_{t}(\xi)= \EE Z^{\delta W_{t}^{L+1}}Z^{x_{t}^{L}(\xi)} + \EE Z^{\widehat{W}_{t-1}^{L+1}}Z^{\delta x_{t}^{L}(\xi)}
\end{align}
and
\begin{align*}
\mathring{f}_{t}(\xi)=\delta\mathring{f}_{1}(\xi)+\cdots+\delta\mathring{f}_{t}(\xi).    
\end{align*}
\item For gradients:
\begin{align}
Z^{dx_{t}^{L}(\xi)} & =Z^{\widehat{W}_{t}^{L+1}} \label{eq:grad_last}\\
Z^{dh_{t}^{l}(\xi)} & =Z^{dx_{t}^{l}(\xi)}\phi'(Z^{h_{t}^{l}(\xi)})\\
Z^{dx_{t}^{l-1}(\xi)} & =\hat Z^{ W_{0}^{l\top}dh_{t}^{l}(\xi)} + \dot Z^{ W_{0}^{l\top}dh_{t}^{l}(\xi)}-\eta\sum_{s=0}^{t-1}\sum_{i\in[m]}\mathring{\chi}_{s, i}Z^{x_{s}^{l-1}(\xi_i)}\EE Z^{dh_{s}^{l}(\xi_i)}Z^{dh_{t}^{l}(\xi)} 
\end{align}
where
\begin{align*}
    \dot Z^{ W_{0}^{l\top}dh_{t}^{l}(\xi)} = \sum_{i\in[m]}\sum_{s=0}^{t-1} Z^{x^{l-1}_s(\xi_i)} \EE \frac{ \partial Z^{dh_{t}^{l}(\xi)}}{ \partial \hat{Z}^{W_0^l x^{l-1}_s(\xi_i)}}.
\end{align*}
\item Loss derivative:
\begin{align*}
\mathring{\chi}_{t, i}=\cL'(\mathring{f}_{t},y_{i}, t, i) = \cL'(\mathring{f}_{t},y_{i})\ind\{i\in\cB_t\}.    
\end{align*}
\end{enumerate}

\noindent \textbf{Intuition behind the entanglement term for the two-hidden-layer case} The inclusion of $W^\top $ in the backward pass largely increases the system complexity by introducing multiplications between $W$ and certain nonlinear transformations of $W^\top $ in the forward pass, which necessitates involved definitions of $\dot{Z}^{Wx_t(\xi)}$ and $\dot{Z}^{W^\top d\bar{h}_t(\xi)}$. Since they are all ``conditioned out'' in our analysis, we only showcase the definition of $\dot{Z}^{Wx_t(\xi)} =  \sum_{j=1}^m\sum_{r=0}^{t-1} \theta_{r, j}Z^{d\bar{h}_{r}(\xi_j)}$ to give a \emph{sense of entanglement} between $W$ and $W^\top $, where
 $\theta_{r}$ is calculated like so: $Z^{x_{t}(\xi)}$ by definition is constructed as 
\begin{align*}
Z^{x_{t}(\xi)} = \Phi(\hat Z^{W^\top d\bar{h}_{0}(\xi_1)}, \dots, \hat Z^{W^\top d\bar{h}_{0}(\xi_m)}, \ldots, \hat Z^{W^\top d\bar{h}_{t-1}(\xi_1)}, \dots, \hat Z^{W^\top d\bar{h}_{t-1}(\xi_m)}, Z^{U_{0}})
\end{align*}
for some function $\Phi: \RR^{m\times t+1} \rightarrow \RR$. Then
\begin{align*}
\theta_{r, j} = \EE[\partial\Phi(\hat Z^{W^\top d\bar{h}_{0}(\xi_1)}, \dots, \hat Z^{W^\top d\bar{h}_{0}(\xi_m)}, \ldots, \hat Z^{W^\top d\bar{h}_{t-1}(\xi_1)}, \dots, \hat Z^{W^\top d\bar{h}_{t-1}(\xi_m)}, Z^{U_{0}})/\partial\hat Z^{W^\top d\bar{h}_{r}(\xi_j)}]. 
\end{align*}

\section{Proof of Theorem~\ref{thm:degenrate}}\label{app:mainproof}
We begin by describing three key lemmas, each highlighting a crucial aspect of our subsequent proof.

\begin{lemma}\label{Lemma:key1}
Suppose random variables $\{u_{k}\}_{k=[K]}$ and $\{v_{k}\}_{k=[K]}$ satisfy $\EE[u_i u_j] = \EE[v_i v_j], \forall i,j$, then 
\begin{align*}
\sum_{k \in [K]}\alpha_{k} u_{k} \overset{a.s.}{=} 0  \Leftrightarrow  \sum_{k \in [K]}\alpha_{k} v_{k} \overset{a.s.}{=} 0
\end{align*}
\end{lemma}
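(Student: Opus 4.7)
The plan is to reduce the almost-sure identity to a statement about second moments. Recall the elementary fact that a square-integrable random variable $X$ satisfies $X \overset{a.s.}{=} 0$ if and only if $\EE[X^2] = 0$, since $X^2$ is non-negative and a non-negative random variable with zero expectation vanishes almost surely. I will apply this fact to both $X = \sum_{k \in [K]} \alpha_k u_k$ and $X = \sum_{k \in [K]} \alpha_k v_k$.

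Concretely, first I would expand
\begin{align*}
\EE\Big[\Big(\sum_{k \in [K]} \alpha_k u_k\Big)^2\Big] = \sum_{i,j \in [K]} \alpha_i \alpha_j \EE[u_i u_j],
\end{align*}
using linearity of expectation (well-defined since the hypothesis implicitly requires $\EE[u_i u_j]$ to be finite, and hence each $u_k$ is square-integrable). By the assumption $\EE[u_i u_j] = \EE[v_i v_j]$ for all $i, j$, the right-hand side equals $\sum_{i,j} \alpha_i \alpha_j \EE[v_i v_j] = \EE[(\sum_k \alpha_k v_k)^2]$. Hence the two second moments agree, and in particular one vanishes iff the other does. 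Combining this with the elementary fact above yields both directions of the equivalence.

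I do not anticipate a real obstacle; the statement is essentially a one-line consequence of the correspondence between covariance equality and almost-sure linear relations. The only point to keep in mind is square-integrability, which is implicit in the hypothesis that the second moments exist and are equal; in the paper's actual use case, the $u_k, v_k$ will be drawn from the jointly Gaussian processes $\hat{Z}^{W_0^l \delta x_s^{l-1}(\xi_i)}$ or $\hat{Z}^{W_0^{l\top} dh_s^l(\xi_i)}$, whose entries are automatically square-integrable, so the argument applies verbatim. Note also that the lemma does not require joint Gaussianity of $\{u_k\}$ or $\{v_k\}$ as a whole; only the second-moment argument is used, making the conclusion applicable to arbitrary random variables with matching covariance structure.
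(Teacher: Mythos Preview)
Your proposal is correct and matches the paper's own proof essentially verbatim: the paper also passes from the almost-sure identity to $\EE[(\sum_k \alpha_k u_k)^2]=0$, uses the shared covariance to transfer this to the $v_k$'s, and concludes. Your write-up is in fact slightly more detailed than the paper's (you make the converse and the square-integrability point explicit), but the idea is identical.
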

\begin{proof}
$\sum_{k \in [K]}\alpha_{k} u_{k} \overset{a.s.}{=} 0$ implies $\EE[(\sum_{k \in [K]}\alpha_{k} u_{k})^{2}] = 0$. Because $\{u_{k}\}_{k=[K]}$ and $\{v_{k}\}_{k=[K]}$ share the same co-variance matrix, we have that  $\EE[(\sum_{k \in [K]}\alpha_{k} v_{k})^{2}] = 0$.
\end{proof}

\begin{lemma}\label{Lemma:key3}
Suppose any level set of $\phi: \RR\to\RR$ is countable and $g_{1}, \ldots, g_{K}$ are jointly non-degenerate Gaussian. If $\mathbb{P}\big(\sum_{i}a_{i}\phi(g_{i})= C\big) > 0$ where $C$ is a constant, then $a_{i} = 0$ for all $i$ and $C = 0$.
\end{lemma}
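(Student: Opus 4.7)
\emph{Proof plan.} The plan is to argue by contradiction via a Fubini-type slicing argument on $\RR^K$. Suppose, for contradiction, that some $a_i$ is nonzero; without loss of generality assume $a_1 \neq 0$. Consider the measurable subset
\begin{align*}
A = \Big\{(x_1, \ldots, x_K) \in \RR^K : \sum_{i=1}^K a_i \phi(x_i) = C\Big\}.
\end{align*}
Since $(g_1, \ldots, g_K)$ is jointly non-degenerate Gaussian, its law admits a strictly positive density with respect to Lebesgue measure on all of $\RR^K$. Hence the hypothesis $\mathbb{P}((g_1, \ldots, g_K) \in A) > 0$ forces $A$ to have strictly positive Lebesgue measure.

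Next, I would slice $A$ along the first coordinate: for each fixed $(x_2, \ldots, x_K) \in \RR^{K-1}$, the fiber
\begin{align*}
A_{x_2, \ldots, x_K} = \Big\{x_1 \in \RR : \phi(x_1) = \tfrac{1}{a_1}\Big(C - \sum_{i=2}^K a_i \phi(x_i)\Big)\Big\}
\end{align*}
is by construction a level set of $\phi$, and therefore countable by hypothesis; in particular it has one-dimensional Lebesgue measure zero. Applying Fubini's theorem to $\mathbf{1}_A$ then yields $|A| = 0$, contradicting the strict positivity derived above. This contradiction forces $a_i = 0$ for every $i$; once every coefficient vanishes, $\sum_i a_i \phi(g_i) \equiv 0$ deterministically, and the hypothesis that this sum equals $C$ with positive probability immediately gives $C = 0$.

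The only genuinely delicate point is the transfer from ``positive probability under the Gaussian law'' to ``positive Lebesgue measure,'' which is exactly where joint non-degeneracy of $(g_1, \ldots, g_K)$ is essential: on a degenerate Gaussian whose law is supported on a proper affine subspace of $\RR^K$, one could easily arrange nontrivial linear relations among the $\phi(g_i)$ that violate the conclusion. Everything else is a routine Fubini computation combined with the countable-level-set hypothesis on $\phi$, so no appeal to the \good{} property from \Cref{definition:good} is needed at this step.
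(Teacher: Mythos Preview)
Your argument is correct and essentially the same as the paper's: the paper phrases the slicing in probabilistic language (condition on $g_2,\ldots,g_K$, use that $g_1\mid g_2,\ldots,g_K$ is a non-degenerate one-dimensional Gaussian, so it assigns zero mass to the countable level set $\{\phi=\text{const}\}$, and iterate), whereas you first pass to Lebesgue measure via the strictly positive joint density and then apply Fubini; the two are equivalent here. Your write-up is in fact a bit more explicit about the final step $C=0$, which the paper leaves implicit.
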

\begin{proof}

$\{\mathbb{P}\big(\sum_{i}a_{i}\phi(g_{i})= C | g_{2}, \ldots, g_{K}\big) > 0\}$ has positive probability only if $a_1 = 0$, because $g_{1}|g_{2}, \ldots, g_{K}$ is a non-degenerate Gaussian random variable. We conclude that $\prod_{i\in[K]}a_i = 0$ following a similar reasoning inductively.
\end{proof}

\begin{lemma}\label{lm:multiplying}
Suppose $\phi$ satisfies Assumption~\ref{assumption:good+analytical}. Moreover, suppose $g_{1}, \ldots, g_{K}$ are jointly non-degenerate Gaussian. If $\big(c_1 + \sum_{i}a_{i}\phi(g_{i})\big)\cdot\big(c_2 + \sum_{i}b_{i}\phi'(g_{i})\big)  = C$ where $c_1, c_2, C$ is a constant, then $C = 0$ and either $a_{i} = 0$ for all $i \in [K]$ or $b_{i} = 0$ for all $i \in [K]$.
\end{lemma}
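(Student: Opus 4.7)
The plan is to first reduce the almost-sure functional identity to a pointwise one on $\RR^K$, then run a case analysis that invokes the two clauses of Definition~\ref{definition:good} at the right moments. For the reduction, note that $(g_1, \ldots, g_K)$ is a non-degenerate Gaussian vector and therefore has topological support equal to $\RR^K$; since $\phi$ and $\phi'$ are continuous by Assumption~\ref{assumption:good+analytical}(1)--(2), the continuous map
\begin{align*}
P(x) := \Bigl(c_1 + \sum_{i=1}^K a_i \phi(x_i)\Bigr)\Bigl(c_2 + \sum_{i=1}^K b_i \phi'(x_i)\Bigr) - C
\end{align*}
vanishes on a set of full Gaussian measure, hence on a dense subset of $\RR^K$, and therefore $P \equiv 0$ pointwise. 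From here on the argument is deterministic.

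To obtain the dichotomy, I would suppose for contradiction that some $a_j$ and some $b_k$ are nonzero. If one can choose $j = k$, then fixing all $x_i$ for $i \neq j$ collapses the identity to $(a_j\phi(x_j)+\alpha)(b_j\phi'(x_j)+\beta) \equiv C$ in $x_j$; dividing by $a_j b_j$ exhibits $(\phi(x_j)+\alpha/a_j)(\phi'(x_j)+\beta/b_j)$ as an everywhere-constant function, contradicting the second clause of Definition~\ref{definition:good}. Otherwise no index has both $a_i, b_i$ nonzero, so $j \neq k$, $b_j = 0$, and $a_k = 0$; varying only $x_j$ leaves the second factor unchanged while making the first factor a non-constant function of $x_j$ (using $a_j \neq 0$ together with the fact that $\phi$ is non-constant, a one-term specialization of the first clause of GOOD). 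The product can then equal a constant only if the second factor is identically zero and $C = 0$; that is, $c_2 + \sum_{i \neq j} b_i \phi'(x_i) \equiv 0$ on $\RR^{K-1}$. Varying $x_k$ in this new identity forces $b_k \phi'(x_k)$ to be constant, so $\phi'$ is constant and $\phi$ is affine---but an affine $\phi$ fails GOOD by direct construction (choose two terms with $a_1 b_1 + a_2 b_2 = 0$ and $|b_1| \neq |b_2|$, making the sum constant). Either branch is a contradiction, so one of $\{a_i\}, \{b_i\}$ must vanish entirely.

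For the $C = 0$ conclusion, suppose WLOG all $a_i = 0$, so the identity reduces to $c_1 \cdot (c_2 + \sum_i b_i \phi'(x_i)) \equiv C$. If $c_1 = 0$ then $C = 0$ immediately, while if $c_1 \neq 0$ the sum is constant and a Lemma~\ref{Lemma:key3}-style argument applied to $\phi'$---whose level sets are countable by Assumption~\ref{assumption:good+analytical}(4) and which is non-constant by GOOD---forces all $b_i = 0$, so $C = c_1 c_2$ appears as the trivial residue in which the conclusion is vacuous. The technical heart and main obstacle of the argument is the case $j = k$ in the dichotomy: ruling out that an affine combination of $\phi$ multiplied by an affine combination of $\phi'$ could be everywhere constant is neither a smoothness nor a generic-position fact, and cannot be obtained from parts (1), (2), or (4) of Assumption~\ref{assumption:good+analytical} alone---it is exactly what the second clause of Definition~\ref{definition:good} was engineered to guarantee, and is the reason activations such as $\tanh$ and $\mathrm{SiLU}$ are admissible while purely linear activations are excluded.
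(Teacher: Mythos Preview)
Your argument is correct and rests on the same key ingredient as the paper---the second clause of Definition~\ref{definition:good}---but the organization is genuinely different. The paper first establishes $C=0$ by conditioning on $g_2,\ldots,g_K$ and invoking the GOOD product clause, then observes that a zero product forces one factor to vanish with positive probability, at which point Lemma~\ref{Lemma:key3} (applied to $\phi$ or to $\phi'$) immediately gives the dichotomy. You instead reduce to a pointwise identity on $\RR^K$ via full Gaussian support and continuity, prove the dichotomy first through an explicit case split on whether some index carries both a nonzero $a_i$ and a nonzero $b_i$, and only then extract $C=0$. Your treatment of the ``different index'' branch is in fact more careful than the paper's Case~1, whose one-line claim that ``$a_1,b_1$ cannot be zero'' does not by itself dispose of the situation where no single index has both coefficients nonzero; conversely, the paper's zero-product-then-Lemma~\ref{Lemma:key3} route is cleaner once $C=0$ is available and avoids your separate affine-$\phi$ detour. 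Your observation that the $C=0$ conclusion can fail in the degenerate case $a_i=b_i=0$ for all $i$ with $c_1c_2\neq 0$ is accurate; this is a harmless over-statement of the lemma rather than a gap in either proof, and it does not arise in the lemma's sole application (Step~3 of Theorem~\ref{thm:degenrate}), where the $a_i$ are proportional to the nonvanishing error signals $\mathring{\chi}_{t,i}$.
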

\begin{remark}
Considering the function tail, it is easy to prove that the Sigmoid function $\sigma(x) = \frac{1}{1+\exp(-x)}$, the smoothed ReLU function $\overline{\text{ReLU}}(x) = \log(1 + \exp(x)) = \int \sigma(x)dx$, and the SiLU (Sigmoid Linear Unit) function, defined as $\text{SiLU}(x) = x \cdot \sigma(x)$ , all satisfy these assumptions. Notably, SiLU is employed in state-of-the-art open-source foundation models \citep{touvron2023llama, touvron2023llama2}.
\end{remark}

\begin{proof}
We first prove that $C=0$. Condition on the random variables $g_2, \ldots, g_K$ and denote $g := g_{1} \mid (g_{2}, \ldots, g_K)$. Then $g$ is a non-degenerate univariate Gaussian. 

\textbf{Case 1:} Suppose $C \neq 0$.  
- In this scenario, $a_1,b_1$ cannot be zero; otherwise $\phi$ (or $\phi'$) would have an uncountable level set, contradicting our assumptions.
 Given that $(c_1' + a_1\phi(g))(c_2' + b_1\phi'(g))  = C$ almost surely, we may rewrite:
\begin{align*}
\bigg(\frac{c_1'}{a_1} + \phi(g)\bigg)\bigg(\frac{c_2'}{b_1} + \phi'(g)\bigg)  = \frac{C}{a_1 b_1}.   
\end{align*}
  Here $c_1',c_2'$ are constants (absorbing the conditioning on $g_2,\dots,g_K$). But this implies that for almost all $x \in \RR$, $\big(\frac{c_1'}{a_1} + \phi(x)\big)\big(\frac{c_2'}{b_1} + \phi'(x)\big)  = \frac{C}{a_1 b_1}$, a contradiction to the assumption on the activation function. Therefore, a contradiction arises, implying $C$ must be zero.

\textbf{Case 2:} Now consider $C=0$.  
Then at least one of the following holds with positive probability:
\begin{align*}
c_1 + \sum_{i}a_{i}\phi(g_{i}) = 0
    \quad\text{or}\quad
    c_2 + \sum_{i}b_{i}\phi'(g_{i}) = 0.   
\end{align*}
In either case, applying Lemma~\ref{Lemma:key3} (which crucially uses the fact that $\phi$ has at most countable level sets, forcing the sum to avoid being constant on any uncountable domain with positive probability unless all involved coefficients vanish) completes the proof of zeroing out the corresponding coefficients. Concretely, if 
\begin{align*}
\mathbb{P}\bigg(\sum_{i}a_{i}\phi(g_{i})= C \Big| g_{2}, \ldots, g_{K}\bigg) > 0    
\end{align*}
then for those realizations we view $g_1$ (conditioned on $g_2,\dots,g_K$) as a non-degenerate univariate Gaussian. Holding $g_2,\dots,g_K$ fixed, the only way $\sum_{i}a_i\phi(g_i)$ can remain a constant over a positive-measure set of $g_1$ values is if $a_1=0$. Repeating this argument inductively for $g_2,g_3,\dots$ shows that $\prod_{i\in[K]}a_i=0$. Therefore, either all $a_i$ vanish or all $b_i$ vanish, completing the proof of this lemma.
\end{proof}

With these lemmas at hand, we now prove our main theorem by an inductive argument. In particular, we show that the following two families of Gaussian processes, introduced in Section~\ref{sec:mainresults}, remain non-degenerate throughout training:
\begin{align}
&\{\hat{Z}^{W_{0}^{l}\delta x_{s}^{l-1}(\xi_i)}\}_{i\in[m], s\in[t],2\leq l\leq L}, \label{eq:Gaussian_Process_F}\\
&\{\hat{Z}^{W_{0}^{l\top}dh_{s}^{l}(\xi_i)}\}_{i\in[m], s\in[t],2\leq l\leq L}. \label{eq:Gaussian_Process_B}
\end{align}
Recall that a Gaussian process is non-degenerate if its covariance matrix $C$ at any finite collection of points satisfies $\mathrm{det}(C) \neq 0$ \citep{adler2009random}.
Using the filtration framework introduced in Section~\ref{sec:mainresults}, our proof follows the natural flow of computation in the network, proceeding layer by layer and separately handling forward and backward passes. We break this into four key steps, each building upon the results of previous steps:
\begin{itemize}[leftmargin=*]
\item Step 1: prove non-degeneracy for the features in the first hidden layer $\hat{Z}^{W_{0}^{2}\delta x_{s}^{1}(\xi_i)}$. This forms our base case as it only depends on the input data and network initialization, providing the foundation for our inductive argument.
\item Step 2: prove non-degeneracy for the features in remaining layers $\hat{Z}^{W_{0}^{l}\delta x_{s}^{l-1}(\xi_i)}$, $3\leq l \leq L$. This step leverages the non-degeneracy established in Step 1 and shows how it propagates through deeper layers of the network.
\item Step 3: prove non-degeneracy for the gradients in the last layer $\hat{Z}^{W_{0}^{L\top}dh_{s}^{L}(\xi_i)}$. Here we transition from analyzing forward features to backward gradients, showing how the established feature properties ensure meaningful gradient flow.
\item Step 4: prove non-degeneracy for the gradients in remaining layers $\hat{Z}^{W_{0}^{l\top}dh_{s}^{l}(\xi_i)}$, $2\leq l \leq L-1$. Finally, we complete our analysis by showing how gradient non-degeneracy propagates backward through the network, ensuring effective training dynamics at all layers.
\end{itemize}
The proof proceeds by induction on the time step $t$, where at each step we verify these properties hold across all layers. This structure allows us to carefully track how the non-degeneracy property is maintained as information flows both forward and backward through the network during training. This systematic proof structure allows us to establish the global property of non-degeneracy by carefully tracking local changes at each layer and time step. We now proceed with the detailed proof.

\begin{proof}[Proof of Theorem~\ref{thm:degenrate}]
\textbf{Considering Trajectory Until Error Signals Vanish.}
Throughout this proof, we focus on the training trajectory up to the time when all error signals $\mathring{\chi}_{t, i}$ become zero. This is because once the error signals vanish, there are no further parameter updates, and the training dynamics remain static thereafter. Our analysis ensures that up to this point, the Gaussian processes governing the feature and gradient updates remain non-degenerate, thereby maintaining the linear independence of features across all layers.

\textbf{Connecting $\widehat{Z}^{W\delta x}$ to $h^l$ and $x^l$.}
Recall from Section~\ref{sec:pre} that each pre-activation $h^l(\xi)$ and post-activation $x^l(\xi)$
can be decomposed into a primary Gaussian increment
plus lower-order (history-dependent) terms in the infinite-width limit:

Because these additional terms do not alter the essential covariance structure when conditioned 
on past information (they vanish or become deterministic in the limit), the linear (in)dependence 
of $\{h^l(\xi)\}$ or $\{x^l(\xi)\}$ is governed by the non-degeneracy of 
$\{\widehat{Z}^{W_{0}^{l}\,\delta x_{s}^{l-1}(\xi)}\}$.
Hence, showing that $\{\widehat{Z}^{W_{0}^{l}\,\delta x_{s}^{l-1}(\xi)}\}$ remain non-degenerate
under conditioning on historical variables 
directly implies that $\{h^l(\xi)\}$ and $\{x^l(\xi)\}$ cannot collapse into a linearly dependent set.

Below, we provide an inductive argument to establish precisely this non-degeneracy at each step.

By definition when $t=0$, $\{\hat{Z}^{W_{0}^{l}\delta x_{0}^{l-1}(\xi_i)}\}, i \in [m], 2\leq l \leq L$ are independent and therefore non-degenerate Gaussian. 

Now assume that the random Gaussian Process 
features defined in \eqref{eq:Gaussian_Process_F} and \eqref{eq:Gaussian_Process_B} are non-degenerate at time $t$, specifically for
\begin{align*}
\{\hat{Z}^{W_{0}^{l}\delta x_{s}^{l-1}(\xi_i)}\}_{i\in[m], s\in[t]},  \{\hat{Z}^{W_{0}^{l\top}dh_{s}^{l}(\xi_i)}\}_{i\in[m], s\in[t]}
\end{align*}
where layer $2 \leq l \leq L$.

\noindent\textbf{Step 1:} We first prove $\{\hat{Z}^{W_{0}^{2}\delta x_{s}^{1}(\xi_i)}\}_{i\in[m], s\in[t+1]}$ is non-degenerate. 
Suppose there exists not all zero $\{\lambda_{i, s}\}_{i\in[m], s \in [t+1]}$ such that
 \begin{align*}
 \sum_{i \in [m], s\in [t+1]}\lambda_{i,s}\hat Z^{W^2_0 \delta x^1_{s}(\xi_i)} \as 0.
 \end{align*}
Since $\{\hat{Z}^{W_{0}^{2}\delta x_{s}^{1}(\xi_i)}\}_{i\in[m], s\in[t]}$ are non-degenerate, we conclude that $\{\lambda_{i, t+1}\}_{i\in[m]}$ are not all zero. Consider the second moment, we have that 
 \begin{align*}
\EE \bigg[\bigg( \sum_{i \in [m], s\in [t+1]}\lambda_{i,s}\hat Z^{W_0^2 \delta x^1_{s}(\xi_i)}\bigg)^{2}\bigg] = 0.
 \end{align*}
Because $\{\hat Z^{W^2_0 \delta x^1_{s}(\xi_{i})}\}$ shares the same co-variance matrix with $\{Z^{ \delta x^1_{s}(\xi_{i})}\}$. By Lemma \ref{Lemma:key1}, we have that 
 \begin{align*}
\EE \bigg[\bigg( \sum_{i \in [m], s\in [t+1]}\lambda_{i,s}Z^{ \delta x^1_{s}(\xi_i)}\bigg)^{2}\bigg] = 0 \Rightarrow \sum_{i \in [m], s\in [t+1]}\lambda_{i,s}Z^{ \delta x^1_{s}(\xi_i)} \overset{a.s.}{=} 0.
 \end{align*}
Therefore by definition we have that 
\begin{align}
\sum_{i \in [m], s\in [t+1]}\lambda_{i,s}\Big(\phi(Z^{h^{1}_{s}(\xi_i)})-\phi(Z^{h^{1}_{s-1}(\xi_i)})\Big) \overset{a.s.}{=} 0,   \label{eq:Llayeradimissble1}  
\end{align}
where $Z^{h^{1}_{s}(\xi_i)}$ satisfies that 
\begin{align*}
Z^{h^{1}_{s}(\xi_i)}
&= Z^{h^{1}_{0}(\xi_i)} -\sum_{j \in [m]}\eta\mathring{\chi}_{0, j}\xi_{j}^{\top}\xi_iZ^{dh^{1}_{0}(\xi_j)}  - \cdots -\sum_{j \in [m]}\eta\mathring{\chi}_{s-1, j}\xi_j^{\top}\xi_{i} Z^{dh^{1}_{s-1}(\xi_j)}.
\end{align*}
Plugging \eqref{eq:gradmid0} and \eqref{eq:grad_mid} into the above equation further gives the following reformulation of $Z^{h^{1}_{s}(\xi_i)}$: 
\begin{align}
Z^{h^{1}_{s}(\xi_i)} = \Delta_{s}(\xi_i)  -\sum_{j\in [m]}\eta\mathring{\chi}_{s,j}\xi_{j}^{\top}\xi_i\phi'(Z^{h_{s-1}^{1}(\xi_j)})\hat Z^{{W^2_0}^\top d{h}^2_{s}(\xi_j)}, \label{eq:admissibleZh}  
\end{align}
where $\Delta_{s}(\xi_i) \in \cG_{s-1}$ is a random variable. Notice that $Z^{h_{s-1}^{1}(\xi_j)} \in \cG_{s-1}$ and $Z^{h_{s}^{1}(\xi_j)} \in \cG_{s}$. 

At least one of the $\{\mathring{\chi}_{s,j}\}_{j \in [m]}$ is not zero, W.L.O.G assume $\mathring{\chi}_{s,k} \not= 0$. By induction hypothesis the non-degenerate property at time $t$ holds. Therefore, $Z^{{W^2_0}^\top d{h}^2_{s}(\xi_k)}$ condition on $\cG_{t-1} \cup \{Z^{{W^2_0}^\top d{h}^2_{s}(\xi_j)}\}_{j \not= k}$ is a non-degenerate Gaussian.

Plugging \eqref{eq:admissibleZh} into \eqref{eq:Llayeradimissble1} and then
condition on $\cG_{t-1} \cup \{Z^{{W^2_0}^\top d{h}^2_{s}(\xi_j)}\}_{j \not= k}$ gives that 
\begin{align*}
\sum_{i\in [m]}\lambda_{i,t+1}\phi(\xi_{k}^{\top}\xi_{i}Z^{U} + c_i) = C
\end{align*}
where $c_i$ and $C$ are constant and $Z^{U}$ is a non-degenerate uni-variate Gaussian random variable. Since $\phi$ meets the conditions in Assumption~\ref{assumption:good+analytical} and the dataset fulfills Assumption~\ref{assumption:OrthogonalSamples}, ensuring the inner products and level sets behave as required. We can conclude that $\lambda_{i,t+1} =0$ for all $i \in [m]$. A contradiction! Therefore, $\{\hat{Z}^{W_{0}^{2}\delta x_{s}^{1}(\xi_i)}\}_{i\in[m], s\in[t+1]}$ is indeed non-degenerate. 

\noindent\textbf{Step 2:} We prove the following is non-degenerate. 
\begin{align*}
\{\hat{Z}^{W_{0}^{l}\delta x_{s}^{l-1}(\xi_i)}\}_{i\in[m], s\in[t+1]}, l \geq 3. 
\end{align*}
Suppose there exists not all zero $\{\lambda_{i, s}\}_{i\in[m], s \in [t+1]}$ such that
 \begin{align*}
 \sum_{i \in [m], s\in [t+1]}\lambda_{i,s}\hat Z^{W^l_0 \delta x^{l-1}_{s}(\xi_i)} \as 0.
 \end{align*}
Since $\{\hat{Z}^{W_{0}^{l}\delta x_{s}^{l-1}(\xi_i)}\}_{i\in[m], s\in[t]}$ are non-degenerate, we conclude that $\{\lambda_{i, t+1}\}_{i\in[m]}$ are not all zero. Consider the second moment, we have that 
 \begin{align*}
\EE \bigg[\bigg( \sum_{i \in [m], s\in [t+1]}\lambda_{i,s}\hat Z^{W_0^l \delta x^{l-1}_{s}(\xi_i)}\bigg)^{2}\bigg] = 0.
 \end{align*}
Because $\{\hat Z^{W^l_0 \delta x^{l-1}_{s}(\xi_{i})}\}$ shares the same co-variance matrix with $\{Z^{\delta x^{l-1}_{s}(\xi_{i})}\}$. By Lemma \ref{Lemma:key1}, we have that 
 \begin{align*}
\EE \bigg[\bigg( \sum_{i \in [m], s\in [t+1]}\lambda_{i,s}Z^{\delta x^{l-1}_{s}(\xi_i)}\bigg)^{2}\bigg] = 0 \Rightarrow \sum_{i \in [m], s\in [t+1]}\lambda_{i,s}Z^{\delta x^{l-1}_{s}(\xi_i)} \overset{a.s.}{=} 0.
 \end{align*}
Therefore by definition we have that 
\begin{align}
\sum_{i \in [m], s\in [t+1]}\lambda_{i,s}\big[\phi(Z^{h^{l-1}_{s}(\xi_i)})-\phi(Z^{h^{l-1}_{s-1}(\xi_i)})\big]\overset{a.s.}{=} 0,   \label{eq:middleeasy}  
\end{align}
where $Z^{h^{l-1}_{s}}$ satisfies that 
\begin{align*}
Z^{h^{l-1}_{s}(\xi_i)}&=Z^{h^{l-1}_{0}(\xi_i)}+Z^{\delta h^{l-1}_{1}(\xi_i)}+\cdots + Z^{\delta h^{l-1}_{s}(\xi_i)}.
\end{align*}
A reformulation of the above update rule further gives that, 
\begin{align}
Z^{h^{l-1}_{s}(\xi_i)}&=\Delta_{s}(\xi_i) + \hat{Z}^{W_{0}^{l-1}\delta x^{l-2}_{s}(\xi_i)}, \label{eq:reuse}
\end{align}
where $\Delta_{s}(\xi_i) \in \cF_{s-1}$ is a random variable. Notice that $\hat{Z}^{W_{0}^{l-1}\delta x^{l-2}_{s}(\xi_i)}, Z^{h^{l-1}_{s}(\xi_i)} \in \cF_s$. Arbitrary pick an index $k$. Because in induction hypothesis we assume the non-degenerate property at time $t$ for all layers and already proved the non-degenerate property at time $t+1$ layer $l-1$ , condition \eqref{eq:Llayeradimissble1} on $\sigma\big(\cF_{t} \cup \{\hat{Z}^{W_{0}^{l-1}\delta x^{l-2}_{t+1}(\xi_j)}\}_{j \not= k}\big)$ gives that
\begin{align*}
\lambda_{k,t+1}\phi(Z^{U_k} + c_k) = C_k    
\end{align*}
where $U_k$ is a non-degenerate uni-variate Gaussian random variable $\hat{Z}^{W_{0}^{l-1}\delta x^{l-2}_{t+1}(\xi_k)}|\sigma\big(\cF_{t} \cup \{\hat{Z}^{W_{0}^{l-1}\delta x^{l-2}_{t+1}(\xi_j)}\}_{j \not= k} \big)$, $c_k$ and $C_k$ are constants. By Assumption~\ref{assumption:good+analytical} of activation function, we know that $\lambda_{k, t+1} = 0$ for arbitrary $k \in [m]$. A contradiction! Therefore, $\{\hat{Z}^{W_{0}^{l}\delta x_{s}^{l-1}(\xi_i)}\}_{i\in[m], s\in[t+1]}$ is indeed non-degenerate.

\noindent\textbf{Step 3:} We prove the following gradients are non-degenerate. 
\begin{align*}
\{\hat{Z}^{W_{0}^{L\top}dh_{s}^{L}(\xi_i)}\}_{i\in[m], s\in[t+1]}. 
\end{align*}
Suppose there exists not all zero $\{\lambda_{i, s}\}_{i\in[m], s \in [t+1]}$ such that
 \begin{align*}
 \sum_{i \in [m], s\in [t+1]}\lambda_{i,s}\hat{Z}^{W_{0}^{L\top}dh_{s}^{L}(\xi_i)} \as 0.
 \end{align*}
Since $\{\hat{Z}^{W_{0}^{L\top}dh_{s}^{L}(\xi_i)}\}_{i\in[m], s\in[t]}$ are non-degenerate, we conclude that $\{\lambda_{i, t+1}\}_{i\in[m]}$ are not all zero. Consider the second moment, we have that 
 \begin{align*}
\EE \bigg[\bigg( \sum_{i \in [m], s\in [t+1]}\lambda_{i,s}\hat{Z}^{W_{0}^{L\top}dh_{s}^{L}(\xi_i)}\bigg)^{2}\bigg] = 0.
 \end{align*}
Because $\{\hat{Z}^{W_{0}^{L\top}dh_{s}^{L}(\xi_i)}\}$ shares the same co-variance matrix with $\{Z^{dh_{s}^{L}(\xi_i)}\}$. By Lemma \ref{Lemma:key1}, we have that 
 \begin{align*}
\EE \bigg[\bigg( \sum_{i \in [m], s\in [t+1]}\lambda_{i,s}Z^{dh_{s}^{L}(\xi_i)}\bigg)^{2}\bigg] = 0 \Rightarrow \sum_{i \in [m], s\in [t+1]}\lambda_{i,s}Z^{dh_{s}^{L}(\xi_i)} \overset{a.s.}{=} 0.
 \end{align*}
Therefore by definition we have that 
\begin{align}
&\sum_{i \in [m], s\in [t+1]}\lambda_{i,s}Z^{dx_{s}^{L}(\xi_i)}\phi'(Z^{h_{s}^{L}(\xi_i)})\overset{a.s.}{=} 0 
\label{eq:backhard}  
\end{align}
where $Z^{dx_{s}^{L}(\xi)}$ satisfies that 
\begin{align*}
Z^{dx_{s}^{L}(\xi)}&= Z^{\widehat{W}_{s}^{L+1}}  =Z^{\widehat{W}_{0}^{L+1}} -\eta\sum_{s'=0}^{s-1}\sum_{i\in[m]}\mathring{\chi}_{s',i}Z^{x_{s'}^{L}(\xi_i)}  
\end{align*}
A reformulation of the above update rule further gives that, 
\begin{align*}
Z^{dx_{s}^{L}(\xi)}&= \tilde{\Delta}_{s} -\eta\sum_{i\in[m]}\mathring{\chi}_{s,i}\phi(Z^{h_{s}^{L}(\xi_i)}), \\
Z^{h_{s}^{L}(\xi_i)} &\overset{(i)}{=} \Delta_{s} + \hat{Z}^{W_{0}^{L}\delta x^{L-1}_{s}(\xi_i)},
\end{align*}
where $\Delta_{s}, \tilde{\Delta}_s \in \cF_{s}$ and (i) is due to \eqref{eq:reuse}. Notice that $Z^{dx_{s}^{L}}, Z^{h_{s}^{L}(\xi_i)} \in \cF_s$.  In \eqref{eq:backhard}, only $\hat{Z}^{W_{0}^{L}\delta x^{L-1}_{t+1}(\xi_i)} \in \cF_{t+1}$ provides new randomness. Because in induction hypothesis we assume the non-degenerate property at time $t$ for all layers and already proved the non-degenerate property of $\hat{Z}^{W_{0}^{L}\delta x^{L-1}_{t+1}(\xi_i)}$ , condition \eqref{eq:backhard} on $\cF_{t}$ gives that 
\begin{align*}
\Big(C - \eta\sum_{i\in[m]}\mathring{\chi}_{t,i}\phi(Z^{U_i} + b_i)\Big)\cdot \Big(\sum_{i \in [m], s\in [t+1]}\lambda_{i,t+1}\phi'(Z^{U_i}+b_i)\Big) = C',
\end{align*}
where $U_i = \hat{Z}^{W_{0}^{L}\delta x^{L-1}_{t+1}(\xi_i)}|\cF_t$ and $b_i, C, C'$ are all constants. Since $\mathring{\chi}_{t,i}$ are not all zero, by Lemma~\ref{lm:multiplying}, we have that  $\lambda_{i, t+1} = 0$ for all $i \in [m]$. A contradiction! Therefore, $\{\hat{Z}^{W_{0}^{L\top}dh_{s}^{L}(\xi_i)}\}_{i\in[m], s\in[t+1]}$ is indeed non-degenerate.

\noindent\textbf{Step 4:} We prove the following gradients are non-degenerate. 
\begin{align*}
\{\hat{Z}^{W_{0}^{l\top}dh_{s}^{l}(\xi_i)}\}_{i\in[m], s\in[t+1]}, 2\leq l \leq L - 1.
\end{align*}
Suppose there exists not all zero $\{\lambda_{i, s}\}_{i\in[m], s \in [t+1]}$ such that
 \begin{align*}
 \sum_{i \in [m], s\in [t+1]}\lambda_{i,s}\hat{Z}^{W_{0}^{l\top}dh_{s}^{l}(\xi_i)} \as 0.
 \end{align*}
Since $\{\hat{Z}^{W_{0}^{l\top}dh_{s}^{l}(\xi_i)}\}_{i\in[m], s\in[t]}$ are non-degenerate, we conclude that $\{\lambda_{i, t+1}\}_{i\in[m]}$ are not all zero. Consider the second moment, we have that 
 \begin{align*}
\EE \bigg[\bigg( \sum_{i \in [m], s\in [t+1]}\lambda_{i,s}\hat{Z}^{W_{0}^{l\top}dh_{s}^{l}(\xi_i)}\bigg)^{2}\bigg] = 0.
 \end{align*}
Because $\{\hat{Z}^{W_{0}^{l\top}dh_{s}^{l}(\xi_i)}\}$ shares the same co-variance matrix with $\{Z^{dh_{s}^{l}(\xi_i)}\}$. By Lemma \ref{Lemma:key1}, we have that 
 \begin{align*}
\EE \bigg[\bigg( \sum_{i \in [m], s\in [t+1]}\lambda_{i,s}Z^{dh_{s}^{l}(\xi_i)}\bigg)^{2}\bigg] = 0 \Rightarrow \sum_{i \in [m], s\in [t+1]}\lambda_{i,s}Z^{dh_{s}^{l}(\xi_i)} \overset{a.s.}{=} 0.
 \end{align*}
Therefore by definition we have that 
\begin{align}
\sum_{i \in [m], s\in [t+1]}\lambda_{i,s}Z^{dx_{s}^{l}(\xi_i)}\phi'(Z^{h_{s}^{l}(\xi_i)})\overset{a.s.}{=} 0,   \label{eq:backeasy}  
\end{align}
where $Z^{dx_{s}^{l}(\xi)}$ satisfies that 
\begin{align}
Z^{dx_{s}^{l}(\xi_i)}&=\hat Z^{ W_{0}^{l+1\top}dh_{s}^{l+1}(\xi_i)} + G_{s}(\xi_i), \label{eq:backg}
\end{align}
Similar to Steps 1 and 2, we have that $Z^{dx_{s}^{l}(\xi_i)}\in \cG_{s}$, $Z^{h_{s}^{l}(\xi_i)} \in \cG_{s-1}$. Therefore, condition \eqref{eq:backeasy} on $\cG_{t}$ only $\hat Z^{ W_{0}^{l+1\top}dh_{t+1}^{l+1}(\xi)}$ gives new randomness. Arbitrarily pick an index $j$. Because in induction hypothesis we assume the non-degenerate property at time $t$ for all layers and already proved the non-degenerate property at time $t+1$ layer $l+1$ , condition \eqref{eq:backeasy} on $\cG_{t} \cup \{\hat Z^{ W_{0}^{l+1\top}dh_{t+1}^{l+1}(\xi_i)}\}_{i \not= j}$ gives that
\begin{align*}
\lambda_{j,t+1}c_jZ^{U_j} = C_j    
\end{align*}
where $U_j$ is a non-degenerate uni-variate Gaussian random variable $c_j$ and $C_j$ are constants. By Assumption~\ref{assumption:good+analytical} of activation function, we know that $c_j \not= 0$ which induces $\lambda_{j, t+1} = 0$ for all $j \in [m]$. A contradiction! Therefore, $\{\hat{Z}^{W_{0}^{l\top}dh_{s}^{l}(\xi_i)}\}_{i\in[m], s\in[t+1]}$ is indeed non-degenerate. 

\end{proof}

\subsection*{Proof of Corollary~\ref{cor:convergence}}
\begin{proof}
As stated in the main text, if the training parameters stop updating at time $T$, then the training loss must be zero.

By Theorem~\ref{thm:degenrate}, the training trajectory remains non-degenerate throughout training. Suppose, for contradiction, that at time $T$ the training loss is still nonzero for some sample $(\xi_i, y_i)$. This implies that the error signal $\mathring{\chi}_{T, i}$ is nonzero. However, the non-degenerate trajectory ensures that a nonzero error signal $\mathring{\chi}_{T, i}$ would necessitate further parameter updates. 

Specifically, we establish this through a detailed contradiction argument. Suppose at time $T$, there exists some sample $i$ with non-zero error signal $\mathring{\chi}_{T,i} \neq 0$, yet the parameters no longer update after time $T$.

According to our parameter update rule in \eqref{eq:ZhatW}, for the parameters to remain unchanged from time $T$ to $T+1$,we have: 
\begin{align*}
Z_{\delta W^{L+1}_t} = -\eta \sum_{i \in [m]} \mathring{\chi}_{t-1,i} Z^{x^L}_{t-1}(\xi_i),   
\end{align*}
where $Z_{\delta W^{L+1}_t}$ represents the weight update at step $t$, $[m] = \{1, 2, ..., m\}$ denotes the set of indices for the training samples, and as stated in Section~\ref{sec:pre}, the weights evolve as: 
\begin{align*}
Z_{W^{L+1}_t} = Z_{W^{L+1}_0} + Z_{\delta W^{L+1}_1} + \cdots + Z_{\delta W^{L+1}_t}.
\end{align*}
For the parameters to remain unchanged from time $T$ to $T+1$, we must have: 
$Z_{\delta W^{L+1}_{T+1}} = 0$. Substituting the update rule, this implies:
\begin{align*}
-\eta \sum_{i \in\mathcal{B}_{T}} \mathring{\chi}_{T,i} Z^{x^L}_{T}(\xi_i) = 0.
\end{align*}
Since the learning rate $\eta > 0$, this simplifies to: 
$\sum_{i \in\mathcal{B}_{T}} \mathring{\chi}_{T,i} Z^{x^L}_T(\xi_i) = 0$. However, by Theorem~\ref{thm:degenrate}, we have established that the post-activation features ${Z^{x^L}_T(\xi_i)}_{i \in [m]}$ are linearly independent at any time $T$. Since $\mathcal{B}_{T} \subseteq [m]$ and the features $\{Z^{x^L}_T(\xi_i)\}_{i \in [m]}$ are linearly independent, any subset of these features is also linearly independent. Therefore, the only way for the equation $\sum_{i \in\mathcal{B}_{T}} \mathring{\chi}_{T,i} Z^{x^L}_T(\xi_i) = 0$ to hold is if $\mathring{\chi}_{T,i} = 0$ for all $i \in\mathcal{B}_{T}$. This contradicts our assumption that $\mathring{\chi}_{T,i} \neq 0$ for some $i$. Therefore, if any error signal is non-zero at time $T$, the parameters must continue to update.

Moreover, since this argument applies to any batch $\mathcal{B}_t$ for $t \geq T$, we can conclude that if the model converges at time $T$ (meaning parameters no longer update), then all error signals must be zero: $\mathring{\chi}_{T,i} = 0$ for all $i \in \bigcup_{t \geq T}\mathcal{B}_{t}$, implying convergence to a global minimum.
\end{proof}

\section{Activation Functions with the \good{} Property}
\label{sec:good-activation}

We now verify that many practical activation functions, especially those with \emph{exponential tails}, satisfy the \good{} property introduced in Definition~\ref{definition:good}. By ``exponential tail,'' we mean that as $|x| \to \infty$, the function and/or its derivatives decay at least as fast as $e^{-c|x|}$ for some $c>0$. Representative examples include the \textbf{sigmoid}, \textbf{tanh}, \textbf{SiLU}, and \textbf{GeLU}. Below, we restate the full definition of \good{} and then show how each requirement is met by these exponential-tail activations.

\begin{definition}[Restatement of \Cref{definition:good}]
\label{def:good-restated}
An activation function $\phi : \mathbb{R} \to \mathbb{R}$ is called \good{} if it satisfies the following two conditions:
\begin{enumerate}[leftmargin=1.2em,itemsep=3pt,topsep=3pt]
    \item[\textbf{(a)}] \textbf{Non-constant decomposition.}  
    For any finite set of parameters $\{a_i\}, \{b_i\}, \{c_i\}$ such that 
    $\exists k \text{ with } a_k b_k \neq 0$ and $|b_i| \neq |b_j|$ for all $i\neq j$,
    the function
    \begin{align}
    f(x) = \sum_{i=1}^m a_i \phi(b_i x + c_i)
    \end{align}
    is \emph{not} a constant function.

    \item[\textbf{(b)}] \textbf{Non-degenerate product with derivative.}  
    For any real numbers $r_1, r_2$, the product 
    \begin{align}
    \bigl(r_1 + \phi(x)\bigr)\bigl(r_2 + \phi'(x)\bigr)
    \end{align}
    is \emph{not} almost everywhere (a.e.) constant on $\mathbb{R}$.
\end{enumerate}
\end{definition}
Before analyzing each activation function in detail, we visualize these functions and their derivatives in Figure~\ref{fig:activation_plots}. These plots illustrate the key characteristics we will exploit in our proofs, particularly the exponential decay behavior in the tails. Note how most activation functions and their derivatives exhibit rapid decay as $|x| \to \infty$, with ReLU serving as a contrasting example that grows linearly.

\begin{figure}[t!]
    \centering
    \begin{minipage}{0.48\textwidth}
        \centering
        \includegraphics[width=\textwidth]{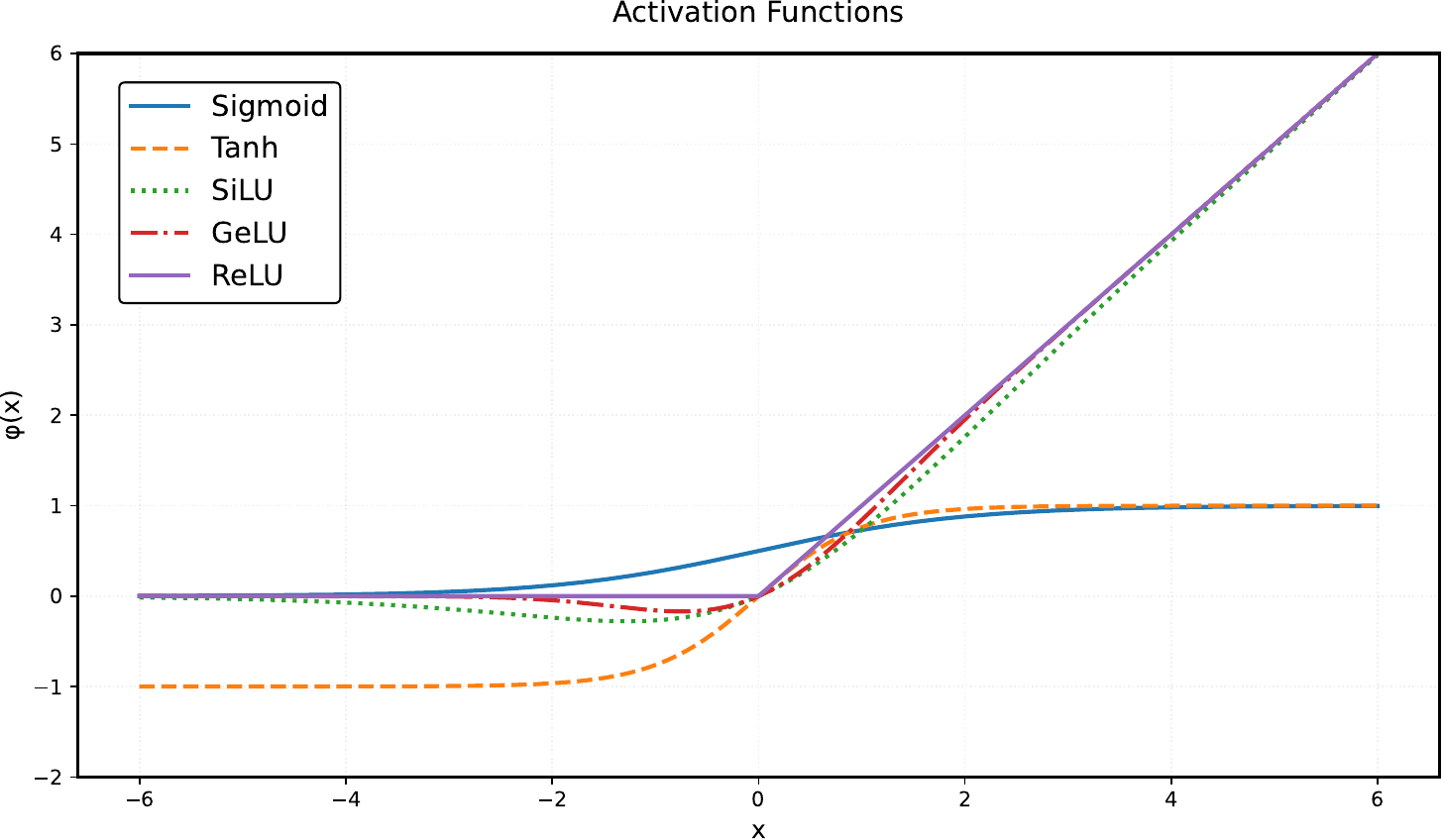}
    \end{minipage}
    \hfill
    \begin{minipage}{0.48\textwidth}
        \centering
        \includegraphics[width=\textwidth]{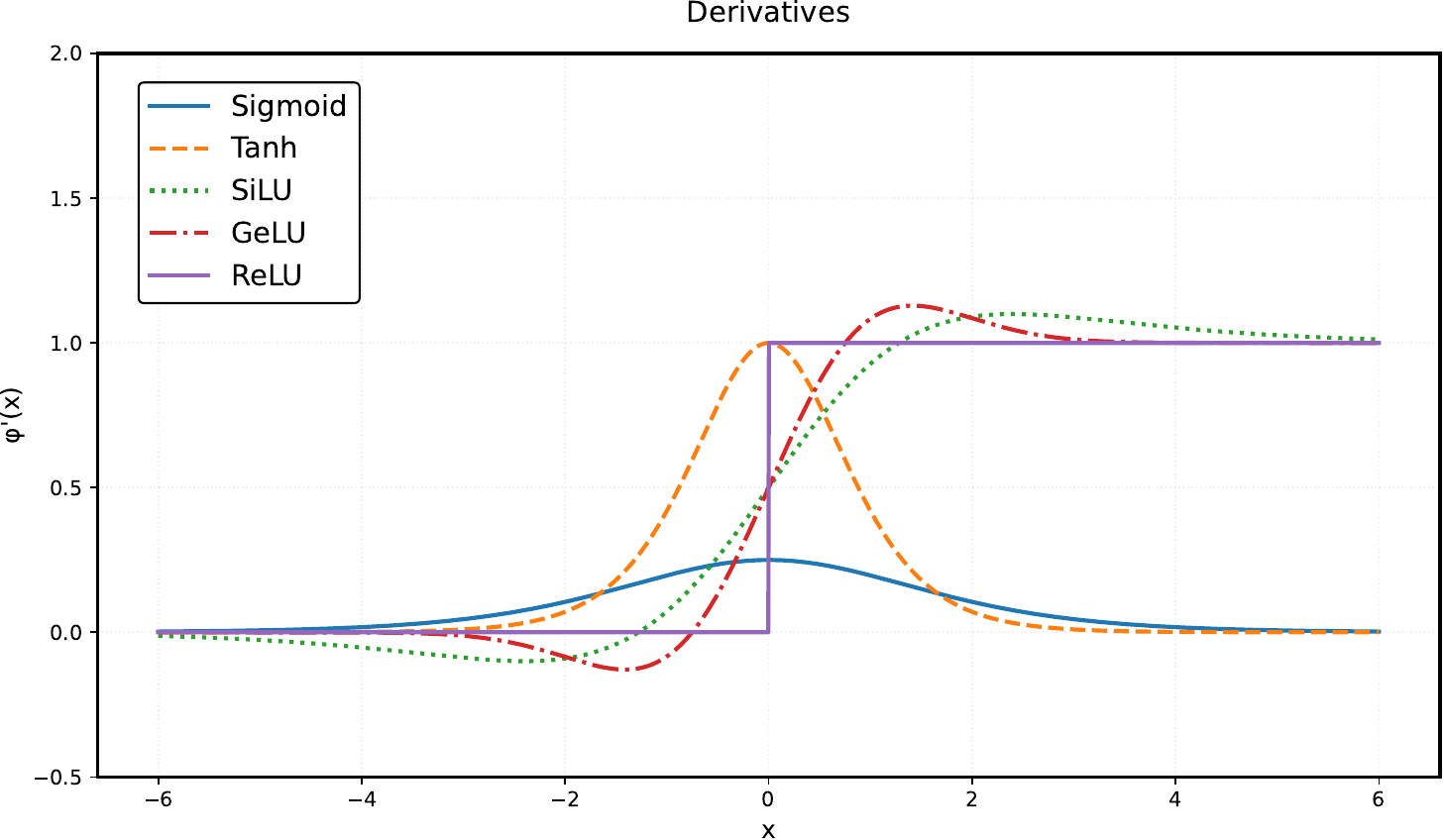}
    \end{minipage}
    \caption{Different activation functions (left) and their derivatives (right). Note the exponential decay behavior in the tails of $\sigma$, $\tanh$, SiLU, and GeLU, which is crucial for the \good{} property.}
    \label{fig:activation_plots}
\end{figure}

In the following subsections, we formally prove that these exponential-tail activations satisfy both conditions of Definition~\ref{def:good-restated}.

\subsection{Sigmoid and Tanh}
\begin{proposition}\label{prop:SigmoidIsGood}
The sigmoid function $\sigma(x) = \frac{1}{1+\exp(-x)}$ satisfies both (a) and (b) in Definition~\ref{def:good-restated}, hence is \good{}.
\end{proposition}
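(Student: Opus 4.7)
The plan is to exploit the exponential tail asymptotics of $\sigma$: as $y \to +\infty$, $\sigma(y) = 1 - e^{-y} + O(e^{-2y})$, and as $y \to -\infty$, $\sigma(y) = e^{y} + O(e^{2y})$. This decay structure lets me isolate a unique leading exponential in any finite linear combination of shifted/scaled sigmoids, which is the common engine behind both parts of the definition.

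For condition (a), I will argue by contradiction. Suppose $f(x) = \sum_{i=1}^m a_i\,\sigma(b_i x + c_i)$ is constant, and let $I = \{i : a_i b_i \neq 0\}$, which is nonempty by hypothesis. Since $|b_i| \neq |b_j|$ for $i \neq j$, the minimum $b^\ast := \min_{i \in I} |b_i| > 0$ is attained at a unique index $i^\ast \in I$. I will expand each summand as $x \to +\infty$ according to the sign of $b_i$: for $b_i > 0$, $a_i \sigma(b_i x + c_i) = a_i - a_i e^{-c_i} e^{-b_i x} + O(e^{-2 b_i x})$; for $b_i < 0$, $a_i \sigma(b_i x + c_i) = a_i e^{c_i} e^{b_i x} + O(e^{2 b_i x})$; and for $b_i = 0$, the term is simply the constant $a_i \sigma(c_i)$. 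In every case $i \in I$ the leading decaying coefficient $d_i = \pm a_i e^{\pm c_i}$ is nonzero. Collecting everything,
\begin{align*}
f(x) - \text{const} \;=\; d_{i^\ast} e^{-b^\ast x} + o\bigl(e^{-b^\ast x}\bigr), \qquad d_{i^\ast} \neq 0,
\end{align*}
which contradicts the assumption that $f$ is constant.

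For condition (b), I will use the two-sided limits. Because $\sigma(+\infty) = 1$, $\sigma(-\infty) = 0$, and $\sigma'(\pm \infty) = 0$, the product $(r_1 + \sigma(x))(r_2 + \sigma'(x))$ tends to $(r_1 + 1)r_2$ as $x \to +\infty$ and to $r_1 r_2$ as $x \to -\infty$. Continuity of the product forces these two limits to coincide if the product is a.e.\ constant, hence $r_2 = 0$. The product then collapses to $(r_1 + \sigma(x))\sigma'(x)$, whose limit at $\pm \infty$ is $0$, so the (a.e.\ constant) value must be $0$ everywhere. Since $\sigma'(x) > 0$ for all $x \in \RR$, this forces $\sigma \equiv -r_1$, contradicting non-constancy of $\sigma$.

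The main obstacle is the bookkeeping in part (a): I must ensure the three cases $b_i > 0$, $b_i < 0$, $b_i = 0$ all combine consistently into a single asymptotic expansion at $+\infty$, and that no other index produces a decay term of exactly rate $e^{-b^\ast x}$ that could cancel the leading coefficient. The assumption $|b_i| \neq |b_j|$ is exactly what guarantees uniqueness of the minimal exponent, while the existence of $k$ with $a_k b_k \neq 0$ ensures $I$ is nonempty so a genuine leading decay term exists. The tanh case will then follow immediately from the identity $\tanh(x) = 2\sigma(2x) - 1$, which reduces both (a) and (b) for $\tanh$ to the corresponding statements for $\sigma$ after an affine reparametrization.
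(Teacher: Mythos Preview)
Your argument is correct. For part~(a), your approach is essentially identical to the paper's: both isolate the term with minimal $|b_i|$ via the exponential tail asymptotics of $\sigma$ at $+\infty$; the only cosmetic difference is that the paper sets $c_i=0$ without loss of generality while you track the $c_i$'s explicitly through the coefficients $d_i = \pm a_i e^{\pm c_i}$.

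For part~(b), you take a genuinely different route. The paper expands $(r_1+\sigma)(r_2+\sigma')$ as a polynomial in $\sigma(x)$ and argues that the coefficients of each power of $\sigma$ must vanish. You instead compare the two limits at $\pm\infty$ to force $r_2=0$, then use strict positivity of $\sigma'$ to derive $\sigma\equiv -r_1$. Your argument is shorter and avoids any algebraic expansion; it also sidesteps a small slip in the paper's computation (the paper's displayed expansion drops the $\sigma\cdot\sigma'$ cross term, though the conclusion survives). The paper's polynomial approach, on the other hand, generalizes more mechanically to other activations like SiLU where the limiting values alone are less immediately decisive.
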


\begin{proof}
We first prove condition (a).
Without loss of generality, set $c_i=0$, as they will not affect the tail of the activation function. 
Define $\Omega = \{i|a_i \neq 0\}$, $A^+ = \{i \in \Omega|b_i > 0\}$ and $A^- = \{i \in \Omega|b_i < 0\}$. Let $i^* = \argmin_{i\in\Omega}|b_i|$. If $b_{i^*}= 0$, we can redefine $\Omega \leftarrow \Omega\backslash\{i^*\}$ and $f \leftarrow f - a_{i^*}/2$ and reenter this proof. Thus we assume $b_{i^*}\neq 0$ without loss of generality.

We have:
\begin{align*}
f(x) &= \sum_{i \in A^+}a_i \sigma(b_i x) + \sum_{i \in A^-}a_i \sigma(b_i x) = \sum_{i \in A^+}a_i - \sum_{i \in A^+}a_i [1 - \sigma(b_i x)] + \sum_{i \in A^-}a_i \sigma(b_i x).
\end{align*}

For $b_{i^*} < 0$, we have:
\begin{align*}
|f(x) - \sum_{i \in A^+} a_i| &= \bigg|\sum_{i \in A^+}a_i [1 - \sigma(b_i x)] - \sum_{i \in A^-}a_i \sigma(b_i x)\bigg| = |a_{i^*}|\sigma(b_{i^*}x) + O(\exp(-Bx)),
\end{align*}
where $B > |b_{i^*}|$. This dominant term cannot be cancelled unless $a_{i^*} = 0$.

For $b_{i^*} > 0$:
\begin{align*}
|f(x) - \sum_{i \in A^+} a_i| = \bigg|\sum_{i \in A^+}a_i [1 - \sigma(b_i x)] - \sum_{i \in A^-}a_i \sigma(b_i x)\bigg| = |a_{i^*}|[1 - \sigma(b_{i^*}x)] + O(\exp(-Bx)),
\end{align*}
where $B > |b_{i^*}|$. This shows $f(x)$ cannot be constant unless $a_{i^*}b_{i^*}=0$, contradicting our assumption.

For condition (b), we need to show $(r_1+\sigma(x))(r_2+\sigma'(x))$ is not a.e. constant. Note $\sigma'(x) = \sigma(x)(1-\sigma(x))$ has exponential decay as $|x| \to \infty$. A direct computation shows:
\begin{align}
(r_1+\sigma(x))(r_2+\sigma'(x)) &= r_1r_2 + r_2\sigma(x) + r_1 \sigma(x)(1-\sigma(x)) = r_1r_2 + (r_2+r_1)\sigma(x) - r_1 \sigma(x)^2 \label{eq:tailofsigmoid}
\end{align}
Consider the tail in \eqref{eq:tailofsigmoid}. If this expression were constant, then by examining the coefficients of different powers of $\sigma(x)$, we must have $r_1 = 0$ and $r_1+r_2 = 0$, which is impossible. Thus $(r_1+\sigma(x))(r_2+\sigma'(x))$ cannot be constant almost everywhere.
\end{proof}
\begin{remark}
Since $\tanh(x)$ is a linear transformation of Sigmoind function $\sigma$, it inherits the same exponential-tail property and similarly meets both (a) and (b).
\end{remark}

\subsection{SiLU and GeLU}
\begin{proposition}\label{prop:SiLUIsGood}
The SiLU function $\mathrm{SiLU}(x)=x\sigma(x)$ is \good{}.
\end{proposition}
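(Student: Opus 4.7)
The plan is to adapt the tail-matching strategy from Proposition~\ref{prop:SigmoidIsGood} to SiLU, but the argument is more delicate because SiLU is \emph{asymmetric}: as $y\to+\infty$ one has $\mathrm{SiLU}(y)=y-ye^{-y}+o(ye^{-y})$ (linear growth with an exponentially small correction), while as $y\to-\infty$ one has $\mathrm{SiLU}(y)=ye^{y}+o(ye^{y})$ (pure exponential decay modulated by a linear factor). Likewise $\mathrm{SiLU}'(y)\to 1$ at $+\infty$ and $\to 0$ at $-\infty$, both with exponential error. Every step below pivots on these expansions.

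For condition~(a), I would write $f(x)=\sum_i a_i\,\mathrm{SiLU}(b_ix+c_i)$, absorb indices with $b_i=0$ (which contribute constants) into an overall constant, and partition the remaining nontrivial indices into $A^+=\{i:a_i\neq 0,\,b_i>0\}$ and $A^-=\{i:a_i\neq 0,\,b_i<0\}$; by hypothesis $A^+\cup A^-$ is nonempty. Assuming $f\equiv C$, I would match the asymptotically linear pieces separately at each infinity; this forces
\begin{align*}
\sum_{i\in A^+}a_ib_i \;=\; 0 \;=\; \sum_{i\in A^-}a_ib_i,
\end{align*}
along with the corresponding intercept equations. Subtracting the linear contributions leaves only exponentially decaying tails on both sides. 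Setting $i^{*}=\arg\min_{i\in A^+\cup A^-}|b_i|$ (unique by the distinct-magnitude hypothesis), the slowest-decaying tail of $f(x)-C$ as $x\to+\infty$ has the form
\begin{align*}
\varepsilon\,a_{i^{*}}\bigl(b_{i^{*}}x+c_{i^{*}}\bigr)\exp\!\bigl(-|b_{i^{*}}|x+\text{const}\bigr),\qquad \varepsilon\in\{\pm 1\},
\end{align*}
with every other term decaying at a strictly faster exponential rate. Dividing through by $x\,\exp(-|b_{i^{*}}|x)$ and letting $x\to\infty$ then yields $a_{i^{*}}b_{i^{*}}=0$, contradicting $i^{*}\in A^+\cup A^-$.

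For condition~(b), I would examine the two-sided limits of $\bigl(r_1+\mathrm{SiLU}(x)\bigr)\bigl(r_2+\mathrm{SiLU}'(x)\bigr)$. As $x\to-\infty$ the product tends to $r_1r_2$, while as $x\to+\infty$ we have $\mathrm{SiLU}(x)\sim x\to\infty$ and $\mathrm{SiLU}'(x)\to 1$, so the product diverges unless $r_2=-1$. Hence for $r_2\neq -1$ it is unbounded at $+\infty$ but finite at $-\infty$, and cannot be a.e.\ constant. In the remaining case $r_2=-1$, matching the two limits forces $r_1=0$, reducing the claim to showing that $\mathrm{SiLU}(x)\bigl(\mathrm{SiLU}'(x)-1\bigr)$ is not a.e.\ zero. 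This follows because its zero set lies in $\{\mathrm{SiLU}(x)=0\}\cup\{\mathrm{SiLU}'(x)=1\}$; the first set equals $\{0\}$ since $\sigma>0$, and the second is discrete because $\mathrm{SiLU}'-1$ is real-analytic and not identically zero (indeed $\mathrm{SiLU}'(0)=\tfrac{1}{2}\neq 1$). Both sets therefore have Lebesgue measure zero.

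The main technical obstacle I expect lies in step~(a): unlike the pure-exponential Sigmoid tail, each SiLU correction carries an additional polynomial factor $b_ix+c_i$, so I have to verify carefully that this prefactor cannot upset the ordering of decay rates among indices with distinct $|b_i|$. The key enabling fact is that $x\,\exp\!\bigl((|b_{i^{*}}|-|b_j|)x\bigr)\to 0$ whenever $|b_j|>|b_{i^{*}}|$, which preserves the ``smallest-$|b|$ wins'' argument and uniquely isolates the product $a_{i^{*}}b_{i^{*}}$ as the obstruction to constancy.
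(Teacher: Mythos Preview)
Your proposal is correct. Part~(a) is essentially the paper's argument: both isolate the index $i^{*}$ of smallest $|b_i|$, subtract off the asymptotically linear piece $\sum_{i\in A^+}a_ib_i\,x$, and read off $a_{i^{*}}b_{i^{*}}=0$ from the dominant $\Theta(x\,e^{-|b_{i^{*}}|x})$ correction. You keep the shifts $c_i$ explicit, whereas the paper absorbs them by the same ``tail-unchanged'' reasoning used for sigmoid; the underlying mechanism is identical.

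Part~(b) is where you genuinely diverge. The paper expands $(r_1+x\sigma(x))(r_2+\sigma(x)+x\sigma'(x))$ algebraically and argues that the term $x^2\sigma(x)^2(1-\sigma(x))$ obstructs constancy. Your route is cleaner and more self-contained: comparing the limits at $\pm\infty$ immediately forces $r_2=-1$ (else the product is unbounded) and then $r_1=0$ (since the $+\infty$ limit is $0$ while the $-\infty$ limit is $-r_1$), after which the real-analyticity of $\mathrm{SiLU}'-1$ disposes of the residual case $\mathrm{SiLU}(x)(\mathrm{SiLU}'(x)-1)\equiv 0$ a.e. This avoids having to interpret ``coefficient of $x^2$'' when $\sigma(x)$ itself depends on $x$, and it transparently uses only continuity plus analyticity. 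Either argument works, but yours is easier to make fully rigorous.
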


\begin{proof}
Define $\Omega = \{i|a_i \neq 0\}$, $A^+ = \{i \in \Omega|b_i > 0\}$ and $A^- = \{i \in \Omega|b_i < 0\}$. Let $i^* = \argmin_{i\in\Omega}|b_i|$. Using similar reasoning as in the sigmoid case, we assume $b_{i^*}\neq 0$ without loss of generality.

We have:
\begin{align*}
f(x) &= \sum_{i \in A^+}a_i \phi(b_i x) + \sum_{i \in A^-}a_i \phi(b_i x) = \sum_{i \in A^+}a_i b_i x - \sum_{i \in A^+}a_i [b_i x - \phi(b_i x)] + \sum_{i \in A^-}a_i \phi(b_i x).
\end{align*}

For $b_{i^*} < 0$, we have:
\begin{align*}
|f(x) - \sum_{i \in A^+} a_i b_i x| &= \bigg|\sum_{i \in A^+}a_i [b_i x - \phi(b_i x)] - \sum_{i \in A^-}a_i \phi(b_i x)\bigg| = \underbrace{|a_{i^*}|\phi(b_{i^*}x)}_{\mathrm{Majortail}} + O(x\exp(-Bx)),
\end{align*}
where $B > |b_{i^*}|$.

For $b_{i^*} > 0$, we have:
\begin{align*}
|f(x) - \sum_{i \in A^+} a_i b_i x| &= \bigg|\sum_{i \in A^+}a_i [b_i x - \phi(b_i x)] - \sum_{i \in A^-}a_i \phi(b_i x)\bigg| = \underbrace{|a_{i^*}|[b_{i^*} x - \phi(b_{i^*}x)]}_{\mathrm{Majortail}} + O(x\exp(-Bx)),
\end{align*}
where $B > |b_{i^*}|$.
Note that $\mathrm{Majortail}$ is bounded by some constant and asymptotically $\mathrm{Majortail} = \Theta(x\exp(-|b_{i^*}|x))$. Therefore, $f(x)$ is constant only if $\sum_{i\in A^+}a_i b_i = 0$ and $a_{i^*}b_{i^*} = 0$, which contradicts our assumption.

For condition (b), we need to show $(r_1 + x\sigma(x))(r_2 + \sigma(x) + x\sigma'(x))$ is not a.e. constant. Note that:
\begin{align}
\phi'(x) &= \sigma(x) + x\sigma'(x)  = \sigma(x) + x\sigma(x)(1-\sigma(x)) \label{eq:siluderiv}
\end{align}
Then we have:
\begin{align}
&(r_1 + x\sigma(x))(r_2 + \sigma(x) + x\sigma(x)(1-\sigma(x))) \notag\\
&= r_1r_2 + r_1\sigma(x) + r_1x\sigma(x)(1-\sigma(x)) \notag\\
&\quad + r_2x\sigma(x) + x\sigma(x)^2 + x^2\sigma(x)^2(1-\sigma(x)) \label{eq:tailofsilu}
\end{align}
Consider the tail in \eqref{eq:tailofsilu}. If this expression were constant, the coefficient of $x^2$ term must vanish, which requires $\sigma(x)^2(1-\sigma(x)) \equiv 0$. However, this is impossible as $\sigma(x) \in (0,1)$ for all $x$. Thus this product cannot be constant almost everywhere.
\end{proof}

\begin{remark}
\label{rmk:exptail}
\textbf{GeLU}, defined by $x\Phi(x)$ where $\Phi$ is the Gaussian CDF, similarly satisfies (a) and (b) because of its strong exponential decay. Specifically, as $|x| \to \infty$, GeLU and its derivatives exhibit Gaussian-like decay $O(e^{-x^2/2})$, which is even stronger than the exponential decay of sigmoid and SiLU.
\end{remark}

\paragraph{Conclusion.}
We have shown that key exponential-tail activations ($\sigma$, $\tanh$, SiLU, GeLU) fulfill both (a) and (b) in Definition~\ref{def:good-restated}, and hence are \good{}. These results rely crucially on the exponential decay properties of these functions, which ensure that scaled copies cannot combine to yield constant functions. This ensures rich, non-degenerate behavior in our infinite-width analysis under $\mu$P scaling.

\end{document}